\definecolor{DarkGreen}{rgb}{0.1,0.5,0.1}
\DeclareMathOperator*{\argmin}{arg\,min}
\newcommand{\D}{\mathcal{D}}
\newcommand{\cL}{\mathcal{L}}
\newcommand{\eps}{\epsilon}
\newcommand{\f}{\bm{f}}
\newcommand{\E}{\mathbb{E}}
\newcommand{\R}{\mathbb{R}}
\newtheorem{definition}{Definition}[section]
\newtheorem{theorem}{Theorem}[section]
\newtheorem{lemma}[theorem]{Lemma}
\newtheorem{remark}[theorem]{Remark}
\newtheorem{corollary}[theorem]{Corollary}
\newtheorem{assumption}[theorem]{Assumption}
\begin{document}
  
\title{Differentially Private Synthetic Control}

 \renewcommand{\thefootnote}{\fnsymbol{footnote}}

\author{Saeyoung Rho\footnotemark[1] \and 
Rachel Cummings\footnotemark[1] \and 
Vishal Misra\footnotemark[1] }

\footnotetext[1]{Columbia University. Emails: \texttt{\{s.rho, rac2239, vishal.misra\}@columbia.edu}}

 \renewcommand{\thefootnote}{\arabic{footnote}}

\maketitle

\begin{abstract}
Synthetic control is a causal inference tool used to estimate the treatment effects of an intervention by creating synthetic counterfactual data.
This approach combines measurements from other similar observations (i.e., \emph{donor pool}) to predict a counterfactual time series of interest (i.e., \emph{target unit}) by analyzing the relationship between the target and the donor pool before the intervention.
As synthetic control tools are increasingly applied to sensitive or proprietary data, formal privacy protections are often required.
In this work, we provide the first algorithms for differentially private synthetic control with explicit error bounds. Our approach builds upon tools from non-private synthetic control and differentially private empirical risk minimization.
We provide upper and lower bounds on the sensitivity of the synthetic control query and provide explicit error bounds on the accuracy of our private synthetic control algorithms. We show that our algorithms produce accurate predictions for the target unit, and that the cost of privacy is small.
 Finally, we empirically evaluate the performance of our algorithm, and show favorable performance in a variety of parameter regimes, as well as providing guidance to practitioners for hyperparameter tuning.

\end{abstract}

\section{Introduction}\label{s.intro}

The fundamental problem of causal inference \cite{rubin1974} is that for an individual unit, we can only observe one of the relevant outcomes -- with a particular treatment or without. To estimate the (causal) effect of a treatment, one has to produce a counterfactual of the control arm, which is typically done at a population- and distributional-level via randomized control trials (RCTs) and A/B testing, yielding average treatment effects. However, controlled trials are often impossible to implement, and only observational data are available. 
Synthetic control is a powerful causal inference tool to estimate the treatment effect of interventions using only observational data. It has been used both at an aggregate population level (e.g., countries/cities/cohorts of patients etc.), as well as at an individual unit level \cite{mrsc,R4.5}, and has been called ``arguably the most important innovation in the policy evaluation literature in the last 15 years'' \cite{athey2017state}. 

Recently, synthetic control has increasingly been used in clinical trials where running a randomized control trial presents logistical challenges (e.g., rare diseases), or ethical issues (e.g., oncology trials enrolling patients for placebos with life threatening diseases) \cite{synthetic_control_arms}. Synthetic control has been successfully used to achieve regulatory approval for new medical treatments for lung cancer \cite{petrone2018roche} and rare forms of leukemia \cite{gokbuget2016blinatumomab}, where RCTs would otherwise have been impossible.
Since these synthetic control analyses are deployed in real-world medical applications, preserving privacy of sensitive patient data is paramount.
    
Differential privacy \cite{DMNS06} has emerged as the de facto gold-standard in privacy-preserving data analysis. It is a mathematically rigorous parameterized privacy notion, which bounds the maximum amount that can be learned about any data donor based on analysis of her data. Differentially private algorithms have been designed for a wide variety of optimization, learning, and data-driven decision-making tasks, and have been deployed in practice by several major technology companies and government agencies. Despite the growing maturity of the differential privacy toolkit, the pressing need for a private synthetic control solution has thus far gone unaddressed.

Our work provides the first differentially private algorithms for synthetic control with provable accuracy and privacy guarantees.

\subsection{Our Contributions}\label{s.contributions}

Our main contributions are the first algorithms for differentially private synthetic control (Algorithm \ref{alg.output} and \ref{alg.obj}). These two algorithms naturally extend existing non-private techniques for synthetic control by first privately estimating the regression coefficients $\bm{\hat{f}}$ that relate a (target) pre-intervention observation of interest $\bm{y_{pre}}$ to other similar (donor) observations $X_{pre}$. This is done using output perturbation and objective perturbation techniques for differentially private empirical risk minimization (ERM) from \cite{CMS11, KST12}. The algorithm then combines the private regression coefficients $\bm{\hat{f}}$ with privatized post-intervention donor observations $\tilde{X}_{post}$ (also via output perturbation) to predict the post-intervention target outcome $\bm{\hat{y}_{post}} = \tilde{X}_{post}^{\top}\bm{\hat{f}}$.

Our main results are privacy and accuracy guarantees for each algorithm. For privacy (Theorems \ref{thm.priv.out} and \ref{thm.priv.obj}), although our algorithmic techniques rely on existing approaches, prior results on privacy do not apply in our setting. DP methods add noise that scales with the \emph{sensitivity} of the function being computed, which is defined as the maximum change in the function's output that can be caused by changing a single donor's data. However, synthetic control performs a regression in a vertical way, treating each time point, rather than each donor's data, as one sample -- thus, the \emph{transposed} setting changes the definition of neighboring databases, completely altering the impact of a single donor's data. The majority of our privacy analysis is devoted to computing sensitivity of this new method.

We also provide accuracy guarantees for each algorithm (Theorems \ref{thm.acc.out} and \ref{thm.acc.obj}), bounding the root mean squared error (RMSE) of the algorithm's output compared to the post-intervention target signal. Our bounds are comparable to those for non-private synthetic control (e.g., \cite{rsc}), and in Section \ref{s.privacycost}, we explicitly show that the cost of privacy in synthetic control is $O(1/\eps)$. That is, the RMSE of our algorithm relative to a non-private version is only greater by a factor of $O(1/\eps)$, which is unavoidable in most analysis tasks. To better interpret our bounds in terms of natural problem parameters such as number of samples and length of observations, we also provide Corollaries \ref{cor.accuracy} and \ref{cor.accuracy.obj}, which give explicit closed-form upper bounds on the RMSE under mild assumptions on the underlying data distribution.

\subsection{Related Work}\label{s.relwork}

\paragraph{Synthetic Control.}
Synthetic control (SC) was originally proposed to evaluate the effects of intervention by creating synthetic counterfactual data. Its first application was measuring the economic impact of the 1960s terrorist conflict in Basque Country, Spain by combining GDP data from other Spanish regions prior to the conflict to construct a \textit{synthetic} GDP dataset for Basque Country in the counterfactual world without the conflict \cite{abadie2003economic}.
Synthetic control has since been applied to a wide array of topics such as estimating the effect of California's tobacco control program \cite{abadie2010synthetic}, estimating the effect of the 1990 German reunification on per capita GDP in West Germany \cite{abadie2015comparative}, evaluating health policies \cite{kreif2016examination}, forecasting weekly sales at Walmart stores \cite{mrsc}, and predicting cricket score trajectories \cite{mrsc}.

The core algorithm of synthetic control lies in finding a relationship between the target time series (e.g., GDP of Basque Country) and the donor pool (e.g., GDP of other Spanish regions). The original method of \cite{abadie2003economic} used linear regression with a simplex constraint on the weights: the regression coefficients should be non-negative and sum to one.
Since its first introduction, the synthetic control literature has evolved to include a richer set of techniques, including tools to deal with multiple treated units \cite{abadie2021penalized, dube2015pooling}, to correct bias \cite{abadie2021penalized,ben2021augmented}, to use Lasso and Ridge regression instead of linear regression with simplex constraints \cite{doudchenko2016balancing, rsc}, and to incorporate matrix completion techniques \cite{athey2021matrix, rsc, mrsc}. See \cite{abadie2021using} for a detailed survey of these techniques.

The most relevant extension for our work is \emph{robust synthetic control} (RSC) \cite{rsc}, which comprises of two steps: first de-noising the data via hard singular value thresholding (HSVT), and then learning and projecting via Ridge regression. It assumes a latent variable model and applies HSVT before running the regression, which reduces the rank of the data and makes synthetic control more robust to missing and noisy data. RSC also relaxes the simplex constraints on the regression coefficients and applies unconstrained Ridge regression. Because of the de-noising step, robust synthetic control can be viewed as an instantiation of principal component regression (PCR), and the possibility of differentially private PCR has been briefly discussed 
\cite{agarwal2021robustness}. However, no formal algorithm or analysis has been put forth. We are the first to design and analyze differentially private algorithms for synthetic control.

\paragraph {Differentially Private Empirical Risk Minimization.} \cite{CMS11} first proposed methods for differentially private empirical risk minimization (ERM) for supervised regression and classification.
Our first algorithm uses the \emph{output perturbation} method from \cite{CMS11}, which first computes coefficients to minimize the loss function between data features and labels, and then perturbs the coefficients using a high-dimensional variant of the Laplace Mechanism from \cite{DMNS06}. Our second algorithm uses the \emph{objective perturbation} method  \cite{CMS11,KST12}, which adds noise directly to the loss function and then exactly optimizes the noisy loss.  This method tends to provide better theoretical accuracy guarantees, but requires the loss function to satisfy additional structural properties. These methods were later extended by \cite{bassily2014private} to include \emph{gradient perturbation} in stochastic gradient descent, which uses a noisy version of randomly sampled points' contribution to the gradient at each update. This technique provides tighter error bounds, assuming Lipschitz convex loss and bounded optimization domain. \cite{wang2017differentially} followed up with a faster gradient perturbation algorithm that provided a tighter upper bound on error and lower gradient complexity.

Although the framework of \cite{CMS11} is more general, the analysis and applications focused only on methods for binary classification. The analysis was later extended to include ridge regression in \cite{truthful}, which we use in our algorithms. Our algorithms for differentially private synthetic control apply differentially private ERM methods to a ridge regression loss function. However, synthetic control applies regression in the transposed dimension of the data (i.e., along columns rather than rows of the database), while privacy protections are still required along the rows, which requires novel analysis to ensure differential privacy and accuracy.

\section{Model and Preliminaries}\label{s.prelims}

In this section, we first present our synthetic control model (Section \ref{s.model}), and then provide relevant background on synthetic control (Section \ref{s.synthcont}) and differential privacy (Section \ref{s.dp}).

\subsection{Our Model}\label{s.model}

Our model follows the synthetic control framework illustrated in Figure \ref{fig.synthcon}. We consider a database $X \in \mathbb{R}^{n \times T}$, also called the \emph{donor pool}.  The donor pool $X$ consists of $n$ time series, each observed at times $t=1,\ldots,T$. We denote the column vectors of $X$ as $\bm{{x_1}}, \cdots \bm{{x_{T}}} \in \mathbb{R}^{n}$, where each $\bm{x_t}$ contains observations from all donor time series at time $t$. We assume an intervention occurred at a known time $T_0+1 < T$.  The first $T_0$ columns of $X$ are collectively referred to as $X_{pre}$, and the remaining $T-T_0$ columns from data after the intervention are collectively denoted $X_{post}$, respectively corresponding to the pre- and post-intervention donor data. We are also given a \emph{target unit} $\bm{y} \in \mathbb{R}^T$, which can be divided as $\bm{y_{pre}}=(y_1, \ldots, y_{T_0})$ and $\bm{y_{post}} = (y_{T_0+1},\ldots, y_T)$. 

\begin{figure}[h]
\centering
\includegraphics[width=0.5\textwidth]{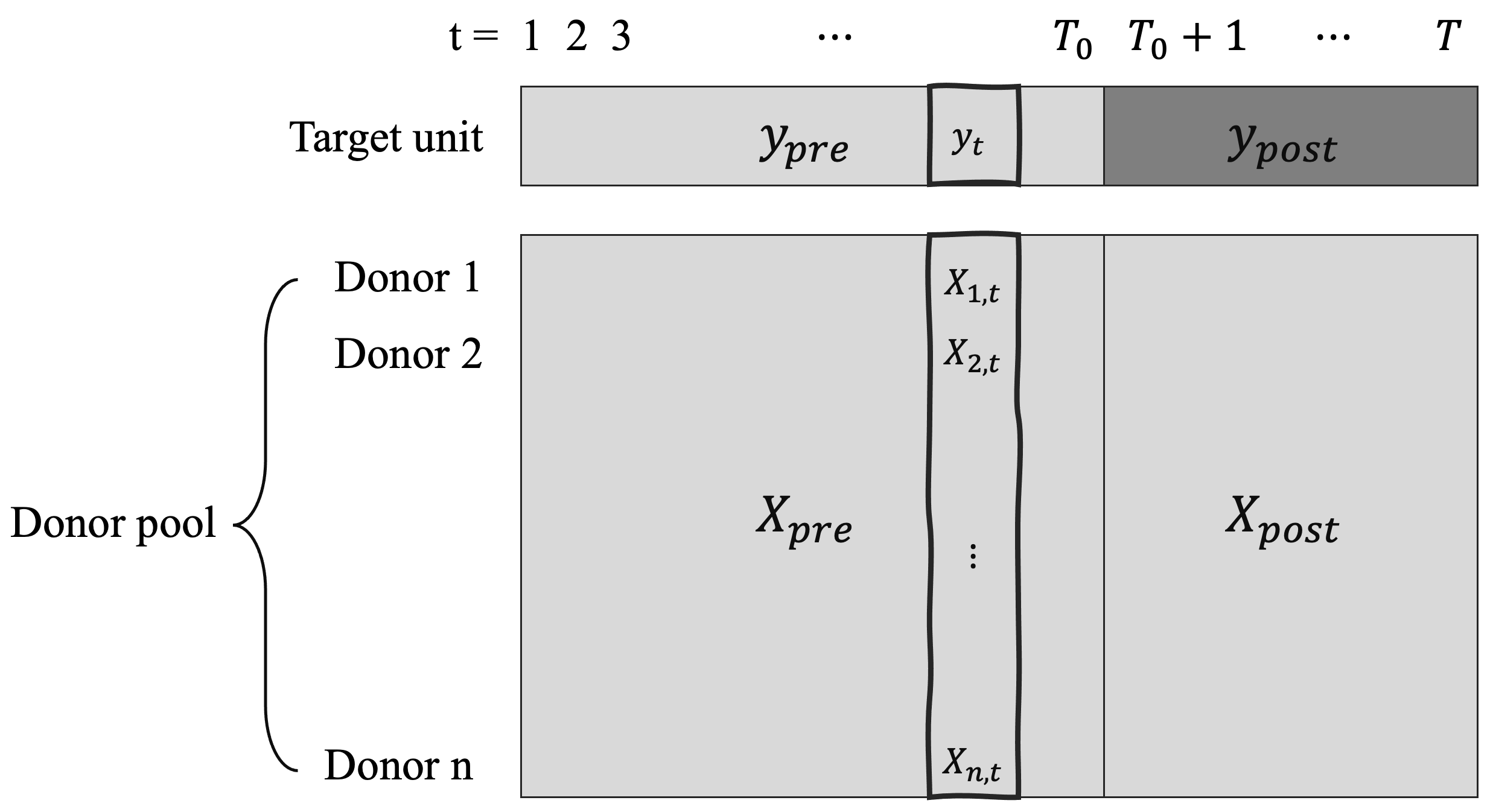}
\caption{General data structure for synthetic control. The donor pool ($X$) and the target unit ($\bm{y}$) are divided into pre- and post-intervention periods. Synthetic control first performs a \emph{vertical} regression using the pre-intervention column vectors $\bm{x_t}$ as features for the label $y_t$ for $t\in[T_0]$ to estimate regression coefficients $\hat{\f}$, and then uses this to project the post-intervention column vectors and predict $\bm{\hat{y}_{post}}$.
}\label{fig.synthcon}
\end{figure}

The underlying assumption is that time series in the donor pool that behave similarly to $\bm{y}$ before the intervention will remain similar after the intervention.
In this paper, we use the latent variable model of \cite{rsc} for the underlying distribution of the data. Our donor data and target data are noisy versions of the true signal (denoted $M$ and $\bm{m}$ respectively), and can be written as follows:
\begin{equation}\label{eq.noise}
X = M + Z, \qquad \bm{y} = \bm{m} + \bm{z},
\end{equation}
where $Z \in \mathbb{R}^{n \times T}$ is a noise matrix where each element is sampled i.i.d.~from some distribution with zero-mean, $\sigma^2$-variance, and support $[-s,s]$, and $\bm{z} \in \mathbb{R}^{T}$ is a noise vector with elements sampled from the same distribution.

The signals $M$ and $\bm{m}$ can be expressed in terms of a latent function $g$:
\[
M_{i,t} = g(\theta_i, \rho_t) \; \forall i \in [n], t \in [T], \qquad
m_{t} = g(\theta_0, \rho_t) \; , \forall t \in [T],
\]
where $\theta_i$ and $\rho_t$ are latent feature vectors capturing unit $i$'s and time $t$'s intrinsic characteristics, respectively. We note that if the intervention is effective, one would expect to see a change in $\rho_t$ before and after $T_0$. We make no assumptions on the latent function $g$, except in Sections \ref{s.accdist} and \ref{s.objclosedform}, where we assume $M$ is low rank, i.e., $rank(M) = k$ for some $k \ll \min\{n,T\}$.

Finally, we assume a linear relationship between the features of $M_{i,t}$ and the label $m_t$ at all times $t\in[T_0]$; that is, there exists an $\f \in \mathbb{R}^n$ such that,
\begin{equation}\label{eq.model}
 m_t = \sum_{i=1}^n M_{i,t} f_i, \quad \text{for all } t \in [T_0]. \end{equation}
We assume that all entries of $X$, $M$, $\bm{y}$, and $\bm{m}$ lie in a bounded range, which we rescale to $[-1,1]$ WLOG, and that $\f$ has $\ell_1$-norm bounded by 1, as is standard in the synthetic control literature \cite{abadie2003economic}. Formally, we assume:
\begin{equation}
\label{eq.assump}
     |x_{i,t}| \leq 1, |M_{i,t}| \leq 1 \; \forall t \in[T], \; i\in [n], \quad |y_t|\leq 1, |m_t|\leq 1 \; \forall t\in [T], \quad and \quad ||\f||_1 = \sum_{k=1}^{n} |f_k| \leq 1. 
\end{equation}

\subsection{Synthetic Control}\label{s.synthcont}

The goal of synthetic control is to predict $\bm{y_{post}}$ given $X$ and $\bm{y_{pre}}$.  The general approach, outlined in Algorithm \ref{alg.synth}, is to first use the pre-intervention data $\mathcal{D}_1 := (X_{pre},\bm{y_{pre}})$ to learn an estimate $\bm{\hat{f}}$ of the true coefficient vector $\f$. For each $t\in[T_0]$, the column vector $\bm{x_t} = (X_{1,t}, \cdots, X_{n,t})^{\top}$ is treated as a feature vector for label $y_t$. This setup distinguishes synthetic control from the classic regression setting, as the regression is performed vertically rather than horizontally. The estimate $\bm{\hat{f}}$ is then used along with the post-intervention donor data to predict the counterfactual outcome of the target: $\bm{\hat{y}_{post}} = X^{\top}_{post} \bm{\hat{f}}$, where $\hat{y}_{t} = \bm{x_t}^{\top}\bm{\hat{f}}$  $\forall t \in \{T_0+1, \cdots, T\}$.

\begin{algorithm}
\caption{Synthetic control framework ($X, \bm{y_{pre}}, T_0,J$)}\label{alg.synth}
\begin{algorithmic}
\State Divide $X$ into pre- and post-intervention observations
\begin{equation*}
    X = (X_{pre}, X_{post}) = \left(
    \left( \begin{matrix}
x_{1,1} & \cdots & x_{1,T_0}\\
\vdots & \ddots & \vdots \\
x_{n,1} & \cdots & x_{n,T_0}\\
\end{matrix} \right)
\left( \begin{matrix}
x_{1,T_0+1} & \cdots & x_{1,T}\\
\vdots & \ddots & \vdots \\
x_{n,T_0+1} & \cdots & x_{n,T}\\
\end{matrix} \right)
\right)
\end{equation*}
\State \textbf{Step 1: Learn regression coefficients}
\[\bm{\hat{f}} = \argmin_{\f\in \mathbb{R}^{n}} J(\f; X_{pre}, \bm{y_{pre}})\]
\State \textbf{Step 2: Predict $\bm{y_{post}}$ via projection}
\State Output $\bm{\hat{y}_{post}} = {X}_{post}^{\top} \hat{\f} \in \mathbb{R}^{T-T_0}$
\end{algorithmic}
\end{algorithm}

When synthetic control is used for evaluating treatment effects, it is assumed that the target received a different treatment from the donor pool, and the goal is to predict the counterfactual outcome under the alternative treatment. In this case, the treatment effect is evaluated as the difference between the observed $\bm{y_{post}}$ and the counterfactual prediction $\bm{\hat{y}_{post}}$. Even if no intervention occurred at time $T_0+1$ (or if the target received the same treatment as the donor pool), then synthetic control can also be used to predict future observations of the target time series. In that case, accuracy can be measured as the difference between the actual observation $\bm{y_{post}}$ and the predicted $\bm{\hat{y}_{post}}$.
While the former task is the more common use-case for synthetic control, we focus our attentions in this work on the latter, in order to cleanly evaluate accuracy of our algorithms' predictions without confounding treatment effects.

The original synthetic control work of \cite{abadie2003economic} learned regression coefficients using ordinary linear regression with a simplex constraint on $\bm{\hat{f}}$, i.e., $\widehat{f}_i\geq 0 \; \forall i\in[n]$ and $\sum_{i\in[n]} \widehat{f}_i =1$. Later works such as \cite{rsc, mrsc, doudchenko2016balancing, ben2021augmented} used penalties such as LASSO, Ridge, and elastic net regularizers.

In this work, we use \emph{Ridge regression} (with empirical loss $\cL(\f; X, y)=\frac{1}{T_0}||y -X^\top \f||_2^2$ and an $\ell_2$ regularizer $r(\f) = \frac{\lambda}{2T_0} || \f ||_2^2$) to estimate $\hat{\f}$, which corresponds to the following regularized quadratic loss function:
\begin{equation}\label{eq.scloss}
\begin{split}
    J(\f;\D) & = \cL(\f; X_{pre}, \bm{y_{pre}}) + r(\f) = \frac{1}{T_0} \sum_{t=1}^{T_0} (y_t - \bm{{x_t}}^\top \f )^2 + \frac{\lambda}{2T_0} || \f ||_2^2.
\end{split}
\end{equation}

\emph{Robust synthetic control} (RSC) extends this framework to include a data pre-processing step to denoise $X$ using hard singular value thresholding (HSVT) \cite{rsc}. RSC first performs singular value decomposition on $X$, and truncates all singular values below a given threshold to be 0. This serves to reduce the rank of $X$ before the learning and prediction steps are performed.

\subsection{Differential Privacy}\label{s.dp}

Differential privacy \cite{DMNS06} ensures that changing a single user's data will have only a bounded effect on the outcome of an algorithm. Specifically, it ensures that the distribution of an algorithm's output will be similar under two \emph{neighboring databases} that differ only in a single data record.
In the synthetic control setting, where the analysis goal is to predict the post-intervention target unit $\bm{y_{post}}$ using the donor pool $X$ and its relationship to $\bm{y_{pre}}$, we aim to predict privacy of data records in $X$ but not $\bm{y_{pre}}$, since the target will know their own pre-intervention data.
Note that this is similar to the notion of \emph{joint differential privacy} \cite{KPRU14}, where personalized outputs to each user need not be private with respect to their own data, only to the data of others.
Thus databases $D=(X,\bm{y})$ and $D'=(X',\bm{y})$ are considered neighboring in our setting if $X$ and $X'$ differ in at most one row and have the same target unit $\bm{y}$.

\begin{definition}[Differential privacy \cite{DMNS06}]
A randomized algorithm $\mathcal{M}$ with domain $\mathcal{D}$ is $(\eps, \delta)$-differentially private for all $\mathcal{S} \subseteq \mbox{Range}(\mathcal{M})$ and for all pairs of neighboring databases $D,D' \in \mathcal{D}$,
\[
\Pr[\mathcal{M}(D) \in \mathcal{S} ] \leq \exp (\eps)
\Pr[\mathcal{M}(D') \in \mathcal{S} ] + \delta,
\]
where the probability space is over the internal randomness of the mechanism $\mathcal{M}$. If $\delta =0$, we say $\mathcal{M}$ is $\eps$-differentially private.
\end{definition}

\begin{definition}[$\ell_2$ sensitivity]
The \emph{$\ell_2$ sensitivity} of a vector-valued function $\f$, denoted $\Delta \f$, is the maximum $\ell_2$-norm change in the function's value between neighboring databases:
\[
\Delta \f =  \max_{D,D' \text{ neighbors}} || \f(D) - \f(D') ||_2.
\]
\end{definition}

A common method for achieving $\eps$-differential privacy for vector-valued functions is the \emph{high-dimensional Laplace Mechanism} \cite{CMS11}, which privately evaluates a function $\f$ on a dataset $D$ by first evaluating $\f(D)$ and then adding a Laplace noise vector $\bm{v}$ sampled according to density 
$p(\bm{v};a) \propto \exp \left( -\frac{ ||\bm{v}||_2}{a} \right)$, with parameter $a = \frac{\Delta \f}{\eps}$.
Note that this an extension of the (single-dimensional) Laplace Mechanism \cite{DMNS06} which would add Laplace noise with parameter $\Delta f/\eps$ to achieve $\eps$-DP for real-valued queries. Alternatively, one can add Gaussian noise of mean 0 and standard deviation at least $\sqrt{2 \ln(1.25/\delta)}\Delta f/\epsilon$ to achieve $(\epsilon,\delta)$-DP.

Differential privacy is robust to \emph{post-processing}, meaning that any downstream computation performed on the output of a differentially private algorithm will retain the same privacy guarantee. DP also \emph{composes}, meaning that if an $(\eps_1,\delta_1)$-DP mechanism and an $(\eps_2,\delta_2)$-DP mechanism are performed on the same database, then the entire process is $(\eps_1+\eps_2,\delta_1+\delta_2)$-DP.

\section{Differentially Private Synthetic Control (DPSC) Algorithms}\label{s.algo}

In this section, we present two algorithms for differentially private synthetic control, $DPSC_{out}$ (Algorithm \ref{alg.output}) and $DPSC_{obj}$ (Algorithm \ref{alg.obj}). 
Similar to non-private synthetic control algorithms (e.g., Algorithm \ref{alg.synth}), both algorithms are divided into two high-level steps: first the algorithm learns an estimate of the regression coefficients $\f$, and then it uses these coefficients to predict the post-intervention target unit $\bm{y_{post}}$. To ensure differential privacy of the overall algorithm, both of these steps must be performed privately.
The second step remains the same for both, and only the first part differs: $DPSC_{out}$ adds privacy noise directly to the output of the algorithm (output perturbation), whereas $DPSC_{obj}$ perturbs the objective function and minimizes the noisy loss function (objective perturbation). In the following subsections, we present and explain both algorithms.

\subsection{DPSC via Output Perturbation $DPSC_{out}$}
\label{s.algo.output}

Our first algorithm is $DPSC_{out}$ (Algorithm \ref{alg.output}), which utilizes \emph{output perturbation} to achieve differential privacy.
The learning step of this algorithm formalizes synthetic control as an instance of empirical risk minimization with the Ridge regression loss function given in Equation \eqref{eq.scloss}. This enables us to apply the Output Perturbation method of \cite{CMS11} for differentially private ERM, instantiated as the high-dimensional Laplace Mechanism.
The algorithm first learns the optimal (empirical risk minimizing) non-private regression coefficients $\f^{reg}$ as in Algorithm \ref{alg.synth}. It then samples a noise vector $\bm{v}$ from a high-dimensional Laplace distribution with parameter $\Delta \f^{reg}/\eps_1$, as described in Section \ref{s.dp}. Finally, the privatized regression coefficient vector is $\f^{out} = \f^{reg} + \bm{v}$.

The prediction step uses this coefficient vector to predict $\bm{y_{post}}$. A simple approach would be to directly predict $\bm{\hat{y}_{post}} = X_{post}^{\top}\f^{out}$; however, this approach would not provide privacy for the post-intervention donor data $X_{post}$. Instead, we again apply the high-dimensional Laplace Mechanism to privatize $X_{post}$ by adding a noise matrix $W$ sampled from a high-dimensional Laplace distribution with parameter $\Delta X_{post}/\eps_2$. The  privatized version of donor data is $\tilde{X}_{post}=X_{post}+W$, which is then used along with the privatized regression coefficients to produce the private prediction of the post-intervention target unit: $\bm{y}^{out} = \tilde{X}_{post}^{\top}\f^{out}$.

\begin{algorithm}[tbh]
\caption{DPSC via Output Perturbation $DPSC_{out}(X_{pre}, X_{post}, \bm{y_{pre}}, n, T, T_0, \lambda, \eps_1, \eps_2$)}\label{alg.output}
\begin{algorithmic}
\State \textbf{Step 1: Learn regression coefficients} 
\State Learn the regression coefficient $\f^{reg}$ using Ridge regression with parameter $\lambda \geq 0$:
\[\f^{reg} = \argmin_{\f\in \mathbb{R}^{n}} \frac{1}{T_0} ||\bm{y_{pre}} - {X}_{pre}^{\top} \f||_2^2 + \frac{\lambda}{2T_0} ||\f||_2^2.\]
\State Let $a=\frac{\Delta \f^{reg} }{\eps_1} = \frac{4T_0\sqrt{8 + n}}{ \lambda \eps_1}$
\State Sample $\bm{v}\in \mathbb{R}^{n}$ according to pdf $p(\bm{v};a) \propto \exp \left( -\frac{ ||\bm{v}||_2}{a} \right)$
\State Let $\f^{out} = \f^{reg} + \bm{v}$
\vspace{3mm}
\State \textbf{Step 2: Predict $\bm{y_{post}}$ via projection}
\State Let $b = \frac{2\sqrt{T-T_0}}{\eps_2}$
\State Sample each entry of $W \in \mathbb{R}^{n\times (T-T_0)}$ i.i.d.~according to pdf $p(W;b) \propto \exp \left( -\frac{||W||_F}{b} \right)$
\State Let $\Tilde{X}_{post} = X_{post} + W $
\State Output $\bm{y}^{out} = \Tilde{X}_{post}^{\top} \f^{out}.$
\end{algorithmic}
\end{algorithm}

The entire algorithm is then $(\eps_1 + \eps_2,0)$-differentially private by composition of these two steps. 
We remark that the algorithm does not output $\f^{out}$, simply because this vector is typically not of interest in most synthetic control problems, and is instead considered only an intermediate analysis step. However, this vector could be output if desired with no additional privacy loss because Step 1 of the $DPSC_{out}$ algorithm is $\eps_1$-differentially private (Theorem \ref{step1.privacy}), and this privacy loss is already accounted for in the composition step.

We provide two main results on the privacy and accuracy of  $DPSC_{out}$. First, Theorem \ref{thm.priv.out} shows that our algorithm is differentially private. Although our algorithm relies on algorithmic techniques from \cite{CMS11} for differentially private ERM, the vertical regression setup in synthetic control requires novel sensitivity analysis for $\f^{reg}$, which constitutes the bulk of the work required to prove Theorem \ref{thm.priv.out}.
Theorem \ref{thm.acc.out} shows that our $DPSC_{out}$ algorithm produces an accurate prediction of the post-intervention target unit, as measured by the standard metric (e.g., \cite{rsc}) of root mean squared error (RMSE) with respect to the true signal vector $\bm{m}$. In Section \ref{s.accdist}, we also extend Theorem \ref{thm.acc.out} to to remove the dependence on distributional parameters and provide an expression of RMSE that depends only on the input parameters, under some mild additional assumptions on the distribution of data.
Full proofs for Theorems \ref{thm.priv.out} and \ref{thm.acc.out} are respectively presented in Sections \ref{s.privacy} and \ref{s.accuracy}. 

\begin{theorem}\label{thm.priv.out}
$DPSC_{out}$ of Algorithm \ref{alg.output} is $(\eps_1+\eps_2,0$)-differentially private.
\end{theorem}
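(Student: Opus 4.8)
The plan is to decompose the privacy analysis of Algorithm~\ref{alg.output} according to its two steps and then recombine them by composition and post-processing. The key structural observation is that neighboring databases $D=(X,\bm{y})$ and $D'=(X',\bm{y})$ differ in a single row of $X$, and since a row spans all $T$ time points, this one change perturbs \emph{both} $X_{pre}$ (used to produce $\f^{out}$ in Step~1) and $X_{post}$ (privatized in Step~2) at once. I will therefore view Step~1 and Step~2 as two independent differentially private mechanisms run on the same database, releasing the intermediate objects $\f^{out}$ and $\tilde X_{post}$, and observe that the actual output $\bm{y}^{out}=\tilde X_{post}^{\top}\f^{out}$ is a deterministic function of this pair that makes no further access to the raw data.

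First I would argue that Step~1, which releases $\f^{out}=\f^{reg}+\bm{v}$, is $\eps_1$-differentially private; this is the content of Theorem~\ref{step1.privacy}. By the high-dimensional Laplace Mechanism of Section~\ref{s.dp}, it suffices to show that $\bm{v}$ is sampled with parameter $a=\Delta\f^{reg}/\eps_1$, i.e.\ to establish the sensitivity bound $\Delta\f^{reg}\le 4T_0\sqrt{8+n}/\lambda$. The route I would take is the standard strong-convexity argument: the Ridge objective $J$ in \eqref{eq.scloss} is $\tfrac{\lambda}{T_0}$-strongly convex because of its regularizer, so for the two minimizers $\f_1,\f_2$ on neighboring databases one gets $\|\f_1-\f_2\|_2\le\tfrac{T_0}{\lambda}\,\|\nabla J(\f_1;D)-\nabla J(\f_1;D')\|_2$, where the gradient difference reduces to the difference of the data-loss gradients since the regularizers coincide. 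I would then bound this gradient difference by writing $X'_{pre}=X_{pre}+E$ with $E$ supported on the single changed row, expanding, and controlling each resulting term with the range assumptions \eqref{eq.assump} together with a norm bound on $\f^{reg}$ that follows from optimality of the regularized objective; the ambient dimension enters through $\|\bm{x_t}\|_2\le\sqrt{n}$, which is the source of the $\sqrt{8+n}$ factor. This sensitivity computation is the main obstacle of the whole proof: the \emph{transposed} regression means a single donor's change acts simultaneously across all $T_0$ samples, so none of the off-the-shelf per-sample Lipschitz bounds of \cite{CMS11} apply and the analysis must be redone from scratch.

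Next I would dispatch Step~2 by showing that $\tilde X_{post}=X_{post}+W$ is $\eps_2$-differentially private, which is immediate. A single changed row of $X_{post}$ is a vector of length $T-T_0$ with entries in $[-1,1]$, so each coordinate moves by at most $2$ and the Frobenius-norm change is at most $\Delta X_{post}=2\sqrt{T-T_0}$. Sampling $W$ with parameter $b=2\sqrt{T-T_0}/\eps_2$ then makes $\tilde X_{post}$ $\eps_2$-differentially private by the high-dimensional Laplace Mechanism, with no strong-convexity machinery required.

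Finally I would assemble the two guarantees. Because $\bm{v}$ and $W$ are drawn independently, releasing the pair $(\f^{out},\tilde X_{post})$ is $(\eps_1+\eps_2,0)$-differentially private by the composition property stated in Section~\ref{s.dp}. The algorithm's output $\bm{y}^{out}=\tilde X_{post}^{\top}\f^{out}$ is a deterministic post-processing of this pair, so by the post-processing property it inherits the same guarantee, and $DPSC_{out}$ is $(\eps_1+\eps_2,0)$-differentially private as claimed. All of the difficulty is concentrated in the $\Delta\f^{reg}$ bound feeding Step~1; once that is in hand, the sensitivity of $X_{post}$, composition, and post-processing are routine.
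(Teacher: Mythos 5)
Your overall architecture is exactly the paper's: Step 1 is $\eps_1$-DP by output perturbation with a sensitivity bound proved via strong convexity (Theorem \ref{step1.privacy}, Lemma \ref{lem.sens}), Step 2 is $\eps_2$-DP by the Laplace mechanism on the flattened $X_{post}$ with sensitivity $2\sqrt{T-T_0}$ (Lemmas \ref{step2.privacy} and \ref{lem.sens2}), and the final claim follows by composition plus post-processing. The gap is in the one step you yourself identify as the crux. For the quadratic loss, the gradient difference $\nabla g(\f) = \frac{2}{T_0}\bigl[ (X'_{pre}{X'}^{\top}_{pre} - X_{pre}X^{\top}_{pre})\f - (X'_{pre}-X_{pre})\bm{y_{pre}} \bigr]$ is affine in $\f$ with a nonvanishing linear part, so any bound on it requires a bound on the norm of $\f$ at the point where it is evaluated (the Ridge minimizer). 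You propose to obtain that bound ``from optimality of the regularized objective.'' But optimality only gives $\frac{\lambda}{2T_0}\|\f^{reg}\|_2^2 \leq J(\f^{reg}) \leq J(\bm{0}) \leq 1$, i.e.\ $\|\f^{reg}\|_2 \leq \sqrt{2T_0/\lambda}$. Plugging $\|\f\|_2 \leq \sqrt{2T_0/\lambda}$ into the expansion yields $\|\nabla g(\f)\|_2 = O\bigl(\sqrt{n}\sqrt{T_0/\lambda}\bigr)$ and hence a sensitivity bound of order $T_0^{3/2}\sqrt{n}/\lambda^{3/2}$, which exceeds the bound $\frac{4T_0\sqrt{8+n}}{\lambda}$ of Lemma \ref{lem.sens} by a factor of order $\sqrt{T_0/\lambda}$ whenever $\lambda \lesssim T_0$ --- precisely the recommended regime $\lambda = O(T_0)$. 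Since Algorithm \ref{alg.output} adds noise calibrated to exactly $a = \frac{4T_0\sqrt{8+n}}{\lambda\eps_1}$, your analysis would certify only a guarantee of roughly $\bigl(O(\sqrt{T_0/\lambda})\,\eps_1 + \eps_2\bigr)$-DP, not $(\eps_1+\eps_2,0)$-DP; the theorem as stated is not proven.

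What the paper does at this step is different: Lemma \ref{lem.goff} bounds $\max_{\f}\|\nabla g(\f)\|_2 \leq 4\sqrt{8+n}$ by controlling each term using the modeling constraints of Equation \eqref{eq.assump}, in particular $\|\f\|_1 \leq 1$ --- that is, the gradient difference is bounded uniformly over the $\ell_1$ ball, not at the minimizer via an optimality-based norm bound. This is what makes the bound free of any $T_0/\lambda$ factor and lets it match the algorithm's noise scale when fed into Lemma \ref{lem.cms}. (Your instinct that one should instead control the norm of the actual minimizer is understandable, and indeed the paper's combination of Lemma \ref{lem.cms} with Lemma \ref{lem.goff} quietly assumes the relevant coefficient vectors lie in that $\ell_1$ ball; but if you want to reproduce the stated theorem you need a constraint of this form on the coefficients, since the $\sqrt{2T_0/\lambda}$ bound is quantitatively too weak.) The rest of your proposal --- the $2\sqrt{T-T_0}$ sensitivity for $X_{post}$, independence of $\bm{v}$ and $W$, composition, and post-processing --- matches the paper and is correct.
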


\begin{restatable}{theorem}{mainacc}
\label{thm.acc.out}
The estimator $\bm{y}^{out}$ output by Algorithm \ref{alg.output} satisfies: 
\[
RMSE(\bm{y}^{out})
\leq \frac{||M_{post}||_2}{\sqrt{T-T_0}} \left( \mathbb{E}[||\f^{reg}-\f||_2]  + \frac{4T_0 \sqrt{8+n}}{\lambda \eps_1} \right) 
+\left(\sqrt{n \sigma^2} + \frac{\sqrt{2}}{\eps_2}\right) \left(\sqrt{n} \psi + \frac{4T_0 \sqrt{8+n}}{\lambda \eps_1} \right),
\]
where $||\f^{reg}||_{\infty} \leq \psi$ for some $\psi>0$, and RMSE is the average root mean squared error of the estimator, defined as $RMSE(\bm{y}^{out}) = \frac{1}{\sqrt{T-T_0}} \mathbb{E}[||\bm{y}^{out}-\bm{m}_{post}||_2]$.
\end{restatable}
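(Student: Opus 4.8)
The plan is to bound $\mathbb{E}\|\bm{y}^{out}-\bm{m}_{post}\|_2$ directly and divide by $\sqrt{T-T_0}$ only at the end, so the whole argument reduces to a clean triangle-inequality decomposition of the error vector into three interpretable pieces. Writing $\tilde{X}_{post}=M_{post}+Z_{post}+W$ and $\f^{out}=\f^{reg}+\bm{v}$, and using the latent linear relation $\bm{m}_{post}=M_{post}^\top\f$ (which holds at all post-intervention times in the no-treatment forecasting setting we study), I would add and subtract $M_{post}^\top\f^{out}$ to obtain
\[
\bm{y}^{out}-\bm{m}_{post}
= M_{post}^\top(\f^{reg}-\f)\;+\;M_{post}^\top \bm{v}\;+\;(Z_{post}+W)^\top \f^{out}.
\]
The first term is the non-private estimation error of the regression coefficients, the second is the privacy noise $\bm{v}$ viewed through the signal $M_{post}$, and the third collects both the model noise $Z_{post}$ and the projection-privacy noise $W$ weighted by the learned coefficients. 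Applying the triangle inequality inside the expectation reduces the task to bounding $\mathbb{E}\|\cdot\|_2$ of each piece separately.

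For the two terms carrying $M_{post}$, I would use the spectral-norm inequality $\|M_{post}^\top \bm{u}\|_2\le\|M_{post}\|_2\,\|\bm{u}\|_2$, giving $\|M_{post}\|_2\,\mathbb{E}\|\f^{reg}-\f\|_2$ and $\|M_{post}\|_2\,\mathbb{E}\|\bm{v}\|_2$; after dividing by $\sqrt{T-T_0}$ these are exactly the two summands in the first group, once $\mathbb{E}\|\bm{v}\|_2$ is evaluated for the high-dimensional Laplace with scale $a=\tfrac{4T_0\sqrt{8+n}}{\lambda\eps_1}$. For the third term I would bound it coordinate-by-coordinate: the $t$-th entry is $\langle\bm{z_t}+\bm{w_t},\f^{out}\rangle$, so Cauchy--Schwarz on each coordinate yields $\|(Z_{post}+W)^\top\f^{out}\|_2\le\|\f^{out}\|_2\,\|Z_{post}+W\|_F$. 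The key structural observation is that $\f^{out}$ is independent of both $Z_{post}$ and $W$: the estimate $\f^{reg}$ depends only on the pre-intervention data (hence only on $Z_{pre},\bm{z_{pre}}$, which are independent of the post-period noise $Z_{post}$), and $\bm{v},W$ are drawn freshly. This independence lets the expectation of the product factor. I would then bound $\mathbb{E}\|\f^{out}\|_2\le\|\f^{reg}\|_2+\mathbb{E}\|\bm{v}\|_2\le\sqrt{n}\,\psi+\mathbb{E}\|\bm{v}\|_2$ using $\|\f^{reg}\|_\infty\le\psi$, and split $\|Z_{post}+W\|_F\le\|Z_{post}\|_F+\|W\|_F$.

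What remains are the noise-norm computations. For the model noise I would apply Jensen, $\mathbb{E}\|Z_{post}\|_F\le\sqrt{\mathbb{E}\|Z_{post}\|_F^2}=\sqrt{n(T-T_0)\sigma^2}$, which after dividing by $\sqrt{T-T_0}$ contributes the $\sqrt{n\sigma^2}$ factor; the analogous expected Frobenius norm of the privacy matrix $W$ (scale $b=\tfrac{2\sqrt{T-T_0}}{\eps_2}$) contributes the $\tfrac{\sqrt{2}}{\eps_2}$ factor, and the expected norm of $\bm{v}$ supplies the $a$ term. Assembling these yields the second group $\big(\sqrt{n\sigma^2}+\tfrac{\sqrt{2}}{\eps_2}\big)\big(\sqrt{n}\,\psi+a\big)$, and combining with the two $M_{post}$ terms gives the stated bound. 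I expect the main obstacle to be the expected-norm calculations for the high-dimensional Laplace vector $\bm{v}$ and matrix $W$, together with the bookkeeping needed to factor the expectations correctly via independence; by contrast the decomposition itself is what forces every term to land in the correct group, so choosing it carefully at the outset is what makes the rest routine.
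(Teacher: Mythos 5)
Your proposal is correct and takes essentially the same approach as the paper's proof: a triangle-inequality decomposition of $\bm{y}^{out}-\bm{m}_{post}$ into an estimation-error piece, a privacy-noise piece, and a model/projection-noise piece, each bounded via the submultiplicative norm property, independence of the noise draws from $\f^{reg}$, Jensen's inequality for $\mathbb{E}[||Z_{post}||_F]$, and the expected norms of $\bm{v}$ and $W$. The only difference is bookkeeping: you attach the cross term $(Z_{post}+W)^{\top}\bm{v}$ to the coefficient error by writing $(Z_{post}+W)^{\top}\f^{out}$, whereas the paper (Lemma \ref{lem.threebounds}) attaches it to the privacy noise by writing $(M_{post}+Z_{post}+W)^{\top}\bm{v}$; both groupings are algebraically equivalent and telescope to the identical final bound.
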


\begin{remark}
The accuracy bound grows as $O(n)$, which is shown to be necessary in Section \ref{s.ndepend}. While this might be undesirable in most other learning domains, $n$ does not grow with the problem size in synthetic control settings for a few reasons. Typically, $M$ is assumed to be a low-rank matrix and hence $X$ is \emph{approximately} low rank \cite{rsc, mrsc}. This is not only an assumption, but true in most cases \cite{udell2019lowrank}. Therefore, there exists a saturation point where adding additional donors does not meaningfully improve accuracy. 
In practice, donors must be  carefully selected the donors to maintain the low rank condition, and finding a way to select appropriate donors remains an active area of research in synthetic control \cite{abadie2010synthetic, dube2015pooling}.
Additionally, synthetic control should be viewed as a regression problem with $T_0$ data points in $\mathbb{R}^n$, so $n$ is the dimension of the data rather than number of samples. 
The remaining dependence of the accuracy guarantee on $T_0$ can be handled by setting $\lambda=O(T_0)$ (see Section \ref{s.privacycost} for details).

\end{remark}

\subsection{DPSC via Objective Perturbation $DPSC_{obj}$}\label{s.algo.obj}

We next present our second algorithm for differentially private synthetic control, $DPSC_{obj}$ (Algorithm \ref{alg.obj}), based on objective perturbation. While Step 2 remains unchanged relative to Algorithm \ref{alg.output}, Step 1 is modified to perturb the objective function itself and then exactly optimize the perturbed objective, instead of first computing the optimal non-private coefficients and then adding noise.
Objective perturbation has been shown to outperform output perturbation in the standard private ERM setting when the loss function is strongly convex \cite{CMS11}.

The algorithm augments the objective function with two terms. The first is an additional regularization term to ensure $\frac{\lambda+\Delta}{T_0}$-strong convexity (compared to $\frac{\lambda}{T_0}$ as the regularization term of Algorithm \ref{alg.output}). The $\Delta$ parameter is tuned by the algorithm to ensure that it can still satisfy $(\eps_1,\delta)$-DP in Step 1, even when $\eps_1$ is small. The second is the noise term $\bm{b}^{\top}\bm{f}$ to ensure privacy, where $\bm{b}$ is sampled from a high-dimensional Laplace distribution if $(\eps,0)$-DP is desired (i.e., if $\delta=0$), and from a multi-variate Gaussian distribution if $(\eps,\delta)$-DP is desired (i.e., if $\delta>0$). 

The algorithm then exactly optimizes this new objective function, where the noise term $\bm{b}$ ensures that this minimization satisfies differential privacy. Although the algorithmic procedure in Step 1 is similar to that of Objective Perturbation algorithms for DP-ERM of \cite{CMS11, KST12}, the sensitivity and privacy analysis again requires substantial novelty because the definition of neighboring databases is different, and previous work cannot be immediately applicable to the transposed regression setting.
Finally, Algorithm \ref{alg.obj} maintains the same Step 2 process as Algorithm \ref{alg.output} to predict $\bm{y_{post}}$, based on $\f^{obj}$ computed from Step 1. Algorithm \ref{alg.obj} is $(\eps_1 + \eps_2,\delta)$-differentially private by composition of privacy guarantees from these two steps.

\begin{algorithm}[tbh]
\caption{DPSC via Objective Perturbation $DPSC_{obj}(X_{pre}, X_{post}, y_{pre}, n, T, T_0, \lambda, \eps_1, \eps_2, \delta, c$)}\label{alg.obj}
\begin{algorithmic}
\State \textbf{Step 1: Learn regression coefficients}
\If{$\eps_1 > \log(1+\frac{2c}{\lambda} + \frac{c^2}{\lambda^2})$}
    \State Let $\eps_0 = \eps_1 - \log(1+\frac{2c}{\lambda} + \frac{c^2}{\lambda^2})$ and $\Delta = 0$
\Else
    \State $\eps_0 = \frac{\eps_1}{2}$ and $\Delta = \frac{c}{e^{(\eps_1/4)}-1}-\lambda$
\EndIf
\If{$\delta>0$}
    \State Sample $\bm{b}\in \mathbb{R}^{n}$ according to $\mathcal{N}(0, \beta^2 I_n)$, where $\beta = \frac{4 T_0 \sqrt{8+n} \sqrt{2 \log \frac{2}{\delta} + \eps_0}}{\eps_0}$
\Else
    \State Sample $\bm{b}\in \mathbb{R}^{n}$ according to pdf $p(\bm{b};\beta) \propto \exp \left( -\frac{ ||\bm{b}||_2}{\beta} \right)$, where $\beta = \min\{ \frac{4 T_0 \sqrt{8+n}}{\eps_0}, \frac{c\sqrt{n}+4T_0}{\eps_0}\}$
\EndIf
\State Learn the regression coefficient ${\f}^{obj}$ using parameter $\lambda \geq 0$
\[{\f}^{obj} = \argmin_{\f\in \mathbb{R}^{n}} \frac{1}{T_0} ||y_{pre} - {X}_{pre}^{\top} \f||_2^2 + \frac{\lambda + \Delta}{2T_0} ||\f||_2^2 + \frac{1}{T_0}\bm{b}^{\top}\f.\]
\State \textbf{Step 2: Predict $\bm{y_{post}}$ via projection}

\State Let $b = \frac{2\sqrt{T-T_0}}{\eps_2}$
\State Sample each entry of $W \in \mathbb{R}^{n\times (T-T_0)}$ i.i.d.~according to pdf $p(W;b) \propto \exp \left( -\frac{||W||_F}{b} \right)$
\State Let $\Tilde{X}_{post} = X_{post} + W $
\State Output $\bm{y}^{obj} = \Tilde{X}_{post}^{\top} \f^{obj}$
\end{algorithmic}
\end{algorithm}

$DPSC_{obj}$ requires an additional parameter $c$ that is used in the analysis to bound the maximum absolute eigenvalue of $2(X'_{pre}X'^{\top}_{pre} - X_{pre}X^{\top}_{pre})$, which is closely related to $||\nabla\cL(\f)||_2$. Because $X_{pre}$ and $X'_{pre}$ are neighboring databases, then the matrix of interest will only have one column and one row that are non-zero. In our setting, we use the fact that all entries of $X$ are bounded between $-1$ and $1$ to derive an upper bound on this matrix and its eigenvalues. In general, an analyst can use domain expertise or prior knowledge of the data distribution to choose an appropriate value of $c$. Additional details and guidance for choosing $c$ can be found in Appendix \ref{app.pickc}.

We provide two main results on the privacy and accuracy of $DPSC_{obj}$. First, Theorem \ref{thm.priv.obj} shows that our algorithm is differentially private. To prove privacy of Step 1, we must consider two cases based on the value of $\Delta$, which adds additional strong convexity to the loss function if it is needed. The privacy budget must be allocated differently within the analysis in the two cases of $\Delta=0$ and $\Delta>0$.

Theorem \ref{thm.acc.obj} shows that $DPSC_{obj}$ produces an accurate prediction of the post-intervention target unit, as measured by RMSE between its output $\bm{y}^{obj}$ and the target unit's post-intervention signal vector $\bm{m_{post}}$. As with $DPSC_{out}$, we also extend Theorem \ref{thm.acc.obj} in Section \ref{s.objclosedform} to provide an explicit closed-form bound on RSME that does not depend on the distributional parameters. Full proofs for for Theorems \ref{thm.priv.obj} and \ref{thm.acc.obj}, along with their extensions, are respectively presented in Sections \ref{s.privacyobj} and \ref{s.accuracyobj}.

\begin{theorem}\label{thm.priv.obj}
$DPSC_{obj}$ of Algorithm \ref{alg.obj} is $(\eps_1+\eps_2,\delta$)-differentially private.
\end{theorem}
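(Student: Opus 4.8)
The plan is to mirror the two-step structure of Algorithm \ref{alg.obj}: I would show that Step 1 (learning $\f^{obj}$ by objective perturbation) is $(\eps_1,\delta)$-differentially private and that Step 2 (releasing $\tilde X_{post}$ and projecting) is $(\eps_2,0)$-differentially private, both with respect to the neighboring relation in which $X$ and $X'$ differ in a single row (one donor) while sharing the same target $\bm{y}$. Since the two steps use independent noise and the final output $\bm{y}^{obj}=\tilde X_{post}^{\top}\f^{obj}$ is a deterministic post-processing of the pair $(\f^{obj},\tilde X_{post})$, the composition and post-processing properties from Section \ref{s.dp} immediately upgrade these to $(\eps_1+\eps_2,\delta)$-DP for the whole algorithm. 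Crucially, a single differing donor perturbs an entire row of $X$, hence of both $X_{pre}$ and $X_{post}$, so each step's guarantee must be established against this joint change rather than against a change to a single column/time point.

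Step 2 is the easier half and is unchanged from Algorithm \ref{alg.output}. I would first bound the $\ell_2$ (Frobenius) sensitivity of $X_{post}$: neighbors differ in one row of $T-T_0$ entries, each lying in $[-1,1]$ by \eqref{eq.assump}, so every affected entry changes by at most $2$ and $\Delta X_{post}\le 2\sqrt{T-T_0}$. The high-dimensional Laplace Mechanism with parameter $b=\Delta X_{post}/\eps_2 = 2\sqrt{T-T_0}/\eps_2$ (Section \ref{s.dp}) then makes the release of $\tilde X_{post}=X_{post}+W$ be $(\eps_2,0)$-DP, and projecting onto $\f^{obj}$ is post-processing.

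The substance of the proof is Step 1. Writing the non-noise part of the perturbed objective as $\bar J_D(\f)=\cL(\f;X_{pre},\bm{y_{pre}})+\tfrac{\lambda+\Delta}{2T_0}\|\f\|_2^2$, the first-order optimality condition $\nabla\bar J_D(\f^{obj})+\tfrac{1}{T_0}\bm{b}=0$ gives a bijection $\bm{b}\mapsto\f^{obj}$ for fixed data. A change of variables then expresses the neighboring output-density ratio as a product of (i) a noise-density ratio $\nu(\bm{b})/\nu(\bm{b}')$, where the shared output forces $\bm{b}-\bm{b}'=-T_0\bigl(\nabla\cL_D(\f)-\nabla\cL_{D'}(\f)\bigr)$ since the regularizer is data-independent, and (ii) a Jacobian ratio $|\det\nabla^2\bar J_D(\f)|/|\det\nabla^2\bar J_{D'}(\f)|$. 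For (ii) I would use that the Hessians differ by $\tfrac{2}{T_0}(X_{pre}X_{pre}^{\top}-X'_{pre}X'^{\top}_{pre})$, a rank-$\le 2$ matrix whose maximum absolute eigenvalue is controlled by $c$; measured against the $\tfrac{\lambda+\Delta}{T_0}$ strong-convexity floor this bounds the determinant ratio by a factor whose logarithm is exactly $\log(1+\tfrac{2c}{\lambda}+\tfrac{c^2}{\lambda^2})$, the budget carved off when defining $\eps_0$. For (i) I would bound the gradient difference using the bounded entries of $X$ and $\bm{y}$ together with the spectral bound $c$, so that $\|\bm{b}-\bm{b}'\|_2$ stays within the chosen $\beta$ and the Laplace ratio is at most $e^{\eps_0}$; multiplying (i) and (ii) yields $e^{\eps_1}$ in the $\delta=0$ branch. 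The two cases must be separated: when $\eps_1$ is large the extra regularizer vanishes ($\Delta=0$) and the determinant term fits in the leftover budget, whereas when $\eps_1$ is small, $\Delta>0$ injects strong convexity to shrink the determinant ratio and the budget is split evenly. For $\delta>0$ I would replace the Laplace argument with a Gaussian tail bound, absorbing the event that $\|\bm{b}\|$ is atypically large into the additive $\delta$.

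The hard part will be step (i): bounding $\|\bm{b}-\bm{b}'\|_2 = T_0\|\nabla\cL_D(\f)-\nabla\cL_{D'}(\f)\|_2$ uniformly in the transposed regression setting. Unlike standard bounded-data ERM, where changing one sample perturbs the gradient by a bounded amount, here a single donor occupies a full row of $X_{pre}$ and thus perturbs the entire Gram matrix $X_{pre}X_{pre}^{\top}$; because the least-squares gradient is affine in $\f$, the perturbation contributes a term proportional to the rank-$\le 2$ matrix applied to $\f$, whose magnitude grows with $\|\f\|_2$. Reconciling this $\|\f\|$-dependent spectral perturbation with a uniform privacy bound is precisely what forces the introduction of the parameter $c$ and the adaptive regularizer $\Delta$, and is why $\beta$ is taken as the minimum of two sensitivity expressions (one obtained via the operator-norm bound $c$, the other via an entrywise bound giving the $\sqrt{8+n}$ factor). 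This is where the analysis departs from \cite{CMS11,KST12} and constitutes the bulk of the work.
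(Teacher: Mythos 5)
Your proposal matches the paper's proof essentially step for step: the same decomposition into an $(\eps_1,\delta)$-DP learning step and an $(\eps_2,0)$-DP release of $\tilde X_{post}$ combined by composition and post-processing, the same row-sensitivity bound $2\sqrt{T-T_0}$ for Step 2, and for Step 1 the same bijection/change-of-variables argument splitting the output-density ratio into a noise-density term (bounded via the two sensitivity expressions whose minimum defines $\beta$, one entrywise giving $\sqrt{8+n}$ and one via the spectral bound $c$) and a Jacobian-determinant term (bounded via the rank-$\le 2$ perturbation of the Gram matrix against the $\lambda+\Delta$ strong-convexity floor, with the two cases $\Delta=0$ and $\Delta>0$ splitting the budget exactly as in the algorithm), plus the Gaussian tail argument absorbing the atypical-noise event into $\delta$. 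This is the paper's argument, with no gaps.
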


\begin{restatable}{theorem}{mainaccobj}
\label{thm.acc.obj}
The estimator $\bm{y}^{obj}$ output by Algorithm \ref{alg.obj} satisfies: 
\begin{equation*}
    \begin{split}
        RMSE(\bm{y}^{obj})
&\leq \frac{||M_{post}||_2}{\sqrt{T-T_0}} \left( \E[||(\f^{reg}-\f) ||_2] + \frac{2}{\lambda+\Delta}\E[||\bm{b}||_2] + \mathds{1}_{\Delta \neq 0} \left( \frac{1}{\lambda} + \frac{1}{\lambda+\Delta} \right) 2T_0^2 \sqrt{n} \right)\\
&\quad + \left( \sqrt{n \sigma^2 } + \frac{\sqrt{2}}{\eps_2} \right) \left( \sqrt{n} \psi + \frac{2}{\lambda+\Delta}\E[||\bm{b}||_2] + \mathds{1}_{\Delta \neq 0} \left( \frac{1}{\lambda} + \frac{1}{\lambda+\Delta} \right) 2T_0^2 \sqrt{n} \right),
    \end{split}
\end{equation*}
where $||\f^{reg}||_{\infty} \leq \psi$ for some $\psi>0$, and $\E[||\bm{b}||_2] = \sqrt{ \frac{n T_0 4\sqrt{8+n} \sqrt{2 \log \frac{2}{\delta} + \eps_0}}{\eps_0} }$ for Gaussian noise ($\delta>0$ case) and $\E[||\bm{b}||_2] = \min\{ \frac{4 T_0 \sqrt{8+n}}{\eps_0}, \frac{c\sqrt{n}+4T_0}{\eps_0}\}$ for Laplace noise ($\delta=0$ case), and $\eps_0$, and $\Delta$ are computed internally by the algorithm.
\end{restatable}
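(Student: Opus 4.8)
The plan is to bound the prediction error by decomposing it into a piece governed by the coefficient error $\f^{obj}-\f$ and a piece governed by the measurement and privacy noise injected into $X_{post}$, and then to reduce everything to a single new quantity, the stability of the objective-perturbed minimizer $\E[\|\f^{obj}-\f^{reg}\|_2]$. Since we work in the no-intervention regime, the linear relationship of Equation \eqref{eq.model} extends to the post-intervention period, so $\bm{m_{post}} = M_{post}^\top \f$. Writing the privatized donor matrix as $\tilde X_{post} = X_{post} + W = M_{post} + Z_{post} + W$, I would expand
\[
\bm{y}^{obj} - \bm{m_{post}} = \tilde X_{post}^\top \f^{obj} - M_{post}^\top \f = M_{post}^\top(\f^{obj}-\f) + (Z_{post}+W)^\top \f^{obj},
\]
divide by $\sqrt{T-T_0}$, take expectations, and apply the triangle inequality to treat the two pieces separately.

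For the first piece I would use $\|M_{post}^\top(\f^{obj}-\f)\|_2 \le \|M_{post}\|_2\,\|\f^{obj}-\f\|_2$ and split $\|\f^{obj}-\f\|_2 \le \|\f^{reg}-\f\|_2 + \|\f^{obj}-\f^{reg}\|_2$, which reproduces the $\tfrac{\|M_{post}\|_2}{\sqrt{T-T_0}}$ factor in the statement. The second piece is structurally identical to the corresponding term in Theorem \ref{thm.acc.out}: I would bound $\|(Z_{post}+W)^\top\f^{obj}\|_2 \le (\|Z_{post}\|_F + \|W\|_F)\|\f^{obj}\|_2$, exploit independence of $Z_{post}$ and $W$ from $\f^{obj}$ (which depends only on $X_{pre}$ and $\bm b$), and use $\E[\|Z_{post}\|_F]\le\sqrt{n(T-T_0)}\,\sigma$ together with the expected Frobenius norm of the high-dimensional Laplace noise $W$ at parameter $b=2\sqrt{T-T_0}/\eps_2$ to produce the $\bigl(\sqrt{n\sigma^2}+\tfrac{\sqrt 2}{\eps_2}\bigr)$ factor, then bound $\|\f^{obj}\|_2 \le \sqrt n\,\psi + \|\f^{obj}-\f^{reg}\|_2$ via $\|\f^{reg}\|_\infty\le\psi$. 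Both pieces therefore reduce to controlling $\E[\|\f^{obj}-\f^{reg}\|_2]$, so this is where essentially all the new work lies.

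The crux is a stability bound comparing $\f^{obj}$ to the non-private Ridge minimizer $\f^{reg}$. I would use the first-order optimality conditions $\nabla\cL(\f^{reg}) + \tfrac{\lambda}{T_0}\f^{reg} = 0$ and $\nabla\cL(\f^{obj}) + \tfrac{\lambda+\Delta}{T_0}\f^{obj} + \tfrac1{T_0}\bm b = 0$ together with the strong convexity of the perturbed objective. Evaluating the perturbed gradient at $\f^{reg}$ gives $\nabla\bigl(\cL + \tfrac{\lambda+\Delta}{2T_0}\|\cdot\|_2^2 + \tfrac1{T_0}\bm b^\top\cdot\bigr)(\f^{reg}) = \tfrac{\Delta}{T_0}\f^{reg} + \tfrac1{T_0}\bm b$, so strong convexity yields a bound of the form $\|\f^{obj}-\f^{reg}\|_2 \le \tfrac{2}{\lambda+\Delta}\|\bm b\|_2 + (\text{a term in }\Delta\,\|\f^{reg}\|_2)$. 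When $\Delta=0$ only the linear noise term survives, leaving the $\tfrac{2}{\lambda+\Delta}\E[\|\bm b\|_2]$ contribution; when $\Delta\neq0$ I would additionally bound the minimizer norms through $\|\f^{reg}\|_2 = \tfrac{T_0}{\lambda}\|\nabla\cL(\f^{reg})\|_2$ (and analogously for the $\lambda+\Delta$ regularization), using the boundedness assumptions \eqref{eq.assump} to control $\|\nabla\cL\|_2$, which yields the $\bigl(\tfrac1\lambda+\tfrac1{\lambda+\Delta}\bigr)2T_0^2\sqrt n$ term carried by the indicator $\mathds{1}_{\Delta\neq0}$. I expect this to be the main obstacle: the argument must be carried out with the correct constants in both the $\Delta=0$ and $\Delta\neq0$ regimes simultaneously, and $\Delta$ is itself chosen internally by the algorithm and must be tracked consistently through the gradient comparison.

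Finally I would take expectations and substitute the expected noise norm $\E[\|\bm b\|_2]$ for each sampling regime prescribed in Algorithm \ref{alg.obj}: $\bm b\sim\mathcal N(0,\beta^2 I_n)$ with the stated $\beta$ in the $\delta>0$ case (giving the $\sqrt{\cdots}$ expression via $\E[\|\bm b\|_2]\le\sqrt{\E[\|\bm b\|_2^2]}$), and the high-dimensional Laplace density with the stated $\beta$ in the $\delta=0$ case. Collecting the signal and noise pieces, together with the stability bound on $\E[\|\f^{obj}-\f^{reg}\|_2]$, then yields exactly the claimed inequality.
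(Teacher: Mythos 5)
Your proposal is correct, and your outer decomposition is essentially equivalent to the paper's: the paper adds and subtracts $\tilde{X}_{post}^{\top}\f^{reg}$ to get three terms and recycles Lemma \ref{lem.threebounds} for the two that do not involve $\f^{obj}$, whereas you group around $\f^{obj}$ and then split $\f^{obj}-\f$ and $\|\f^{obj}\|_2$ through $\f^{reg}$; both reductions land on the same key quantity $\E[\|\f^{obj}-\f^{reg}\|_2]$ and produce the same final inequality. Where you genuinely diverge is in the stability lemma itself (the paper's Lemma \ref{lem.fobjacc}). The paper introduces the noise-free intermediate minimizer $\f^{\#}$ of $J^{\#}(\f)=\cL(\f)+\frac{\lambda+\Delta}{2T_0}\|\f\|_2^2$, bounds $\|\f^{\#}-\f^{obj}\|_2\leq\frac{2}{\lambda+\Delta}\|\bm{b}\|_2$ by a function-value strong-convexity argument, and bounds $\|\f^{reg}-\f^{\#}\|_2$ by comparing the two closed-form Ridge solutions, i.e., the matrix inverses $(X_{pre}X_{pre}^{\top}+\frac{\lambda}{2T_0}I)^{-1}$ and $(X_{pre}X_{pre}^{\top}+\frac{\lambda+\Delta}{2T_0}I)^{-1}$, which is exactly where the $\left(\frac{1}{\lambda}+\frac{1}{\lambda+\Delta}\right)2T_0^2\sqrt{n}$ term comes from. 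You instead make a single pointwise argument: by optimality of $\f^{reg}$ for $J^{reg}$, the perturbed gradient at $\f^{reg}$ collapses to $\frac{\Delta}{T_0}\f^{reg}+\frac{1}{T_0}\bm{b}$, and $\frac{\lambda+\Delta}{T_0}$-strong convexity of $J^{obj}$ gives $\|\f^{obj}-\f^{reg}\|_2\leq\frac{\Delta\|\f^{reg}\|_2+\|\bm{b}\|_2}{\lambda+\Delta}$. This is cleaner (one application of strong convexity rather than two separate arguments) and yields slightly better constants: coefficient $1$ rather than $2$ on the noise term, and a $\Delta$-term that can come out a factor of $T_0$ smaller, both of which still sit below the claimed bound. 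What the paper's route buys in exchange is that it never has to control a gradient or a minimizer norm at a data-dependent point; everything is explicit matrix algebra. That is precisely the one under-specified step in your plan: $\|\nabla\cL\|_2$ is \emph{not} uniformly bounded under Equation \eqref{eq.assump} (it grows with $\|\f\|_2$), so to make your $\Delta\neq 0$ case rigorous you must bound the residual at the minimizer, e.g., $J^{reg}(\f^{reg})\leq J^{reg}(\bm{0})$ gives $\|\bm{y_{pre}}-X_{pre}^{\top}\f^{reg}\|_2\leq\|\bm{y_{pre}}\|_2\leq\sqrt{T_0}$ and hence $\|\nabla\cL(\f^{reg})\|_2\leq\frac{2}{T_0}\|X_{pre}\|_F\sqrt{T_0}\leq 2\sqrt{n}$, or else fall back on the closed form $\f^{reg}=(X_{pre}X_{pre}^{\top}+\frac{\lambda}{2T_0}I)^{-1}X_{pre}\bm{y_{pre}}$, which gives $\|\f^{reg}\|_2\leq\frac{2T_0^2\sqrt{n}}{\lambda}$. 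Either patch is standard and keeps your bound at or below the one in the theorem, so this is a fillable detail rather than a gap.
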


As in Section \ref{s.algo.output}, we remark that while the accuracy bound of Theorem \ref{thm.acc.obj} grows as $O(n)$, in our setting, $n$ does not typically grow substantially with the problem size, both in theory \cite{rsc, mrsc} and in practice \cite{udell2019lowrank}.

\subsection{Comparison between DP-ERM and DPSC}\label{s.comparison}

Before proving our main privacy and accuracy theorems in the following sections, we first briefly compare the results of DP-ERM \cite{CMS11} with our approach. Consider a Ridge regression task in $p$-dimensional space with $q$ samples (i.e., covariates $\bm{x}_k \in \R^p$ and labels $y_k \in \R$, $\forall k\in[q]$). The regression coefficient $\f \in \R^p$ is learned by a standard empirical risk minimization process with a regularizer $\lambda||\f||_2^2$.

In the traditional regression setup where the privacy goal is to protect one sample $\bm{x}_k$---corresponding to one individual's data---the sensitivity of the regression task is $\Delta\f=\frac{2}{q\lambda}$ \cite{CMS11}. It does not depend on the dimension $p$, and the sensitivity decreases as the number of samples $q$ increases. Intuitively, adding or removing one person's data should exhibit diminishing marginal effect on the final model $\f$ as the training sample size grows.

On the other hand, in our transposed setting of synthetic control, the privacy goal is to protect the $i$-th entry of each $\bm{x}_k$ (i.e., an individual's data are spread across all samples), the sensitivity is  $\Delta\f=\frac{4q\sqrt{8+p}}{\lambda}$ (Lemma \ref{lem.sens}).
In this setting, each dimension of the coefficient $\f$ captures how important the corresponding donor is for explaining the target; hence the impact of changing $i$-th person's data will have a significant on the $i$-th dimension of $\f$, regardless of the number of individuals in the donor pool. This difference is at the crux of why it is more difficult to guarantee privacy in the transposed setting of synthetic control, relative to the standard regression setting.

\section{Privacy Guarantees of $DPSC_{out}$}\label{s.privacy}

In this section, we will prove Theorem \ref{thm.priv.out}, that DPSC is $(\eps_1 + \eps_2,0)$-differentially private.
The proof relies on the privacy of $\f^{out}$ in the learning phase, and then $\tilde{X}_{post}$ in the prediction phase. At a high level, $\f^{out}$ is $\eps_1$-DP through a (non-trivial) application of the Output Perturbation algorithm of \cite{CMS11}. The non-triviality comes from the vertical regression used in synthetic control, rather than the horizontal regression classically used in empirical risk minimization (as illustrated in Figure \ref{fig.synthcon}), which requires novel sensitivity analysis of the function $\f^{reg}$. In the prediction phase, we must show that sufficient noise is added to ensure $\tilde{X}_{post}$ is an $\eps_2$-DP version of $X_{post}$. The final privacy guarantee of $\bm{y}^{out}$ comes from post-processing and composition of these two private estimates.

\subsection{Privacy of ${\emph{\textbf{f}}}^{out}$}

Let us begin by proving that $\f^{out}$ is $\eps_1$-DP. 

\begin{theorem}\label{step1.privacy}
Step 1 of Algorithm \ref{alg.output} that computes $\f^{out}$ is $(\eps_1,0)$-differentially private.
\end{theorem}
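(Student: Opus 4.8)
The plan is to recognize Step 1 as an instance of the high-dimensional Laplace (output perturbation) mechanism of \cite{CMS11} applied to the map $D \mapsto \f^{reg}(D)$, and then to reduce the entire privacy claim to a single sensitivity bound. First I would recall why the mechanism is private: for any output $\bm{o}$ and any two neighboring databases $D = (X,\bm{y})$ and $D' = (X',\bm{y})$, the ratio of output densities is $\exp\big((\|\bm{o} - \f^{reg}(D')\|_2 - \|\bm{o} - \f^{reg}(D)\|_2)/a\big)$, which by the triangle inequality is at most $\exp\big(\|\f^{reg}(D) - \f^{reg}(D')\|_2 / a\big) \le \exp(\Delta\f^{reg}/a)$. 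Since the algorithm sets $a = \Delta\f^{reg}/\eps_1 = \frac{4T_0\sqrt{8+n}}{\lambda\eps_1}$, this ratio is at most $e^{\eps_1}$, giving $(\eps_1,0)$-DP. Everything therefore hinges on establishing that the $\ell_2$ sensitivity of the learned coefficients satisfies $\Delta\f^{reg} \le \frac{4T_0\sqrt{8+n}}{\lambda}$.

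To bound this sensitivity I would use strong convexity of the ridge objective. Writing $J_D(\f) = \cL_D(\f) + r(\f)$ with $r(\f) = \frac{\lambda}{2T_0}\|\f\|_2^2$, each $J_D$ is $\frac{\lambda}{T_0}$-strongly convex since $\cL_D$ is convex and $r$ is $\frac{\lambda}{T_0}$-strongly convex. Let $\f_1 = \f^{reg}(D)$ and $\f_2 = \f^{reg}(D')$. Using the first-order optimality conditions $\nabla J_D(\f_1) = 0$ and $\nabla J_{D'}(\f_2) = 0$ together with strong convexity of $J_D$, and observing that the regularizer cancels so that $\nabla J_D(\f_2) = \nabla\cL_D(\f_2) - \nabla\cL_{D'}(\f_2)$, a standard manipulation (following \cite{CMS11}) yields
\[
\|\f_1 - \f_2\|_2 \le \frac{T_0}{\lambda}\,\big\|\nabla\cL_D(\f_2) - \nabla\cL_{D'}(\f_2)\big\|_2.
\]
This reduces the problem to bounding a single gradient difference evaluated at the minimizer $\f_2$.

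The main obstacle—and the point at which the vertical/transposed regression of synthetic control departs from standard DP-ERM—is bounding this gradient difference. Since $\cL_D(\f) = \frac{1}{T_0}\|\bm{y_{pre}} - X_{pre}^\top\f\|_2^2$, we have $\nabla\cL_D(\f) - \nabla\cL_{D'}(\f) = \frac{2}{T_0}\big[\big(X_{pre}X_{pre}^\top - X'_{pre}(X'_{pre})^\top\big)\f - (X_{pre} - X'_{pre})\bm{y_{pre}}\big]$. Because neighboring databases differ in exactly one donor, i.e.\ one row of $X_{pre}$, the matrix $X_{pre}X_{pre}^\top - X'_{pre}(X'_{pre})^\top$ is supported on a single row and a single column (a rank-$\le 2$ ``cross''), and $X_{pre} - X'_{pre}$ has a single nonzero row. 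I would then invoke the boundedness assumptions $|x_{i,t}| \le 1$ and $|y_t| \le 1$ to bound every nonzero entry of these differences by $O(T_0)$, and combine the sparse cross structure with the ambient dimension $n$ and a norm bound on the minimizer $\f_2$ (obtained by comparing the objective value at $\f_2$ to that at $\f = 0$) to produce the $\sqrt{8+n}$ factor. Crucially, this shows the sensitivity scales with $T_0$ (the number of time points, which play the role of samples) and with $\sqrt{n}$ (the number of donors, which plays the role of the dimension), in sharp contrast to the standard setting where sensitivity decays in the number of samples—exactly the phenomenon highlighted in Section \ref{s.comparison}, and the most delicate part of the argument. The resulting estimate $\Delta\f^{reg} \le \frac{4T_0\sqrt{8+n}}{\lambda}$ is what I would isolate as Lemma \ref{lem.sens}; plugging it into the density-ratio computation above completes the proof that Step 1 is $(\eps_1,0)$-differentially private.
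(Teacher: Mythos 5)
Your high-level architecture matches the paper's: reduce privacy of Step~1 to the density-ratio argument for the high-dimensional Laplace mechanism, reduce that to the sensitivity bound $\Delta\f^{reg}\le \frac{4T_0\sqrt{8+n}}{\lambda}$ (the paper's Lemma~\ref{lem.sens}), and obtain the sensitivity from $\frac{\lambda}{T_0}$-strong convexity of the ridge objective together with a bound on the gradient of the loss difference across neighboring databases (the paper's Lemmas~\ref{lem.cms}, \ref{lem.convex}, and \ref{lem.goff}). Your strong-convexity manipulation evaluated at $\f_2$ is exactly the content of Lemma~\ref{lem.cms}, and your identification of the rank-$\le 2$ ``cross'' structure of $X_{pre}X_{pre}^\top - X'_{pre}X_{pre}'^{\top}$ is consistent with the paper's appendix computations.

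The gap is in your final step. You claim that the cross structure plus a norm bound on the minimizer $\f_2$, obtained by comparing the objective at $\f_2$ with the objective at $\f=0$, ``produces the $\sqrt{8+n}$ factor.'' It does not. Objective comparison gives only $\frac{\lambda}{2T_0}\|\f_2\|_2^2 \le J_{D'}(\f_2)\le J_{D'}(0)\le 1$, i.e.\ $\|\f_2\|_2\le\sqrt{2T_0/\lambda}$. Plugging this into your gradient-difference expression, the cross matrix has $O(T_0)$ entries on one row and column, so $\|(X_{pre}X_{pre}^\top - X'_{pre}X_{pre}'^{\top})\f_2\|_2 \le 4T_0\sqrt{n}\,\|\f_2\|_2$ and $\|(X_{pre}-X'_{pre})\bm{y_{pre}}\|_2\le 2T_0$, giving
\[
\|\nabla\cL_D(\f_2)-\nabla\cL_{D'}(\f_2)\|_2 \;\le\; 8\sqrt{n}\,\|\f_2\|_2 + 4 \;\le\; 8\sqrt{2nT_0/\lambda}+4,
\qquad\text{hence}\qquad
\Delta\f^{reg} \;\lesssim\; \frac{T_0^{3/2}\sqrt{n}}{\lambda^{3/2}} + \frac{T_0}{\lambda}.
\]
This carries an extra $\sqrt{T_0/\lambda}$ factor relative to $\frac{4T_0\sqrt{8+n}}{\lambda}$, so it exceeds the noise scale $a=\frac{4T_0\sqrt{8+n}}{\lambda\eps_1}$ hard-coded in Algorithm~\ref{alg.output} whenever $\lambda \lesssim T_0$ (including the recommended setting $\lambda=\Theta(T_0)$, where it is off by a constant factor --- which still only proves $(c\,\eps_1,0)$-DP for some $c>1$, not the stated guarantee). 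The paper instead obtains $4\sqrt{8+n}$, with no residual $T_0/\lambda$ dependence, by bounding $\max_{\f}\|\nabla g(\f)\|_2$ term-by-term in Lemma~\ref{lem.goff} under the modeling assumption of Equation~\eqref{eq.assump} that $\|\f\|_1\le 1$ (so $|\bm{x_t}^\top\f|\le 1$ and $|f_i|\le 1$); that $\ell_1$ constraint, not an objective-comparison bound on the minimizer, is where the dimension-only factor comes from. (One can debate whether restricting the maximization to the $\ell_1$ ball is itself fully justified, since the minimizer $\f_2$ of the perturbed objective need not satisfy $\|\f_2\|_1\le 1$; but your proposed substitute provably yields a different and generally weaker constant, so as written your argument does not establish the theorem with the algorithm's actual noise calibration.)
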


It might seem that Theorem \ref{step1.privacy} should follow immediately from the privacy guarantees of Output Perturbation in \cite{CMS11}. Indeed, Theorem 6 of \cite{CMS11} states that a similar algorithm is $(\eps,0)$-DP under certain technical conditions. However, the proof of this result relies on sensitivity analysis of classical empirical risk minimization (see Corollary 8 of \cite{CMS11}) which does not hold in the synthetic control setting. The crux of the difference comes from the vertical regression (i.e., along the columns) of synthetic control as illustrated in Figure \ref{fig.synthcon}, while privacy must still be maintained along the rows. Thus the sensitivity of $\f^{reg}$ to changing in a single $\emph{donor row}$ is fundamentally different from the sensitivity in a standard empirical risk minimization setting, as explained in Section \ref{s.comparison}. See Remark \ref{rem.compare} for a more technical exploration of this difference. Additionally, while the ERM framework of \cite{CMS11} is fully general, their results (including Theorem 6 and Corollary 8) apply only to the problem setting of binary classification via logistic regression, by assuming a specific loss function $\cL$ in the analysis.

Instead, we prove Theorem \ref{step1.privacy} primarily using first-principles (i.e., direct sensitivity analysis and the Laplace Mechanism of \cite{DMNS06}, which also underpins the results of \cite{CMS11}) starting with Lemma \ref{lem.sens}.  The proof of Lemma \ref{lem.sens} and Theorem \ref{step1.privacy} will be augmented with one intermediate result for output perturbation from \cite{CMS11} that does apply to our setting, and one fact from \cite{truthful}, which extended the binary classification result of \cite{CMS11} to the Ridge regression loss function that we use.

\begin{lemma}\label{lem.sens}
The $\ell_2$ sensitivity of $\f^{reg}$ is
\[\Delta \f^{reg} \leq \frac{4T_0\sqrt{8 + n}}{ \lambda}.\]
\end{lemma}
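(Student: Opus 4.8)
The plan is to combine the generic sensitivity bound for strongly convex regularized empirical risk minimization from \cite{CMS11} with a first-principles computation of the gradient difference tailored to the transposed (vertical) regression of synthetic control. Let $D=(X,\bm y)$ and $D'=(X',\bm y)$ be neighboring databases whose donor matrices differ in exactly one row $j$, and write $\f_1 = \f^{reg}(D)$ and $\f_2 = \f^{reg}(D')$ for the corresponding Ridge minimizers of the objective \eqref{eq.scloss}. Since the regularizer $\tfrac{\lambda}{2T_0}\|\f\|_2^2$ makes the objective $\Lambda$-strongly convex with $\Lambda=\lambda/T_0$, and since $\f_1,\f_2$ are exact minimizers of $J_D,J_{D'}$, the standard strong-convexity argument (first-order optimality $\nabla J_D(\f_1)=0$ together with monotonicity of $\nabla J_D$) gives
\[ \|\f_1-\f_2\|_2 \;\le\; \frac{\|\nabla g(\f_2)\|_2}{\Lambda} \;=\; \frac{T_0}{\lambda}\,\|\nabla g(\f_2)\|_2, \]
where $g=\cL_D-\cL_{D'}$ is the difference of the two empirical losses (the regularizer cancels, so $\nabla J_D(\f_2)=\nabla g(\f_2)$). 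It then remains to bound $\|\nabla g(\f_2)\|_2$.

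The key step, and where this setting departs from \cite{CMS11}, is computing $\nabla g(\f_2)$. In classical ERM, changing one record perturbs a single summand of the loss; here, changing donor row $j$ shifts the $j$-th coordinate of \emph{every} feature vector, since $\bm x'_t = \bm x_t - d_t\,\bm e_j$ for all $t\in[T_0]$, where $\bm d=(d_1,\dots,d_{T_0})$ is the difference of the two versions of row $j$. Using $\nabla\cL_D(\f)=\tfrac{2}{T_0}X(X^{\top}\f-\bm y)$ and writing the residual $\bm a' = (X')^{\top}\f_2-\bm y$, I would sum the per-time-point gradient differences: each summand simplifies to $d_t f_{2,j}\,\bm x_t + a'_t d_t\,\bm e_j$, yielding
\[ \nabla g(\f_2) \;=\; \frac{2}{T_0}\Big( f_{2,j}\,X\bm d + \big((\bm a')^{\top}\bm d\big)\,\bm e_j \Big), \]
where $f_{2,j}$ is the $j$-th coordinate of $\f_2$. (Equivalently one can expand $XX^{\top}-X'X'^{\top}$ as a rank-$\le 2$ update supported on row/column $j$.)

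Finally I would bound the norm coordinate by coordinate using assumptions \eqref{eq.assump}. From $|x_{i,t}|\le 1$ and $|d_t|\le 2$ we get $|(X\bm d)_i|\le 2T_0$ for every $i$; from $\|\f_2\|_1\le 1$ together with $|x'_{i,t}|\le 1$, $|y_t|\le 1$ we get $|a'_t|\le 2$, hence $|(\bm a')^{\top}\bm d|\le 4T_0$; and $|f_{2,j}|\le\|\f_2\|_1\le 1$. Separating the single affected coordinate $j$ (bounded by $(2T_0+4T_0)^2=36T_0^2$) from the remaining $n-1$ coordinates (each at most $(2T_0)^2=4T_0^2$) gives
\[ \Big\| f_{2,j}X\bm d + \big((\bm a')^{\top}\bm d\big)\bm e_j \Big\|_2^2 \;\le\; 36T_0^2 + (n-1)\,4T_0^2 \;=\; 4T_0^2(n+8), \]
which is exactly where the factor $\sqrt{8+n}$ originates. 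Thus $\|\nabla g(\f_2)\|_2 \le \tfrac{2}{T_0}\cdot 2T_0\sqrt{n+8}=4\sqrt{8+n}$, and substituting into the sensitivity bound yields $\Delta\f^{reg}\le\tfrac{T_0}{\lambda}\cdot 4\sqrt{8+n}=\tfrac{4T_0\sqrt{8+n}}{\lambda}$, as claimed.

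I expect the main obstacle to be the gradient-difference computation: one must resist importing the classical ``single-summand'' sensitivity and instead account for the fact that a single donor perturbs all $T_0$ samples simultaneously along one coordinate. The subsequent norm bound is then routine but must be carried out coordinate-wise rather than through $\|XX^{\top}-X'X'^{\top}\|_2\,\|\f_2\|_2$, which would lose the clean constant and introduce an undesirable dependence; the coordinate-wise bound is what recovers the precise $\sqrt{8+n}$ factor and relies essentially on the $\ell_1$ bound $\|\f\|_1\le 1$ to control both $f_{2,j}$ and the residuals.
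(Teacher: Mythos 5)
Your proposal is correct and follows essentially the same route as the paper: the strong-convexity argument you re-derive inline is exactly Lemma 7 of \cite{CMS11} (the paper's Lemma \ref{lem.cms}, with strong convexity parameter $\lambda/T_0$ from Lemma \ref{lem.convex}), and your vectorized rank-one-update computation $\nabla g(\f_2)=\tfrac{2}{T_0}\bigl(f_{2,j}X\bm{d}+((\bm{a}')^{\top}\bm{d})\bm{e}_j\bigr)$ is coordinate-for-coordinate identical to the paper's partial derivatives in Lemma \ref{lem.goff}, yielding the same bound $4\sqrt{8+n}$ and hence the same sensitivity. The only differences are presentational (you bound the single affected coordinate and the remaining $n-1$ coordinates separately, while the paper applies Jensen's inequality to the sums), and note that your reliance on $\|\f_2\|_1\leq 1$ for the Ridge minimizer mirrors the paper's own implicit use of Equation \eqref{eq.assump} when bounding the gradient at the optimizer.
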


To prove Lemma \ref{lem.sens}, we will first use the following lemma from \cite{CMS11}, which bounds the sensitivity of $\f^{reg}$ as a function of the strong convexity parameter of the loss function $\cL$.

\begin{lemma}[\cite{CMS11}, Lemma 7]\label{lem.cms}
Let $G(\f)$ and $g(\f)$ be two vector-valued functions, which are continuous, and differentiable at all points. Moreover, let $G(\f)$ and $G(\f) + g(\f)$ be $\lambda$-strongly convex. If $\f_1 = \argmin_{\f} G(\f)$ and $\f_2 = \argmin_{\f} G(\f) +g(\f)$, then
\[ \|\f_1 - \f_2\|_2 \leq \frac{1}{\lambda}\max_{\f} \| \nabla g(\f)\|_2.\]
\end{lemma}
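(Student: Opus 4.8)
The plan is to prove this by elementary convex analysis, treating $G$ and $g$ as real-valued functions on $\R^n$ (the phrase ``vector-valued'' refers to their vector \emph{argument} $\f$, not to a vector codomain; strong convexity and $\argmin$ are only meaningful for scalar objectives). The strategy is to extract a quadratic \emph{lower} bound on the scalar quantity $g(\f_1) - g(\f_2)$ from strong convexity, and a linear \emph{upper} bound on that same quantity from the gradient of $g$, and then to combine the two.

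First I would record the consequences of optimality at each minimizer. Since $G$ and $G+g$ are differentiable and $\lambda$-strongly convex with minimizers $\f_1$ and $\f_2$ respectively, their gradients vanish there, and the defining strong-convexity inequality specialized to a vanishing gradient at the minimizer gives
\[
G(\f_2) \;\geq\; G(\f_1) + \frac{\lambda}{2}\|\f_1 - \f_2\|_2^2,
\qquad
(G+g)(\f_1) \;\geq\; (G+g)(\f_2) + \frac{\lambda}{2}\|\f_1 - \f_2\|_2^2 .
\]
Adding these and cancelling the common $G(\f_1)$ and $G(\f_2)$ terms collapses everything to the clean scalar statement
\[
g(\f_1) - g(\f_2) \;\geq\; \lambda \|\f_1 - \f_2\|_2^2 .
\]
This is the step where both strong-convexity hypotheses are used symmetrically, and it is really the heart of the argument.

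Next I would upper-bound the same left-hand side using only the gradient of $g$. Writing $g(\f_1) - g(\f_2) = \int_0^1 \langle \nabla g(\f_2 + t(\f_1 - \f_2)),\, \f_1 - \f_2\rangle\, dt$ and applying Cauchy--Schwarz pointwise in $t$ yields
\[
g(\f_1) - g(\f_2) \;\leq\; \Big(\max_{\f}\|\nabla g(\f)\|_2\Big)\,\|\f_1 - \f_2\|_2 .
\]
Chaining this with the lower bound above gives $\lambda\|\f_1 - \f_2\|_2^2 \leq \big(\max_{\f}\|\nabla g(\f)\|_2\big)\,\|\f_1 - \f_2\|_2$; dividing by $\|\f_1 - \f_2\|_2$ produces the claimed bound, with the degenerate case $\f_1 = \f_2$ holding trivially since the right-hand side is nonnegative.

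I do not anticipate a genuine obstacle, as the result is standard; the only points requiring care are the correct reading of the hypotheses (scalar objectives, so that $\argmin$ and strong convexity are meaningful), the use of strong convexity \emph{at the minimizer} where the first-order term drops out, and the separate treatment of $\f_1 = \f_2$ before dividing. As an alternative that invokes only the strong convexity of $G$, one could instead use the first-order conditions $\nabla G(\f_1) = 0$ and $\nabla G(\f_2) = -\nabla g(\f_2)$ together with strong monotonicity of $\nabla G$ to obtain $\langle \nabla g(\f_2), \f_1 - \f_2\rangle \geq \lambda\|\f_1-\f_2\|_2^2$, followed by Cauchy--Schwarz. I would present the symmetric function-value version above as the main line, since it mirrors the hypotheses of the lemma exactly as stated.
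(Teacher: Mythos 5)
Your proof is correct, but note that the paper itself never proves this lemma---it is imported verbatim from \cite{CMS11} (their Lemma 7)---so the relevant comparison is with that original proof. Your main line is genuinely different from it: you apply strong convexity of both $G$ and $G+g$ at their respective minimizers (where the first-order terms vanish), add the two inequalities to get $g(\f_1)-g(\f_2) \geq \lambda\|\f_1-\f_2\|_2^2$, and then bound the same scalar difference from above via the gradient of $g$ along the segment. The original argument in \cite{CMS11} is precisely the ``alternative'' you sketch at the end: the first-order conditions $\nabla G(\f_1)=0$ and $\nabla G(\f_2)=-\nabla g(\f_2)$ combined with strong monotonicity of $\nabla G$ give $\langle \nabla g(\f_2),\, \f_1-\f_2\rangle \geq \lambda\|\f_1-\f_2\|_2^2$, followed by Cauchy--Schwarz. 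The monotonicity route buys slightly more: it needs $\nabla g$ only at the single point $\f_2$, yielding the stronger conclusion $\|\f_1-\f_2\|_2 \leq \|\nabla g(\f_2)\|_2/\lambda$, and it uses strong convexity of only one of the two functions. Your symmetric version buys fidelity to the hypotheses exactly as stated and avoids invoking gradient monotonicity as a separate fact. One small technical point in your main line: the representation $g(\f_1)-g(\f_2)=\int_0^1\langle\nabla g(\f_2+t(\f_1-\f_2)),\,\f_1-\f_2\rangle\,dt$ requires the derivative of $t\mapsto g(\f_2+t(\f_1-\f_2))$ to be integrable, which bare differentiability does not guarantee; either replace the integral by the mean value theorem (which gives the same upper bound with no integrability assumption), or observe that $g=(G+g)-G$ is a difference of differentiable convex functions, whose gradients are automatically continuous. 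Likewise, $\max_{\f}\|\nabla g(\f)\|_2$ should strictly be a supremum. Neither point is a genuine gap, and in the paper's application both are harmless since there $g$ is an explicit smooth quadratic with an explicit gradient bound (Lemma \ref{lem.goff}).
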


We instantiate this lemma by defining 
\begin{equation}\label{eq.g}
  G(\f) = \cL(\f,\mathcal{D}) \quad \text{and} \quad g(\f) = \cL(\f,\mathcal{D'})
- \cL(\f,\mathcal{D}),  
\end{equation}
for two arbitrarily neighboring databases $\mathcal{D},\mathcal{D}'$ and defining the following two maximizers:
\[\bm{f_1} = \argmin \cL(\f,\mathcal{D}) = \argmin G(\f)
\quad \text{ and } \quad
\bm{f_2} =  \argmin  \cL(\f,\mathcal{D'}) = \argmin G(\f)+g(\f).
\]
Then,
\[
\Delta \f^{reg} = \max_{\mathcal{D},\mathcal{D}' \text{ neighbors}} \|\f_1 - \f_2\|_2.
\]
To apply Lemma \ref{lem.cms}, we must show that $G(\f)$ and $g(\f)$ are continuous and differentiable. $G(\f)$ is simply the Ridge regression loss function, which is known to be continuous and differentiable \cite{hastie2009elements}. 
Since $g(\f)$ is the difference between two continuous and differentiable functions, then it is also continuous and differentiable \cite{boyd2004convex}.
We must also show strong convexity of $G(\f)$ and $G(\f)+g(\f)$. The following lemma from \cite{truthful} immediately gives that these two functions are both strongly convex.

\begin{lemma}[\cite{truthful}, Lemma 32]\label{lem.convex}
The Ridge regression loss function with regularizer $\frac{\lambda}{2T_0}$ is $\frac{\lambda}{T_0}$-strongly convex.
\end{lemma}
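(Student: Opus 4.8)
The plan is to prove strong convexity via the standard second-order (Hessian) characterization: a twice continuously differentiable function $h:\R^n\to\R$ is $\mu$-strongly convex if and only if $\nabla^2 h(\f) \succeq \mu I_n$ for every $\f$, where $\succeq$ denotes the Loewner (positive semidefinite) order. Since the Ridge objective $J(\f;\D) = \frac{1}{T_0}\|\bm{y_{pre}} - X_{pre}^\top \f\|_2^2 + \frac{\lambda}{2T_0}\|\f\|_2^2$ is a quadratic in $\f$, its Hessian is constant in $\f$, so it suffices to compute this single matrix and exhibit the claimed lower bound $\frac{\lambda}{T_0} I_n$.

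First I would split $J$ into its data-fidelity term $\cL(\f;X_{pre},\bm{y_{pre}}) = \frac{1}{T_0}\|\bm{y_{pre}} - X_{pre}^\top\f\|_2^2$ and its regularizer $r(\f) = \frac{\lambda}{2T_0}\|\f\|_2^2$, and differentiate each twice. Expanding the data term as $\frac{1}{T_0}\bigl(\bm{y_{pre}}^\top\bm{y_{pre}} - 2\f^\top X_{pre}\bm{y_{pre}} + \f^\top X_{pre} X_{pre}^\top \f\bigr)$ shows it is quadratic with Hessian $\frac{2}{T_0} X_{pre} X_{pre}^\top$. This is a (scaled) Gram matrix, hence positive semidefinite, so the data term alone is convex but contributes nothing to the strong-convexity constant. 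The regularizer has gradient $\frac{\lambda}{T_0}\f$ and Hessian $\frac{\lambda}{T_0} I_n$. Adding the two gives $\nabla^2 J = \frac{2}{T_0} X_{pre} X_{pre}^\top + \frac{\lambda}{T_0} I_n$.

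It then remains only to read off the bound: since $X_{pre} X_{pre}^\top \succeq 0$, we have $\nabla^2 J \succeq \frac{\lambda}{T_0} I_n$, which by the characterization above certifies that $J$ is $\frac{\lambda}{T_0}$-strongly convex. Honestly there is no real obstacle here — the entire content is that the quadratic data term has a positive semidefinite Hessian while the $\ell_2$ regularizer contributes exactly $\frac{\lambda}{T_0} I_n$; the factor of two arising from differentiating the quadratic cancels the $1/2$ in the coefficient $\frac{\lambda}{2T_0}$, which is the only place one might slip on a constant. If one prefers to avoid invoking twice-differentiability, I would instead verify the definition directly: with $\mu = \frac{\lambda}{T_0}$ we have $\frac{\mu}{2}\|\f\|_2^2 = r(\f)$, so $J(\f) - \frac{\mu}{2}\|\f\|_2^2 = \cL(\f)$, which is convex because its Hessian $\frac{2}{T_0}X_{pre}X_{pre}^\top$ is positive semidefinite. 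This is an equivalent route to the same conclusion and dispenses with any smoothness assumption beyond convexity of $\cL$.
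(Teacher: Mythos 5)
Your proof is correct. Note that the paper itself contains no proof of this statement --- it is imported verbatim as Lemma 32 of \cite{truthful} --- so there is no in-paper argument to compare against; your derivation is the standard one that underlies the cited result and would make the paper self-contained on this point. The Hessian computation is right, including the one place constants could slip: the regularizer $\frac{\lambda}{2T_0}\|\f\|_2^2$ contributes $\frac{\lambda}{T_0}I_n$ after the factor of two from differentiating the quadratic, while the data term contributes the positive semidefinite Gram matrix $\frac{2}{T_0}X_{pre}X_{pre}^{\top}$, giving $\nabla^2 J \succeq \frac{\lambda}{T_0}I_n$ as claimed. Your second, definition-based route --- observing that $J(\f) - \frac{\lambda}{2T_0}\|\f\|_2^2 = \cL(\f)$ is convex --- is arguably the cleaner of the two, and it has a further advantage in context: the paper needs strong convexity of both $G(\f)$ and $G(\f)+g(\f)$ to invoke Lemma \ref{lem.cms}, and your argument applies verbatim to the neighboring database's objective (replacing $X_{pre}$ by $X'_{pre}$), since the quadratic data term is a scaled Gram matrix for any data matrix.
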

Thus by Lemma \ref{lem.cms}, the sensitivity $\Delta \f^{reg} = \max_{\mathcal{D},\mathcal{D}' \text{ neighbors}} \|\f_1 - \f_2\|_2 \leq \frac{T_0}{\lambda}\max_{\f} \| \nabla g(\f)\|_2$.  All that remains is to bound $\| \nabla g(\f)\|_2$. A proof of the following lemma is deferred to Appendix \ref{app.proofs}.

\begin{restatable}{lemma}{goff}\label{lem.goff}
Let $g(\f) = \cL(\f,\mathcal{D'})
- \cL(\f,\mathcal{D})$ for two arbitrarily neighboring databases $\D,\D'$. Then,
\[\max_{\f} \| \nabla g(\f)\| \leq 4\sqrt{8 + n}.\]
\end{restatable}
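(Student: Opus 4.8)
The plan is to compute $\nabla g(\f)$ in closed form and then bound its Euclidean norm directly, exploiting the fact that neighboring databases differ in a single donor row.

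First I would write out the gradient. Since $\cL(\f,\D) = \frac{1}{T_0}\|\bm{y_{pre}} - X_{pre}^\top\f\|_2^2$, routine differentiation gives $\nabla\cL(\f,\D) = \frac{2}{T_0}(X_{pre}X_{pre}^\top\f - X_{pre}\bm{y_{pre}})$, so that, abbreviating $X = X_{pre}$ and $X' = X'_{pre}$,
\[
\nabla g(\f) = \frac{2}{T_0}\left[(X'X'^\top - XX^\top)\f - (X' - X)\bm{y_{pre}}\right].
\]
Because $\D$ and $\D'$ differ in exactly one donor (say row $j$), I would write $X' = X + \bm{e_j}\bm{\delta}^\top$, where $\bm{e_j}\in\R^n$ is the $j$-th standard basis vector and $\bm{\delta} = \bm{a'_j}-\bm{a_j}\in\R^{T_0}$ is the difference of the two versions of row $j$ (with $\bm{a_k}\in\R^{T_0}$ denoting the $k$-th row of $X$). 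Expanding the outer products yields the rank-structured identity $X'X'^\top - XX^\top = \bm{w}\bm{e_j}^\top + \bm{e_j}\bm{w}^\top + \|\bm{\delta}\|_2^2\,\bm{e_j}\bm{e_j}^\top$ with $\bm{w} := X\bm{\delta}$ (so $w_k = \bm{a_k}^\top\bm{\delta}$), and $(X'-X)\bm{y_{pre}} = (\bm{\delta}^\top\bm{y_{pre}})\bm{e_j}$. The upshot is that $\nabla g(\f)$ has a very concentrated structure: every coordinate $k\neq j$ equals $\frac{2f_j}{T_0}w_k$, while the $j$-th coordinate is a single scalar.

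Next I would bound the two parts separately using the model assumptions in Equation \eqref{eq.assump}. Since all entries of $X$ and $\bm{y}$ lie in $[-1,1]$, each entry of $\bm{\delta}$ lies in $[-2,2]$, giving $|w_k| = |\bm{a_k}^\top\bm{\delta}| \le 2T_0$, $\|\bm{\delta}\|_2^2 \le 4T_0$, and $|\bm{\delta}^\top\bm{y_{pre}}|\le 2T_0$. The coordinates $k\ne j$ then contribute $\sum_{k\ne j}\left(\frac{2f_j}{T_0}w_k\right)^2 \le \frac{4}{T_0^2}(n-1)(2T_0)^2 = 16(n-1)$, using $|f_j|\le\|\f\|_1\le 1$; this is the only term carrying the dimension $n$. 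For the $j$-th coordinate I would use the $\ell_1$ bound on $\f$ through Hölder's inequality ($|\bm{w}^\top\f|\le\|\bm{w}\|_\infty\|\f\|_1 \le 2T_0$) so that no extra factor of $\sqrt{n}$ appears, together with the key cancellation $2w_j + \|\bm{\delta}\|_2^2 = \|\bm{a'_j}\|_2^2 - \|\bm{a_j}\|_2^2$, whose absolute value is at most $T_0$. This shows the $j$-th coordinate is bounded by an absolute constant independent of $n$. Summing the two contributions gives $\|\nabla g(\f)\|_2^2 \le 16(n-1) + C$ for an absolute constant $C$; bounding $C\le 144$ yields $\|\nabla g(\f)\|_2^2 \le 16(n+8)$, i.e. $\|\nabla g(\f)\|_2 \le 4\sqrt{8+n}$.

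The main obstacle is the $j$-th coordinate. A naive triangle-inequality bound of its constituent terms lets the diagonal contribution $f_j(2w_j + \|\bm{\delta}\|_2^2)$ and the inner product $\bm{w}^\top\f$ scale with the dimension, either of which would ruin the clean $\sqrt{8+n}$ dependence. The two ideas that prevent this are (i) recognizing the algebraic cancellation that collapses $2w_j + \|\bm{\delta}\|_2^2$ into the difference of squared row norms (hence $O(T_0)$ rather than dimension-dependent), and (ii) invoking $\|\f\|_1\le 1$ rather than $\|\f\|_2$, so that Hölder keeps every linear-in-$\f$ term at the scale of $\|\bm{w}\|_\infty = O(T_0)$. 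I would also note at the outset that the maximization in the statement is over the feasible set $\|\f\|_1\le 1$ from Equation \eqref{eq.assump}; over all of $\R^n$ the affine map $\nabla g$ is unbounded, so this constraint is precisely what makes the claim finite and is exactly where the $\ell_1$ assumption enters.
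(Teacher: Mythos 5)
Your proof is correct, and while the overall strategy (explicit computation of $\nabla g$ plus coordinate-wise bounding under Equation \eqref{eq.assump}) matches the paper's, your bookkeeping is genuinely different. The paper never forms the rank-one update $X'=X+\bm{e_j}\bm{\delta}^\top$: it expands $g(\f)$ as a per-time-step scalar sum, differentiates coordinate by coordinate, and then applies Jensen's inequality over $t$ to push squares inside $\frac{1}{T_0}\sum_t$; because every factor in each summand is then $O(1)$ by Equation \eqref{eq.assump}, no cancellation identity is needed there, and the dimension enters through the per-time term $\sum_{k}({x'}_{k,t})^2f_j^2\le n$. Your time-aggregated version (via $\bm{w}=X\bm{\delta}$) avoids Jensen altogether and makes the origin of the $\sqrt{n}$ crisper --- it is exactly the $n-1$ unchanged coordinates, each at most $\frac{2}{T_0}|f_j||w_k|\le 4$, contributing $16(n-1)$ --- but the price is that the changed coordinate mixes quantities that are individually $O(T_0)$: a purely naive triangle inequality gives $20$ for that coordinate, hence $400$ for its square, blowing the budget of $16\cdot 9=144$, so the cancellation $2w_j+\|\bm{\delta}\|_2^2=\|\bm{a}'_j\|_2^2-\|\bm{a}_j\|_2^2$ (or the coarser grouping $w_j+\|\bm{\delta}\|_2^2=\bm{\delta}^\top\bm{a}'_j$, bounded by $2T_0$) is exactly what rescues the constant, as you correctly identify. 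One small inconsistency to fix: to use the $2w_j$ cancellation you must peel the $k=j$ term out of $\bm{w}^\top\f$ and apply H\"older only to $\sum_{k\neq j}w_kf_k$, whereas your parenthetical applies it to the full inner product; the fix is immediate since $\sum_{k\neq j}|f_k|\le\|\f\|_1\le 1$, and your budget then closes as $16(n-1)+100\le 16(n+8)$, in fact slightly tighter than the paper's $128+16n$. Finally, your remark that the maximization must be over the feasible set $\|\f\|_1\le 1$ is a point of precision the paper leaves implicit --- its proof likewise invokes Equation \eqref{eq.assump} --- and you are right that $\nabla g$ is affine, hence unbounded over all of $\R^n$.
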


\begin{remark}\label{rem.compare}
If we were instead considering simple linear regression in the classical setting (i.e., as in \cite{CMS11}) using $T_0$ data points with $n$ dimensional features, $g(\f)$ would only contain one term in the difference between the losses, namely, the one data point ($\bm{x_i}, y_i$) that differs across the two neighboring databases. This yields
\[
g(\f) = \frac{1}{T_0}((\bm{x_i}' -\bm{x_i})^{\top} \f - (y_i'-y_i))^2
\]
with gradient
\[
\nabla g(\f) = \frac{2}{T_0}((\bm{x_i}'-\bm{x_i})(\bm{x_i}'-\bm{x_i})^{\top} \f - (y_i'-y_i) (\bm{x_i}'-\bm{x_i})),
\]
which can be bounded by $O(\frac{1}{T_0})$.
This result does not depend on the dimension of the features ($n$) and only depends on the number of data points ($T_0$).

However, in synthetic control, terms do not cancel as neatly across neighboring databases, and instead, 
\[ g(\f) =\frac{1}{T_0}\sum_{t=1}^{T_0} \left[
        \left( \bm{x_t}^{\top} \f -y_t \right) +
        \left( \bm{x_t}^{\top} \f  -x_{i,t}f_i + x'_{i,t}f_i -y_t \right)
        \right](x'_{i,t}-x_{i,t})f_i. \]
Through a more involved analysis of this expression, we get the bound of Lemma \ref{lem.goff}, which depends on $n$, rather than $T_0$.
\end{remark}

Using these lemmas, we can now bound the sensitivity of our query, to complete the proof of Lemma \ref{lem.sens}.

\begin{equation}\label{eq.sens.f}
    \Delta \f^{reg} =  \max_{\mathcal{D}, \mathcal{D'} \mbox{ neighbors}} || \f(\mathcal{D}) - \f(\mathcal{D'}) ||_2  \leq \max || \bm{f_1} -\bm{f_2} || \leq \frac{4 T_0 \sqrt{8 + n}}{ \lambda}.
\end{equation}

Theorem \ref{step1.privacy} then follows from the privacy guarantee of the high-dimensional Laplace Mechanism instantiated with the appropriate sensitivity value.

\subsubsection{Dependence on $n$}\label{s.ndepend}

One might wonder whether the asymptotic dependence on $n$ and $T_0$ in the sensitivity is necessary. In practice, one should set $\lambda=O(T_0)$ (as discussed in greater detail in Section \ref{s.privacycost}), so the dependence on $T_0$ will not affect the accuracy of the algorithm.  However, as we show next in Lemma \ref{lem.sens.lowerbnd}, the dependence on $n$ is asymptotically tight.

\begin{lemma}\label{lem.sens.lowerbnd}
The $\ell_2$ sensitivity of $\f^{reg}$ is $\Delta \f^{reg} = \Omega(\sqrt{n})$.
\end{lemma}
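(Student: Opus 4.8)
The plan is to prove this $\Omega(\sqrt{n})$ lower bound by exhibiting an explicit family of neighboring databases on which $\f^{reg}$ moves by $\Omega(\sqrt{n})$; since $\Delta \f^{reg}$ is a supremum over all neighboring pairs, one such family suffices. Concretely, I would fix $\lambda$ to be a positive constant, set $T_0 = n$ (with $T$ any value exceeding $T_0$), take the target $\bm{y_{pre}} = \mathbf{1}$, let $X_{pre} = I_n$, and let $X'_{pre}$ be $I_n$ with its first row replaced by $\mathbf{1}$. All entries lie in $\{0,1\}\subseteq[-1,1]$, and $X_{pre}, X'_{pre}$ differ only in the first donor row, so this is a valid neighboring pair under \eqref{eq.assump}. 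It is worth noting at the outset that growing $T_0$ with $n$ is unavoidable here: from the closed form $\f^{reg} = (X_{pre}X_{pre}^\top + \tfrac{\lambda}{2} I)^{-1}X_{pre}\,\bm{y_{pre}}$ together with $|y_t|,|x_{i,t}|\le 1$, a short spectral argument gives $\|\f^{reg}\|_2 \le \sqrt{T_0/(2\lambda)}$, so any instance with sensitivity $\Omega(\sqrt{n})$ must satisfy $T_0 = \Omega(n)$.

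Next I would evaluate the two coefficient vectors. For $X_{pre} = I_n$ the closed form collapses to $\f^{reg}(\D) = (I + \tfrac{\lambda}{2} I)^{-1}\bm{y_{pre}} = \tfrac{1}{1+\lambda/2}\mathbf{1}$, so that $\|\f^{reg}(\D)\|_2 = \tfrac{\sqrt{n}}{1+\lambda/2} = \Omega(\sqrt{n})$ for constant $\lambda$. For the neighbor $\D'$, rather than invert the (now dense) normal equations, I would exploit optimality against an explicit competitor: since the first donor row of $X'_{pre}$ equals the target $\mathbf{1}$, the weight vector $e_1$ yields a perfect fit $(X'_{pre})^\top e_1 = \mathbf{1} = \bm{y_{pre}}$, hence $J(e_1;\D') = \tfrac{\lambda}{2T_0}\|e_1\|_2^2 = \tfrac{\lambda}{2T_0}$. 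Because $\f^{reg}(\D')$ minimizes $J(\cdot;\D')$ and the empirical loss term is nonnegative, $\tfrac{\lambda}{2T_0}\|\f^{reg}(\D')\|_2^2 \le J(\f^{reg}(\D');\D') \le J(e_1;\D') = \tfrac{\lambda}{2T_0}$, which gives $\|\f^{reg}(\D')\|_2 \le 1$. The triangle inequality then closes the argument: $\Delta\f^{reg} \ge \|\f^{reg}(\D) - \f^{reg}(\D')\|_2 \ge \|\f^{reg}(\D)\|_2 - \|\f^{reg}(\D')\|_2 \ge \tfrac{\sqrt{n}}{1+\lambda/2} - 1 = \Omega(\sqrt{n})$.

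The main obstacle is not the calculation but identifying the right neighbor, and the governing intuition I would lean on is a collapse of the regression weights: the identity design has no single donor resembling the target, so Ridge spreads weight across all $n$ donors and produces a vector of norm $\Theta(\sqrt{n})$; inserting one donor identical to the target makes it optimal to concentrate essentially all weight on that donor, producing a vector of norm $O(1)$. The one mildly nonstandard move is bounding $\|\f^{reg}(\D')\|_2$ through the optimality of the minimizer against the hand-picked competitor $e_1$, which sidesteps solving the perturbed normal equations directly; everything else is a one-line closed-form evaluation followed by a triangle inequality.
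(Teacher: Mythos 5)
Your proof is correct and establishes the lemma as stated, but by a genuinely different construction and argument than the paper's. The paper also exhibits a single hard pair with $T_0=n$, but its witness is the matrix with all entries $1/n$ except an all-ones first row, with the neighbor zeroing that row out; it then computes \emph{both} minimizers explicitly from the closed form and evaluates $\|\f^{reg}-{\f^{reg}}'\|_2=\Theta(\sqrt{n})$ coordinate by coordinate. You instead take $X_{pre}=I_n$ with the neighbor cloning the target $\mathbf{1}$ into the first donor row, compute only the trivial minimizer $\frac{1}{1+\lambda/2}\mathbf{1}$ (norm $\frac{\sqrt{n}}{1+\lambda/2}$), and bound the other minimizer's norm by $1$ indirectly, via optimality against the hand-picked competitor $e_1$: since $(X'_{pre})^\top e_1=\bm{y_{pre}}$ exactly, $\frac{\lambda}{2T_0}\|\f^{reg}(\D')\|_2^2\le J(\f^{reg}(\D');\D')\le J(e_1;\D')=\frac{\lambda}{2T_0}$, and the reverse triangle inequality finishes. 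Your route avoids inverting the perturbed Gram matrix entirely and makes the mechanism (weight spreading versus weight collapse onto a perfect donor) transparent; the paper's route yields exact coordinates of both minimizers at the cost of a denser calculation.

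One substantive difference deserves comment: your witness requires $\lambda=O(1)$ alongside $T_0=n$, whereas the paper claims to work at $\lambda=2T_0$, i.e., the regime recommended in Section \ref{s.privacycost}, where the upper bound of Lemma \ref{lem.sens} is $\Theta(\sqrt{n})$ and a matching lower bound would certify tightness. This is not a defect of your proof relative to the paper's. The minimizer of the loss \eqref{eq.scloss} is $(X_{pre}X_{pre}^\top+\frac{\lambda}{2}I)^{-1}X_{pre}\bm{y_{pre}}$ (the form you use), so the closed form $(XX^\top+I)^{-1}X\bm{y}$ that the paper actually computes with corresponds to $\lambda=2$, not $\lambda=2T_0$; the paper's witness therefore lives in the same constant-regularization regime as yours, with the discrepancy hidden by the mismatch between the stated $\lambda$ and the computed closed form. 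Indeed, your preliminary spectral bound $\|\f^{reg}\|_2\le\sqrt{T_0/(2\lambda)}$ shows this is forced: any pair of databases witnessing an $\Omega(\sqrt{n})$ separation must satisfy $T_0/\lambda=\Omega(n)$, so no witness of this kind can exist at $\lambda=\Theta(T_0)$. Your explicit accounting of the parameter regime is thus a strength --- it proves the lemma as literally stated while making visible a constraint that the paper's proof obscures.
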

\begin{proof}
Consider two neighboring databases $(X,y)$ and $(X',y)$, where $\bm{y} = \bm{1} \in \mathbb{R}^{T_0}$, $X \in \mathbb{R}^{n \times T_0}$ has all entries $1/n$, except the first row, which is all 1s. Neighboring database $X'$ differs from $X$ only in the first row, which is instead all 0s, and all other entries and $1/n$. The dimensions in this example are chosen to be $T_0=n$, and we choose $\lambda=2T_0$, so that the regularization coefficient is $1$.

Computing the minimizers of the loss functions under each neighboring database using the closed-form expression yields $\f^{reg}=(XX^\top + I)^{-1}X\bm{y}$ with first coordinate equal to $\frac{n^2}{n^2+2n-1}$, and all other coordinates are $\frac{n}{n^2+2n-1}$, and ${\f^{reg}}'=({X'X'}^\top + I)^{-1}X'\bm{y}$ with first coordinate $0$ and all other coordinates $\frac{-n}{1-2n}$. This yields $\ell_2$ difference of,
\[
||\f^{reg}-{\f^{reg}}'||_2 = \sqrt{ \left( \frac{n^2}{n^2+2n-1} \right)^2 + (n-1) \left( \frac{n^3}{(n^2+2n-1)(1-2n)} \right)^2} = \Theta(\sqrt{n}).
\]
Since we have a pair of neighboring databases with $\ell_2$ distance in their output of $\Theta(\sqrt{n})$, then the sensitivity of $\f^{reg}$ cannot be $o(\sqrt{n})$.
\end{proof}

\begin{remark}
We note that while the example in Lemma \ref{lem.sens.lowerbnd} is mathematically valid, such a degenerate case where all the donors are identical except for one person and the (exact) rank of the donor matrix is 1 is unlikely to happen in practical settings. Thus suggests that with additional domain knowledge on the selection criteria for donors, practitioners may be able to reduce the sensitivity and thus add less noise for privacy in special restricted cases of interest.
\end{remark}

\subsection{Privacy of $\tilde{X}_{post}$ and ${\emph{\textbf{y}}}^{out}$}

Next we move to privacy of $\Tilde{X}_{post}$ and its role in ensuring privacy of $\bm{y}^{out}$.

\begin{lemma}\label{step2.privacy}
The computation of $\Tilde{X}_{post}$ in Step 2 of Algorithm \ref{alg.output} is ($\eps_2,0$)-differentially private.
\end{lemma}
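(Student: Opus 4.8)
The plan is to recognize that the computation of $\Tilde{X}_{post} = X_{post} + W$ is exactly an instance of the high-dimensional Laplace Mechanism from Section~\ref{s.dp}, applied to the matrix-valued query $X_{post}$ viewed as a vector in $\R^{n(T-T_0)}$, with the Frobenius norm $\|\cdot\|_F$ playing the role of the $\ell_2$ norm. The privacy guarantee of that mechanism reduces the lemma to a single sensitivity computation: I need to verify that the noise parameter $b = \frac{2\sqrt{T-T_0}}{\eps_2}$ set by the algorithm equals $\Delta X_{post}/\eps_2$, i.e., that the Frobenius-norm sensitivity of the query $X_{post}$ satisfies $\Delta X_{post} \leq 2\sqrt{T-T_0}$.

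To bound $\Delta X_{post}$, I would invoke the neighboring relation from Section~\ref{s.dp}: two databases $D = (X,\bm{y})$ and $D' = (X',\bm{y})$ are neighbors exactly when $X$ and $X'$ differ in at most one row. If they differ in row $i$, then $X_{post}$ and $X'_{post}$ agree everywhere except in the $i$-th row, which spans the $T-T_0$ post-intervention columns. Hence the difference matrix $X_{post} - X'_{post}$ is supported on that single row, and
\[
\|X_{post} - X'_{post}\|_F = \sqrt{\sum_{t=T_0+1}^{T} (x_{i,t} - x'_{i,t})^2}.
\]
Because every entry of $X$ lies in $[-1,1]$ by the boundedness assumption of Equation~\eqref{eq.assump}, each term satisfies $|x_{i,t} - x'_{i,t}| \leq 2$, so the sum has at most $T-T_0$ terms each bounded by $4$, giving $\|X_{post} - X'_{post}\|_F \leq 2\sqrt{T-T_0}$. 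Taking the maximum over all neighboring pairs yields $\Delta X_{post} \leq 2\sqrt{T-T_0}$, matching the algorithm's choice of $b$.

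With the sensitivity in hand, I would appeal to the privacy proof of the high-dimensional Laplace Mechanism directly. For any fixed output matrix $Z$, the ratio of the output densities at $Z$ under neighbors $D$ and $D'$ is $\exp\!\big((\|Z - X'_{post}\|_F - \|Z - X_{post}\|_F)/b\big)$, which by the triangle inequality is at most $\exp\!\big(\|X_{post} - X'_{post}\|_F / b\big) \leq \exp(\Delta X_{post}/b) = \exp(\eps_2)$. Integrating this pointwise density ratio over any measurable set $\mathcal{S} \subseteq \R^{n \times (T-T_0)}$ then gives the $(\eps_2,0)$-DP guarantee for $\Tilde{X}_{post}$.

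I do not anticipate a serious obstacle here. Unlike the sensitivity analysis of $\f^{reg}$ in Lemma~\ref{lem.sens}, where the loss terms did not cancel cleanly across neighboring databases, the query $X_{post}$ is simply the identity map restricted to the post-intervention block, so its sensitivity follows immediately from the one-row difference together with the entrywise bound. The only point that warrants a line of care is confirming that the algorithm's ``each entry i.i.d.'' sampling description is to be read as drawing the entire matrix $W$ from the Frobenius-norm high-dimensional Laplace density $p(W;b)\propto \exp(-\|W\|_F/b)$, so that the triangle-inequality argument applies to $\|\cdot\|_F$ rather than to a per-entry $\ell_1$ bound; this is the correct instantiation of the mechanism from Section~\ref{s.dp} and is what makes the sensitivity value $2\sqrt{T-T_0}$ the right one.
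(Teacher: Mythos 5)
Your proposal is correct and matches the paper's proof: both reduce the lemma to the high-dimensional Laplace Mechanism by identifying the Frobenius norm of the matrix $X_{post}$ with the $\ell_2$ norm of its flattened vector, and both bound the sensitivity by $2\sqrt{T-T_0}$ via the observation that neighboring databases change only one donor row, i.e., at most $T-T_0$ entries each by at most $2$ (the paper's Lemma \ref{lem.sens2}). Your additional explicit density-ratio argument and the clarification of how the sampling of $W$ should be read are just details the paper leaves to the cited mechanism, not a different route.
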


$\Tilde{X}_{post}$ is privatized through a simple application of the Laplace Mechanism of \cite{DMNS06}. Thus to prove Lemma \ref{step2.privacy}, we need only to bound the sensitivity of $X_{post}$ to show that the algorithm adds sufficient noise.
We first note that the Frobenius norm of a matrix $X \in \mathbb{R}^{n \times (T-T_0)}$ is equal to the $\ell_2$ norm of the equivalent flattened vector $X \in \mathbb{R}^{n (T-T_0)}$ \cite{HJ12}. Thus implementing the matrix-valued Laplace Mechanism with noise parameter calibrated to the $\ell_2$ sensitivity of the flattened matrix-valued query over $\eps$ will ensure $(\eps,0)$-differential privacy.

\begin{lemma}\label{lem.sens2}
The $\ell_2$ sensitivity of flattened ${X}_{post}$ is $2\sqrt{(T-T_0)}$.
\end{lemma}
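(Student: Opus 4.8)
The plan is to bound the Frobenius norm of the difference $X_{post} - X'_{post}$ directly over all neighboring pairs, using the neighboring relation together with the boundedness assumption \eqref{eq.assump}. Recall that in our setting $D = (X, \bm{y})$ and $D' = (X', \bm{y})$ are neighboring precisely when $X$ and $X'$ differ in at most one row. Since $X_{post}$ is just the last $T - T_0$ columns of $X$, the matrices $X_{post}$ and $X'_{post}$ can therefore differ in at most one row as well, and that row contributes exactly $T - T_0$ entries. My first step would be to fix the single differing row, say row $i$, so that the difference is supported only on the entries $\{ x_{i,t} - x'_{i,t} : t = T_0+1, \ldots, T \}$, reducing the matrix-valued question to a count over one row.

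Next I would invoke the boundedness assumption, which guarantees $|x_{i,t}| \le 1$ and $|x'_{i,t}| \le 1$, so each coordinate-wise difference satisfies $|x_{i,t} - x'_{i,t}| \le 2$. Summing the squared differences over the $T - T_0$ affected entries and taking a square root then yields
\[
\| X_{post} - X'_{post} \|_F = \sqrt{ \sum_{t=T_0+1}^{T} (x_{i,t} - x'_{i,t})^2 } \le \sqrt{ (T - T_0)\cdot 2^2 } = 2\sqrt{T - T_0}.
\]
Because the Frobenius norm of a matrix equals the $\ell_2$ norm of its flattening (as already noted in the excerpt preceding the lemma), this is exactly the desired $\ell_2$ sensitivity bound, and it is this upper bound that the noise parameter $b = 2\sqrt{T-T_0}/\eps_2$ is calibrated against for privacy.

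To confirm that the sensitivity equals $2\sqrt{T-T_0}$ rather than merely being at most this value, I would exhibit a witnessing neighboring pair: take $X$ and $X'$ identical except in row $i$, whose post-intervention entries are all $+1$ in $X$ and all $-1$ in $X'$. This pair is neighboring, respects \eqref{eq.assump}, and attains Frobenius difference exactly $2\sqrt{T-T_0}$, so the maximum in the definition of $\ell_2$ sensitivity is achieved. I do not expect a real obstacle here—the argument is essentially a one-line calculation once the neighboring relation and entry-wise range are combined. The only points requiring care are restricting the count to the $T - T_0$ post-intervention entries (rather than all $T$) and using the correct per-entry range of $2$, since two entries each lying in $[-1,1]$ can differ by as much as $2$.
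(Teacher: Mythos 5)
Your proof is correct and follows essentially the same argument as the paper's: one differing donor row affects at most $T-T_0$ entries of $X_{post}$, each bounded-range entry changes by at most $2$, and the flattened $\ell_2$ (Frobenius) norm difference is thus at most $2\sqrt{T-T_0}$. Your added witnessing pair (a row of all $+1$'s versus all $-1$'s) showing the bound is attained is a small improvement over the paper, whose proof establishes only the upper bound despite the lemma asserting equality.
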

\begin{proof}
Changing one donor unit in $X_{post}$ can change at most $T-T_0$ entries in the matrix. Since all entries in $X_{post}$ are bounded in $[-1,1]$, each data point can change by at most $2$ between two neighboring databases. Thus viewing $X_{post}$ as a flattened matrix, this will change the $\ell_2$-norm of $X_{post}$ by at most $2\sqrt{(T-T_0)}$.
\end{proof}

Finally, we can combine Theorem \ref{step1.privacy} and Lemma \ref{step2.privacy} to complete the proof of Theorem \ref{thm.priv.out}. The estimates $\f^{out}$ and $\tilde{X}_{post}$ are together $(\eps_1 + \eps_2,0)$-differentially private by DP composition, and then $\bm{y}^{out}$ is $(\eps_1 + \eps_2,0)$-differentially private by post-processing. We note that if one wanted to publish $\f^{out}$, this would not incur any additional privacy loss.

\section{Accuracy Guarantees of $DPSC_{out}$}\label{s.accuracy}

In this section we will analyze the accuracy of $DPSC_{out}$. We first prove Theorem \ref{thm.acc.out}, restated below for convenience.

\mainacc*

This theorem gives bounds on the predicted post-intervention target vector $\bm{{y}^{out}}$, as measured by RMSE.
This result is stated in full generality with respect to the distribution of data and the latent variables, and thus the bound depends on terms such as $||M_{post}||_{2,2}$ and $\mathbb{E}[||\f^{reg}-\f||_2]$. This is consistent with comparable bounds on the RMSE of robust synthetic control \cite{rsc} which also depended on these terms (although the stated bounds of \cite{rsc} suppress dependence on $n$).
Section \ref{s.accpost.out} provides a proof of this main result, with omitted detailed deferred to Appendix \ref{app.proofs}.

Analysts may still wonder about the full asymptotic performance of $DPSC_{out}$ algorithm. To this end, in Section \ref{s.accdist}, we additionally derive closed-form bounds for these distribution-dependent terms (under some mild assumptions).  We present Corollary \ref{cor.accuracy}, which gives a bound on RMSE of $\bm{y}^{out}$ that depends only on input parameters of the algorithm and the model.

\subsection{Accuracy of post-intervention prediction $\emph{y}^{out}$}\label{s.accpost.out}

We will prove Theorem \ref{thm.acc.out} by showing that the prediction vector $\bm{y}^{out}$ output by $DPSC_{out}$ in Algorithm \ref{alg.output} is close to the true values, as measured by Root Mean Squared Error (RMSE), defined as follows:
\begin{equation}\label{eq.rmse}
RMSE(\bm{y}^{out}) = \frac{1}{\sqrt{T-T_0}}\mathbb{E}[||\bm{y}^{out}-\bm{m}_{post}||_2].
\end{equation}

We note that while it may seem most natural to bound the difference between $\bm{y}^{out}$ and $\bm{y}_{post}$, we instead use $\bm{m}_{post}$ for two reasons. Firstly, $\bm{y}_{post}$ may not even match $\bm{y}_{pre}$ due to the intervention. Secondly, $\bm{m}_{post}$ captures the true signal that we are trying to estimate, which is the counterfactual outcome without the intervention.

We begin by bounding the expected $\ell_2$ difference between $\bm{y}^{out}$ and $\bm{m}_{post}$. Using the fact that 
\[\bm{y}^{out} = \tilde{X}_{post}^{\top} \bm{f}^{out} = (X_{post}^{\top} + W^{\top})(\f^{reg} + \bm{v}),\]
and that $X_{post} = M_{post}+Z_{post}$ (by Equation \eqref{eq.noise}) and $\bm{m} = M_{post}^{\top}\f$ (by Equation \eqref{eq.model}), we can expand the expectation as follows:
\begin{align}\label{eq.accuracy}
    \mathbb{E}[||\bm{y}^{out}-\bm{m}_{post}||_2]
    &= \mathbb{E}[|| (X_{post}^{\top} + W^{\top})(\f^{reg} + \bm{v}) - M_{post}^{\top}\f ||_2]\notag\\
    &= \mathbb{E}[|| (M_{post}^{\top} +Z_{post}^{\top} + W^{\top})(\f^{reg} + \bm{v}) - M_{post}^{\top}\f ||_2]\notag\\
    &\leq \mathbb{E}[|| M_{post}^{\top}(\f^{reg} -\f)||_2 +||(Z_{post}^{\top} + W^{\top}) \f^{reg} ||_2 +  ||(M_{post}^{\top}+Z_{post}^{\top} + W^{\top})\bm{v}||_2 ]\notag\\
    &= \mathbb{E}[|| M_{post}^{\top}(\f^{reg} -\f)||_2] +\mathbb{E}[||(Z_{post}^{\top} + W^{\top}) \f^{reg} ||_2] +  \mathbb{E}[||(M_{post}^{\top}+Z_{post}^{\top} + W^{\top})\bm{v}||_2 ]
\end{align}

We next proceed to bound each of the terms in Equation \eqref{eq.accuracy} separately, making use of the following submultiplicative norm property, which holds for any matrix $A$ and vector $\bm{x}$:
\begin{equation}\label{eq.normbounds}
||A\bm{x}||_2 \leq ||A||_2 ||\bm{x}||_2 \leq ||A||_F ||\bm{x}||_2,
\end{equation}
where $||A||_2 = ||A||_{2,2}$ is the spectral norm of $A$, $||A||_F$ is the Frobenius norm of $A$, and $||\bm{x}||_2$ is the $\ell_2$ norm of $\bm{x}$.

We also know the distribution of the norms of noise terms $\bm{v}$ and $W$ that were added to preserve privacy, because they were constructed explicitly within Algorithm \ref{alg.output}:
\begin{equation}\label{eq.noisebounds}
\mathbb{E}[||\bm{v}||_2] = \frac{4T_0\sqrt{8+n}}{\lambda \eps_1} \quad \text{ and } \quad \mathbb{E}[||W||_F] = b = \frac{2\sqrt{T-T_0}}{\eps_2}.
\end{equation}

Using these facts, we can obtain bounds for the three terms in Equation \eqref{eq.accuracy}. A complete proof of Lemma \ref{lem.threebounds} can be found in Appendix \ref{app.proofthreebounds}.

\begin{restatable}{lemma}{threebounds}\label{lem.threebounds}
The three terms in Equation \eqref{eq.accuracy} can be bounded as follows:
\[
\mathbb{E}[|| M_{post}^{\top}(\f^{reg} -\f)||_2]
\leq ||M_{post}||_{2,2} \cdot \mathbb{E}[||\f^{reg} -\f||_2],
\]
\[
\mathbb{E}[||(Z_{post}^{\top} + W^{\top}) \f^{reg} ||_2]
\leq \sqrt{n} \psi \left(\sqrt{n(T-T_0) \sigma^2} + \frac{2\sqrt{T-T_0}}{\eps_2} \right), \text{ and}
\]
\[
\mathbb{E}[||(M_{post}^{\top}+Z_{post}^{\top} + W^{\top})\bm{v}||_2 ]
\leq \left(||M_{post}||_{2,2} + \sqrt{n(T-T_0) \sigma^2}+\frac{2\sqrt{T-T_0}}{\eps_2}\right) \frac{4T_0\sqrt{8+n}}{\lambda \eps_1}.
\]
\end{restatable}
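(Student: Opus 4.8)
The plan is to bound each of the three terms in Equation \eqref{eq.accuracy} separately, in each case peeling off the vector being multiplied using the submultiplicative inequality \eqref{eq.normbounds}, splitting the remaining matrix via the triangle inequality, and then taking expectations term by term using the noise-norm identities \eqref{eq.noisebounds}. Throughout I would keep the (tighter) spectral norm $||M_{post}||_{2,2}$ on the deterministic signal matrix $M_{post}$, while relaxing the spectral norm of each random matrix ($Z_{post}$, $W$) to its Frobenius norm, since those are the quantities whose expectations I can compute or read off in closed form.

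For the first term, since $M_{post}$ is deterministic and the spectral norm is invariant under transpose, \eqref{eq.normbounds} gives $||M_{post}^{\top}(\f^{reg}-\f)||_2 \le ||M_{post}||_{2,2}\,||\f^{reg}-\f||_2$ pointwise, and pulling the constant $||M_{post}||_{2,2}$ out of the expectation yields the claimed bound immediately. For the second term, I would apply \eqref{eq.normbounds} to factor out $\f^{reg}$ as $||(Z_{post}^{\top}+W^{\top})\f^{reg}||_2 \le ||Z_{post}^{\top}+W^{\top}||_F\,||\f^{reg}||_2$, and then use the deterministic bound $||\f^{reg}||_2 \le \sqrt{n}\,||\f^{reg}||_\infty \le \sqrt{n}\,\psi$ coming from the hypothesis $||\f^{reg}||_\infty \le \psi$. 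Splitting the Frobenius norm by the triangle inequality and using linearity of expectation reduces the problem to bounding $\mathbb{E}[||Z_{post}||_F]$ and $\mathbb{E}[||W||_F]$; the latter is exactly $\frac{2\sqrt{T-T_0}}{\eps_2}$ by \eqref{eq.noisebounds}, and for the former I would compute the second moment $\mathbb{E}[||Z_{post}||_F^2] = n(T-T_0)\sigma^2$ entrywise and apply Jensen's inequality (concavity of the square root) to get $\mathbb{E}[||Z_{post}||_F] \le \sqrt{n(T-T_0)\sigma^2}$.

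The third term is handled the same way but now factors out the privacy noise $\bm{v}$: by \eqref{eq.normbounds} and the triangle inequality, $||(M_{post}^{\top}+Z_{post}^{\top}+W^{\top})\bm{v}||_2 \le (||M_{post}||_{2,2} + ||Z_{post}||_F + ||W||_F)\,||\bm{v}||_2$. The step that needs the most care---and which I expect to be the main (if modest) obstacle---is justifying that the expectation of each product factors. Here I would invoke the independence of the fresh privacy noise $\bm{v}$ drawn in Step 1 from the data noise $Z_{post}$ and the prediction-phase privacy noise $W$ drawn in Step 2, so that $\mathbb{E}[||Z_{post}||_F\,||\bm{v}||_2] = \mathbb{E}[||Z_{post}||_F]\,\mathbb{E}[||\bm{v}||_2]$ and likewise for $W$, while $||M_{post}||_{2,2}$ again pulls out as a constant. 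Combining with $\mathbb{E}[||\bm{v}||_2] = \frac{4T_0\sqrt{8+n}}{\lambda\eps_1}$ from \eqref{eq.noisebounds} and the two Frobenius-norm bounds from the previous paragraph gives the stated third bound. The remaining work is purely bookkeeping: verifying the direction of Jensen's inequality and confirming the independence structure among $\bm{v}$, $W$, $Z_{post}$, and $\f^{reg}$ (the last being a function of the pre-intervention data only, hence independent of the post-intervention and privacy noise).
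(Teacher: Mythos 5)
Your proposal is correct and follows essentially the same route as the paper's proof: submultiplicative norm bounds with the spectral norm kept on $M_{post}$ and Frobenius relaxation on the random matrices, the deterministic bound $||\f^{reg}||_2 \leq \sqrt{n}\psi$, triangle inequality, Jensen for $\mathbb{E}[||Z_{post}||_F] \leq \sqrt{n(T-T_0)\sigma^2}$, and mutual independence of $\bm{v}$, $W$, and $Z_{post}$ to factor the expectations in the third term. If anything, you are slightly more explicit than the paper in justifying the Jensen step and the independence structure.
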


Applying the bounds of Lemma \ref{lem.threebounds} to Equation \eqref{eq.accuracy} yields,
\begin{align*}
    \mathbb{E}[||\bm{y}^{out}-\bm{m}||_2]
    &\leq ||M_{post}||_{2,2} \cdot \mathbb{E}[\f^{reg} -\f||_2]  + \sqrt{n} \psi \left(\sqrt{n(T-T_0) \sigma^2} + \frac{2\sqrt{T-T_0}}{\eps_2} \right) \\
    & \quad + \left(||M_{post}||_{2,2} + \sqrt{n(T-T_0) \sigma^2} + \frac{2\sqrt{T-T_0}}{\eps_2}\right) \frac{4T_0\sqrt{8+n}}{\lambda \eps_1}\\
    &\leq ||M_{post}||_{2,2} \left( \mathbb{E}[\f^{reg} -\f||_2]+\frac{4T_0\sqrt{8+n}}{\lambda \eps_1} \right) + \left(\sqrt{n(T-T_0) \sigma^2} + \frac{2\sqrt{T-T_0}}{\eps_2} \right) \left( \sqrt{n} \psi + \frac{4T_0\sqrt{8+n}}{\lambda \eps_1}\right)
\end{align*}
Combining this with Equation \eqref{eq.rmse} gives the desired bound for Theorem \ref{thm.acc.out}:
\[
    RMSE(\bm{y}^{out})
    \leq \frac{||M_{post}||_{2,2}}{\sqrt{T-T_0}} \left( \mathbb{E}[||\f^{reg} -\f||_2]+\frac{4T_0\sqrt{8+n}}{\lambda \eps_1} \right)+ \left(\sqrt{n \sigma^2} + \frac{\sqrt{2}}{\eps_2} \right) \left( \sqrt{n} \psi + \frac{4T_0\sqrt{8+n}}{\lambda \eps_1}\right).
\]

\subsubsection{Cost of privacy in synthetic control}\label{s.privacycost}

To understand the additional error incurred due to privacy, compare the bound of Theorem \ref{thm.priv.out} to the RMSE of the equivalent non-private prediction, $\bm{y}^{reg} = X_{post}^{\top}\f^{reg}$.
\begin{align}\label{eq.rmse.yreg}
    RMSE(\bm{y}^{reg}) &= \frac{1}{\sqrt{T-T_0}} \mathbb{E}[||X_{post}^{\top}\f^{reg} - \bm{m}||_2] \notag\\
    &= \frac{1}{\sqrt{T-T_0}} \mathbb{E}[||(M_{post}^{\top}+Z_{post}^{\top})\f^{reg} - M_{post}^{\top}\f ||_2] \notag\\
    &\leq \frac{1}{\sqrt{T-T_0}} \mathbb{E}[||M_{post}^{\top} (\f^{reg} -\f) ||_2 + ||Z_{post}^{\top} \f^{reg}||_2 ] \notag\\
    &\leq \frac{||M_{post}||_{2,2}}{\sqrt{T-T_0}} \left(\mathbb{E}[||\f^{reg} -\f||_2] \right) + \sqrt{n}\psi \cdot \sqrt{n\sigma^2}
\end{align}
Lemma \ref{lem.estbounds} in the next section shows that $\mathbb{E}[||\f^{reg} -\f||_2] = O(\sqrt{n})$. Then the first term of Equation \eqref{eq.rmse.yreg} can be easily bounded using the following fact,
\[
||M_{post}||_{2,2} \leq ||M_{post}||_F \leq \sqrt{n(T-T_0)}, 
\]
so $\frac{||M_{post}||_{2,2}}{\sqrt{T-T_0}} \leq \sqrt{n}$. Thus we see that $RMSE(\bm{y}^{reg}) =O(n)$.

Comparing Equation \eqref{eq.rmse.yreg} with the bound on $RMSE(\bm{y}^{out})$ in Theorem \ref{thm.priv.out}, we observe that the additional terms induced by privacy are:
\begin{equation}\label{eq.accu.diff}
\frac{||M_{post}||_{2,2}}{\sqrt{T-T_0}} \frac{4T_0\sqrt{8+n}}{\lambda \eps_1} + 
\frac{4T_0\sqrt{(8+n)n \sigma^2}}{\lambda \eps_1} +  \frac{\sqrt{2n}\psi}{\eps_2} 
+ \frac{4T_0\sqrt{2(8+n)}}{\lambda \eps_1 \eps_2}.
\end{equation}

Then, using the fact that $\frac{||M_{post}||_{2,2}}{\sqrt{T-T_0}} \leq \sqrt{n}$ and setting $\eps := \eps_1 = \eps_2$ and $\lambda = O(T_0)$, Equation \eqref{eq.accu.diff} can be bounded by,
\[
\frac{4T_0\sqrt{(8+n)n}}{\lambda \eps} + 
\frac{4T_0\sqrt{(8+n)n \sigma^2}}{\lambda \eps} +  \frac{\sqrt{2n}\psi}{\eps} 
+ \frac{4T_0\sqrt{2(8+n)}}{\lambda \eps^2 } = O\left(\frac{n}{\eps} + \frac{\sqrt{n}}{\eps^2}\right) = O\left(\frac{n}{\eps}\right) \text{ for }\eps \geq 1/\sqrt{n}.
\]
Thus we conclude that the cost of privacy in the $DPSC_{out}$ algorithm is at most a factor of $O(\frac{1}{\eps})$. The restriction to $\eps \geq \frac{1}{\sqrt{n}}$ is consistent with standard practice in both theoretical and practical deployments of differential privacy, and thus is effectively without loss.

\subsection{Closed-form bound on RMSE of Output Perturbation}\label{s.accdist}

In this section, we impose assumptions on the underlying data distribution to extend Theorem \ref{thm.acc.out} to provide an explicit closed-form bound on the RMSE. Throughout this section, we make the following three mild assumptions of the distribution of $X$, which are required to achieve this closed-form expression:
\begin{assumption}\label{assum.1}
$X_{pre}$ takes values in a $k$-dimensional subspace $E$ for some small $k \ll \min\{n,T_0\}$.
\end{assumption}

\begin{assumption}\label{assum.2}
The distribution of $X_{pre}$ over $E$ is isotropic, hence the covariance matrix $Cov(X_{pre}) = \Sigma = P_E$ where $P_E$ is an orthogonal projection matrix onto $E$.
\end{assumption}

\begin{assumption}\label{assum.3}
The distribution of $\bm{x_t} \in \mathbb{R}^n$ is supported in some centered Euclidean ball with radius $O(\sqrt{k})$.
\end{assumption}

These assumptions are only slightly stronger than those commonly made in theory \cite{rsc, mrsc} and that typically hold in practice \cite{udell2019lowrank}.
The first assumption means that $X_{pre}$ is low rank. Assuming $X_{pre}$ to be \emph{approximately} low rank is a common practice in synthetic control literature \cite{rsc, mrsc}. Indeed, most large matrices in practice are approximately low-rank \cite{udell2019lowrank}. Hence, we only further assume that it is \emph{exactly} rank $k$ for some small $k$. The second assumption allows us to apply useful mathematical properties: $||P_E||_2=1$ and $trace(P_E)=k$. Then, $\mathbb{E}[X_{pre}X_{pre}^{\top}] = trace(\Sigma) = k$ and, using Markov's inequality, we know that most of the distribution mass should be within a ball of radius $\sqrt{m}$ for $m = O(k)$. Hence, the third assumption asserts that not \emph{most} but \emph{all} the probability mass should lie within that ball, i.e., $||X||_{2,2}=O(\sqrt{k})$ almost surely.

Corollary \ref{cor.accuracy} provides a closed-form bound on the RMSE of $\bm{y}^{out}$ under these assumptions.

\begin{corollary}\label{cor.accuracy}
If Assumptions \ref{assum.1}, \ref{assum.2}, and \ref{assum.3} hold, then for all $\xi \in (0,1)$ and $t\geq 1$, with probability at least $1-n^{-t^2}$, if $T_0 \geq  C(t/\xi)^2 k \log n$, we have
\[
    RMSE(\bm{y}^{out})
    \leq \sqrt{n}\left( \frac{ (\sqrt{2n\sigma^2} + \sqrt{2n\sigma^2s^2} \;)T_0 + \frac{\lambda}{2T_0}}{ (1-\xi)T_0 + \frac{\lambda}{2T_0}}+\frac{4T_0\sqrt{8+n}}{\lambda \eps_1} \right) + \left(\sqrt{n \sigma^2} + \frac{\sqrt{2}}{\eps_2} \right) \left( \sqrt{n} \psi + \frac{4T_0\sqrt{8+n}}{\lambda \eps_1}\right).
\]
\end{corollary}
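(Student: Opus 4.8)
The plan is to specialize the general bound of Theorem~\ref{thm.acc.out} under Assumptions~\ref{assum.1}--\ref{assum.3}, replacing its two distribution-dependent quantities, $\frac{\|M_{post}\|_2}{\sqrt{T-T_0}}$ and $\E[\|\f^{reg}-\f\|_2]$, by the explicit expressions in the corollary. The second large summand of the corollary is \emph{identical} to that of Theorem~\ref{thm.acc.out}, so it carries over verbatim and needs no work. For the prefactor, since $|M_{i,t}|\le 1$ by \eqref{eq.assump} we have $\|M_{post}\|_2\le\|M_{post}\|_F\le\sqrt{n(T-T_0)}$, hence $\frac{\|M_{post}\|_2}{\sqrt{T-T_0}}\le\sqrt{n}$, which is exactly the factor multiplying the first parenthesized term (the same elementary bound already used in Section~\ref{s.privacycost}). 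All the genuine work is therefore concentrated in producing the closed-form bound on $\E[\|\f^{reg}-\f\|_2]$ recorded in Lemma~\ref{lem.estbounds}, namely the ratio $\big((\sqrt{2n\sigma^2}+\sqrt{2n\sigma^2 s^2})T_0+\tfrac{\lambda}{2T_0}\big)\big/\big((1-\xi)T_0+\tfrac{\lambda}{2T_0}\big)$.

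To bound $\E[\|\f^{reg}-\f\|_2]$ I would start from the ridge normal equations. Writing $A=X_{pre}X_{pre}^\top+\tfrac{\lambda}{2}I$, the minimizer is $\f^{reg}=A^{-1}X_{pre}\bm{y_{pre}}$, and substituting $\bm{y_{pre}}=M_{pre}^\top\f+\bm{z_{pre}}$ together with $X_{pre}=M_{pre}+Z_{pre}$ gives the clean error decomposition
\[
\f^{reg}-\f = A^{-1}\Big(X_{pre}(\bm{z_{pre}}-Z_{pre}^\top\f)-\tfrac{\lambda}{2}\f\Big),
\]
so that $\|\f^{reg}-\f\|_2\le\|A^{-1}\|_2\big(\|X_{pre}\bm{z_{pre}}\|_2+\|X_{pre}Z_{pre}^\top\f\|_2+\tfrac{\lambda}{2}\|\f\|_2\big)$. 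The three numerator terms are then controlled in expectation using the submultiplicative factorization and the crude bound $\|X_{pre}\|_F\le\sqrt{nT_0}$: the zero-mean, $\sigma^2$-variance, $[-s,s]$-support of the entries of $\bm{z}$ and $Z$, combined with $\|\f\|_1\le 1$ and the entrywise bound on $X_{pre}$, should yield the $\sqrt{2n\sigma^2}\,T_0$ and $\sqrt{2n\sigma^2 s^2}\,T_0$ contributions (the first from $\E[\|\bm{z_{pre}}\|_2]$, the second from controlling $Z_{pre}^\top\f$ via both its variance and its support), while the regularization bias contributes the additive $\tfrac{\lambda}{2}$ term since $\|\f\|_2\le\|\f\|_1\le 1$ (up to the normalization convention by which the stated bound reports $\tfrac{\lambda}{2T_0}$).

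The main obstacle is the denominator, i.e.\ lower bounding the smallest relevant eigenvalue of $A$ so that $\|A^{-1}\|_2\le\big((1-\xi)T_0+\tfrac{\lambda}{2}\big)^{-1}$. By Assumption~\ref{assum.2}, $\E[\bm{x_t}\bm{x_t}^\top]=\Sigma=P_E$, so $\E[X_{pre}X_{pre}^\top]=T_0 P_E$, and by Assumption~\ref{assum.3} the columns are uniformly bounded by $O(\sqrt{k})$ inside the rank-$k$ subspace $E$ of Assumption~\ref{assum.1}. I would then invoke a matrix concentration inequality (matrix Bernstein, or equivalently the covariance-estimation theorem for bounded isotropic vectors) to argue that once $T_0\ge C(t/\xi)^2 k\log n$, the empirical second-moment matrix $\tfrac{1}{T_0}X_{pre}X_{pre}^\top$ is within spectral distance $\xi$ of $P_E$ with probability at least $1-n^{-t^2}$; on this event the smallest eigenvalue of $X_{pre}X_{pre}^\top$ along directions in $E$ is at least $(1-\xi)T_0$, giving the claimed denominator. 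The delicate points I expect to have to address are (i) matching the sample-complexity constant $C$ and the $k\log n$ factor to the tail that produces the stated $n^{-t^2}$ failure probability, and (ii) confirming that the error $\f^{reg}-\f$ interacts only with the well-conditioned directions in $E$ (note $\f^{reg}=A^{-1}X_{pre}\bm{y_{pre}}\in E$ because $X_{pre}\bm{y_{pre}}\in E$ and $A^{-1}$ preserves $E$), so that the benign eigenvalue $\tfrac{\lambda}{2}$ of $A$ on $E^\perp$ does not enter the effective bound.

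Finally, I would condition on the high-probability event, combine the numerator and denominator bounds to obtain the stated ratio for $\E[\|\f^{reg}-\f\|_2]$, and substitute this together with $\frac{\|M_{post}\|_2}{\sqrt{T-T_0}}\le\sqrt{n}$ into Theorem~\ref{thm.acc.out}; since the second summand transfers unchanged, this completes the proof of Corollary~\ref{cor.accuracy}.
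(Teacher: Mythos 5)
Your plan is structurally the paper's own proof: specialize Theorem \ref{thm.acc.out}, bound $\|M_{post}\|_{2,2}/\sqrt{T-T_0}\le\sqrt{n}$ via the Frobenius norm, and reduce everything to the bound on $\E[\|\f^{reg}-\f\|_2]$ of Lemma \ref{lem.estbounds}, proved from the ridge normal equations plus covariance concentration --- the paper invokes exactly the result you cite (Corollary 5.52 of Vershynin, restated as Lemma \ref{lem.cor552}), with the same sample complexity and the same $1-n^{-t^2}$ event. Your one-line identity $\f^{reg}-\f=A^{-1}\bigl(X_{pre}(\bm{z_{pre}}-Z_{pre}^{\top}\f)-\tfrac{\lambda}{2}\f\bigr)$ is a condensed version of the paper's bias--variance split and lands on the same intermediate inequality, and keeping $X_{pre}$ whole with $\|X_{pre}\|_F\le\sqrt{nT_0}$ (valid almost surely since $|x_{i,t}|\le 1$) is if anything tighter than the paper's split $X_{pre}=M_{pre}+Z_{pre}$. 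One slip, though: you then apply the triangle inequality to separate $X_{pre}\bm{z_{pre}}$ from $X_{pre}Z_{pre}^{\top}\f$, and bounding each factor by its variance gives at worst $2T_0\sqrt{n\sigma^2}$, whereas the stated numerator is $\sqrt{2}\,T_0\sqrt{n\sigma^2}(1+s)$; your bound exceeds it whenever $s<\sqrt{2}-1$, so the corollary's stated constants are not recovered in that regime. The fix is to keep $\bm{z_{pre}}-Z_{pre}^{\top}\f$ together and exploit the zero-mean cross term, $\E[\|\bm{z_{pre}}-Z_{pre}^{\top}\f\|_2]\le\sqrt{2T_0\sigma^2}$, exactly as the paper does.

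Your ``delicate point (ii)'' is not a formality: it is a genuine gap, and in fact it is a gap in the paper itself. When $k<n$, $X_{pre}X_{pre}^{\top}$ is rank-deficient, so $A$ has eigenvalue exactly $\tfrac{\lambda}{2}$ on $E^{\perp}$ and $\|A^{-1}\|_{2,2}=\tfrac{2}{\lambda}$, not $\bigl((1-\xi)T_0+\tfrac{\lambda}{2}\bigr)^{-1}$; the paper's Lemma \ref{lem.eigen} asserts the uniform bound anyway, via the invalid step $\bigl|\,\|\tfrac{1}{T_0}X_{pre}X_{pre}^{\top}\|_{2,2}-1\bigr|\ge\|\tfrac{1}{T_0}X_{pre}X_{pre}^{\top}-I\|_{2,2}$ (take $\tfrac{1}{T_0}X_{pre}X_{pre}^{\top}=P_E$ to see it fail). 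You correctly note that the concentration event controls eigenvalues only along $E$, and your proposed repair does handle the part of the error lying in $E$, since $A^{-1}X_{pre}(\cdot)\in E$. But the remaining term $-\tfrac{\lambda}{2}A^{-1}\f$ is not confined to $E$: writing $\f=\f_E+\f_{E^{\perp}}$, one gets $-\tfrac{\lambda}{2}A^{-1}\f_{E^{\perp}}=-\f_{E^{\perp}}$, an additive error of norm up to $\|\f\|_2\le 1$ that the stated bound does not contain. Closing this requires arguing that $\f$ can be taken in $E$ (the columns of $M_{pre}$ do lie in $E$, so $M_{pre}^{\top}\f_{E^{\perp}}=0$, but replacing $\f$ by $\f_E$ can violate $\|\f\|_1\le 1$ and changes $\bm{m}_{post}=M_{post}^{\top}\f$ unless $M_{post}$ also lies in $E$), or adding an explicit assumption to that effect. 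So: same approach as the paper, executed with more care; the one hole your sketch leaves open is one the paper's own proof falls into without noticing.
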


To derive Corollary \ref{cor.accuracy} from Theorem \ref{thm.acc.out}, we only need to derive and apply bounds on $||M_{post}||_{2,2}$ and $\mathbb{E}[||\f^{reg}-\f||_2]$. As before, we bound the first term using 
\[
||M_{post}||_{2,2} \leq ||M_{post}||_F \leq \sqrt{n(T-T_0)}, 
\]
and thus $\frac{||M_{post}||_{2,2}}{\sqrt{T-T_0}} \leq \sqrt{n}$. 
Therefore, the key step is to bound $\mathbb{E}[||\f^{reg}-\f||_2]$. The following lemma provides the required bound on this term to prove Corollary \ref{cor.accuracy}. The remainder of this section will be devoted to providing a proof sketch for Lemma \ref{lem.estbounds}. A full proof is presented in Appendix \ref{app.lemestbounds}.

\begin{restatable}{lemma}{estbounds}\label{lem.estbounds}
Let $\f^{reg} = (X_{pre} X_{pre}^{\top} + \frac{\lambda}{2T_0} I)^{-1} X_{pre}\bm{y}_{pre}$ be the Ridge regression coefficients and let $\f$ be the true coefficients.
If Assumptions \ref{assum.1}, \ref{assum.2}, and \ref{assum.3} hold, then for all $\xi \in(0,1)$ and $t\geq 1$, with probability at least $1-n^{-t^2}$, if $T_0 \geq  C(t/\xi)^2 k \log n$, then,
\[ \mathbb{E}[||\f^{reg}-\f||_2] \leq \frac{ (\sqrt{2n\sigma^2} + \sqrt{2n\sigma^2s^2})T_0 + \frac{\lambda}{2T_0}}{ (1-\xi)T_0 +\frac{\lambda}{2T_0}}.
\]
\end{restatable}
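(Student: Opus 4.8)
The plan is to work from the closed-form Ridge solution $\f^{reg} = (X_{pre}X_{pre}^\top + \mu I)^{-1} X_{pre}\bm{y}_{pre}$ with $\mu := \frac{\lambda}{2T_0}$, and to expand the error $\f^{reg}-\f$ using the latent-variable model \eqref{eq.noise}--\eqref{eq.model}. Writing $A := X_{pre}X_{pre}^\top + \mu I$, substituting $\bm{y}_{pre} = M_{pre}^\top \f + \bm{z}_{pre}$ and $M_{pre} = X_{pre}-Z_{pre}$, and using the identity $A^{-1}X_{pre}X_{pre}^\top = I - \mu A^{-1}$, I would obtain the decomposition
\[
\f^{reg}-\f \;=\; -\,\mu\,A^{-1}\f \;-\; A^{-1}X_{pre}Z_{pre}^\top\f \;+\; A^{-1}X_{pre}\bm{z}_{pre},
\]
separating the error into a regularization-bias term, a donor-noise cross term, and a target-noise term. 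The triangle inequality then reduces the lemma to bounding these three $\ell_2$ norms, and the common factor in all three is the operator norm of $A^{-1}$.

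The heart of the argument is controlling the spectrum of $A$, and this is where the three assumptions and the sample-complexity condition enter. I would first observe that, since the columns $\bm{x}_t$ lie in the $k$-dimensional subspace $E$ (Assumption \ref{assum.1}), the symmetric matrix $X_{pre}X_{pre}^\top = \sum_{t=1}^{T_0}\bm{x}_t\bm{x}_t^\top$ annihilates $E^\perp$ and maps $E$ into itself; hence $A$ preserves the splitting $\R^n = E \oplus E^\perp$, and every image vector $X_{pre}Z_{pre}^\top\f$ and $X_{pre}\bm{z}_{pre}$ lies in $E$. The hard part is then to show that on $E$ the empirical second-moment matrix concentrates around its mean $T_0 P_E$: using isotropy on $E$ (Assumption \ref{assum.2}, so $\E[\bm{x}_t\bm{x}_t^\top]=P_E$) and the bounded support $\|\bm{x}_t\|_2 = O(\sqrt{k})$ (Assumption \ref{assum.3}), a matrix-concentration / covariance-estimation tail bound of the type used in \cite{rsc} gives that, whenever $T_0 \geq C(t/\xi)^2 k\log n$, with probability at least $1-n^{-t^2}$ all eigenvalues of $X_{pre}X_{pre}^\top$ restricted to $E$ lie in $[(1-\xi)T_0,(1+\xi)T_0]$. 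On this event $A$ restricted to $E$ has smallest eigenvalue at least $(1-\xi)T_0+\mu$, so $A^{-1}$ acts on the relevant vectors with norm at most $\frac{1}{(1-\xi)T_0+\mu}$; this produces exactly the denominator $(1-\xi)T_0+\frac{\lambda}{2T_0}$ and is what avoids the useless bound $\|A^{-1}\|_2\le 1/\mu$ coming from the $E^\perp$ directions. A subtle point I would address here is that the bias term $\mu A^{-1}\f$ also needs the favorable eigenvalue: since the true coefficient is identified only on the signal subspace $E$, one may take $\f\in E$, after which $\mu\|A^{-1}\f\|_2 \le \frac{\mu}{(1-\xi)T_0+\mu}$ using $\|\f\|_2\le\|\f\|_1\le 1$ from \eqref{eq.assump}. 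This gives the $\frac{\lambda}{2T_0}$ summand of the numerator.

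Finally, on the concentration event I would bound the two noise terms by pulling out $\|A^{-1}\|_2$ and estimating the remaining factors, splitting $X_{pre}=M_{pre}+Z_{pre}$ so that the signal and donor-noise contributions separate. Using the entrywise bounds $|M_{i,t}|\le 1$ and $|Z_{i,t}|\le s$ from \eqref{eq.assump} to get $\|M_{pre}\|_F\le\sqrt{nT_0}$ and $\|Z_{pre}\|_F\le s\sqrt{nT_0}$, together with $\E[\|\bm{z}_{pre}\|_2]\le O(\sqrt{T_0\sigma^2})$ (and the analogous estimate for $\|Z_{pre}^\top\f\|_2$ via $\|\f\|_1\le 1$), the signal$\times$target-noise contribution yields the numerator summand $\sqrt{2n\sigma^2}\,T_0$ and the donor-noise$\times$target-noise contribution yields $\sqrt{2n\sigma^2 s^2}\,T_0$, after applying Jensen's inequality to move the expectation through the square root. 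Collecting the three bounds over the common denominator $(1-\xi)T_0+\frac{\lambda}{2T_0}$ and taking expectation over the noise on the high-probability event gives the stated inequality. The main obstacle throughout is the spectral lower bound of the second step; once $\lambda_{\min}(X_{pre}X_{pre}^\top|_E)\ge(1-\xi)T_0$ is in hand, the remaining estimates are routine norm manipulations.
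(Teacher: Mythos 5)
Your proposal is correct and follows the same skeleton as the paper's proof: the identical three-term decomposition $\f^{reg}-\f = -\mu A^{-1}\f - A^{-1}X_{pre}Z_{pre}^{\top}\f + A^{-1}X_{pre}\bm{z}_{pre}$ (the paper phrases this as a bias/variance split around $\E[\f^{reg}]$, but the algebra is the same), the same covariance-concentration tool (Lemma \ref{lem.cor552}, i.e., Corollary 5.52 of Vershynin) with the same sample-complexity condition, and the same Frobenius-norm/Jensen estimates producing the numerator $(\sqrt{2n\sigma^2}+\sqrt{2n\sigma^2 s^2})T_0 + \frac{\lambda}{2T_0}$. The genuine difference is in the key spectral step, and your version is the sound one. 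The paper asserts a \emph{global} operator-norm bound (Lemma \ref{lem.eigen}), $||(X_{pre}X_{pre}^{\top}+\frac{\lambda}{2T_0}I)^{-1}||_{2,2} \leq \bigl((1-\xi)T_0+\frac{\lambda}{2T_0}\bigr)^{-1}$, which is false whenever $k<n$: $X_{pre}X_{pre}^{\top}$ annihilates $E^{\perp}$, so $A^{-1}$ has eigenvalue $2T_0/\lambda$ there; correspondingly, the paper's derivation passes through the inequality $\bigl|\,||B||_{2,2}-1\,\bigr| \geq ||B-I||_{2,2}$, which is invalid in general (take $B=\mathrm{diag}(1,0)$). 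Your restriction of $A^{-1}$ to $E$ --- legitimate because $X_{pre}\bm{z}_{pre}$ and $X_{pre}Z_{pre}^{\top}\f$ lie in the column space of $X_{pre}$, which Assumption \ref{assum.1} places inside $E$ --- is exactly what is needed to honestly obtain the denominator $(1-\xi)T_0+\frac{\lambda}{2T_0}$ in the regime $k\ll n$ that the assumptions are actually about. Your handling of the bias term also surfaces a hypothesis the paper leaves implicit: since $\f^{reg}$ always lies in $E$, any component $P_{E^{\perp}}\f$ of the true coefficient appears undamped in $\f^{reg}-\f$, so the lemma can only hold if $\f$ (or the representative chosen among the solutions of Equation \eqref{eq.model}) lies in $E$; your ``take $\f\in E$'' step should be stated explicitly as an assumption or argued from identifiability, but it is the necessary repair rather than a gap in your argument. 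In short, the paper's route buys brevity; yours buys correctness.
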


\begin{proof}[Proof sketch of Lemma \ref{lem.estbounds}.]
First we can expand $\mathbb{E}[||\f^{reg}-\f||_2]$:
\begin{align}\label{eq.f.l2body}
    \mathbb{E}[||\f^{reg}-\f||_2] &=  \mathbb{E}[||\f^{reg} -\mathbb{E}[\f^{reg}] +\mathbb{E}[\f^{reg}] - \f||_2] \notag \\
    &\leq \mathbb{E}[||\f^{reg} -\mathbb{E}[\f^{reg}] ||_2] + \mathbb{E}[||\mathbb{E}[\f^{reg}] - \f||_2] \notag\\
    &= \mathbb{E}[||\f^{reg} -\mathbb{E}[\f^{reg}] ||_2] + \mathbb{E}[||Bias(\f^{reg})||_2].
\end{align}
We can bound these two terms separately as:
\begin{align*}
Bias(\f^{reg}) & \leq \frac{\lambda}{2T_0} ||(X_{pre} X_{pre}^{\top} +\frac{\lambda}{2T_0} I)^{-1}||_{2,2} \quad \text{ and}\\
\mathbb{E}[|| \f^{reg} - \mathbb{E}[\f^{reg}] ||_2] &\leq \mathbb{E}[|| (X_{pre} X_{pre}^{\top} +\frac{\lambda}{2T_0} I)^{-1}||_{2,2} \cdot ||X_{pre}\bm{z}_{pre} -X_{pre}Z_{pre}^{\top}\f ||_2].
\end{align*}
This can be combined back with Equation \eqref{eq.f.l2body} to yield,
\begin{equation}\label{eq.estbound.interim.1body}
    \mathbb{E}[||\f^{reg}-\f||_2] \leq \mathbb{E}[|| (X_{pre} X_{pre}^{\top} +\frac{\lambda}{2T_0} I)^{-1}||_{2,2} \cdot ( ||X_{pre}\bm{z}_{pre} -X_{pre}Z_{pre}^{\top}\f ||_2+\frac{\lambda}{2T_0} )].
\end{equation}
Next, we use our assumptions on the data distribution to prove the following lemma about $|| (X_{pre} X_{pre}^{\top} +\frac{\lambda}{2T_0} I)^{-1}||_{2,2}$.

\begin{restatable}{lemma}{eigenlemma}
\label{lem.eigen}
If Assumptions \ref{assum.1}, \ref{assum.2}, and \ref{assum.3} hold, then for all $\xi \in (0,1)$ and $t\geq 1$, with probability at least $1-n^{-t^2}$, if $T_0 \geq  C(t/\xi)^2 k \log n$, then,
\[
||(X_{pre}X_{pre}^{\top} +\frac{\lambda}{2T_0} I)^{-1} ||_{2,2} \leq \frac{1}{ (1-\xi)T_0 +\frac{\lambda}{2T_0}}.
\]
\end{restatable}

To prove Lemma \ref{lem.eigen}, we use the following lemma about concentration of random matrices.

\begin{restatable}[Corollary 5.52 of \cite{vershynin2010introduction}]{lemma}{corollarylemma}
\label{lem.cor552}
Consider a distribution in $\mathbb{R}^n$ with covariance matrix $\Sigma$, and supported in some centered Euclidean ball whose radius we denote $\sqrt{m}$. Let $T_0$ be the number of samples and define the sample covariance matrix $\Sigma_{T_0} = \frac{1}{T_0}XX^{\top}$. Let $\xi \in (0,1)$ and $t\geq 1$. Then with probability at least $1-n^{-t^2}$, 
\[
\text{If} \quad T_0\geq C(t/\xi)^2 ||\Sigma||_{2,2}^{-1} m \log n \quad \text{then} \; ||\Sigma_{T_0} - \Sigma||_{2,2}\leq \xi ||\Sigma||_{2,2},
\]
where $C$ is an absolute constant.
\end{restatable}

We instantiate Lemma \ref{lem.cor552} using our assumptions that $||\Sigma||_{2,2} =||P_E||_{2,2}=1$ and the distribution is supported within some centered Euclidean ball with radius $\sqrt{O(k)}$ to get that 
with probability at least $1-n^{-t^2}$ and $T_0 \geq  C(t/\xi)^2  k \log n$, 
\[ ||\frac{1}{T_0}X_{pre}X_{pre}^{\top} -\Sigma||_{2,2} \leq \xi.\]
We then use this to show that
\[|| X_{pre}X_{pre}^{\top} - T_0 I ||_{2,2} \leq \xi T_0,\]
and thus all eigenvalues of $(X_{pre}X_{pre}^{\top} -  T_0 I )$ must be at most $\xi T_0$, so 
\[ (1- \xi) T_0  \leq \bm{\lambda_{min}}(X_{pre}X_{pre}^{\top}) \leq  (1+ \xi) T_0.\]

Finally, we can complete the proof of Lemma \ref{lem.eigen}, by observing that,
\[ || (X_{pre}X_{pre}^{\top} + \frac{\lambda}{2T_0} I)^{-1} ||_{2,2} = \frac{1}{|\bm{\lambda_{min}}(X^{\top}X) +\frac{\lambda}{2T_0}|} \leq \frac{1}{ (1-\xi)T_0 +\frac{\lambda}{2T_0}}.
\]
Returning to Equation \eqref{eq.estbound.interim.1body}, we can use this bound---along with the model properties specified in Equation \eqref{eq.noise} that each element of $\bm{z}$ and $Z$ has mean $0$, variance $\sigma^2$, and support $[-s,s]$---to obtain the desired bound:
\begin{align*}
    \mathbb{E}[||\f^{reg}-\f||_2] 
    &\leq \frac{1}{ (1-\xi)T_0 +\frac{\lambda}{2T_0}} \mathbb{E}[ ||X_{pre}\bm{z}_{pre} -X_{pre}Z_{pre}^{\top}\f ||_2+\frac{\lambda}{2T_0} ] \\
    &\leq \frac{1}{ (1-\xi)T_0 +\frac{\lambda}{2T_0}} \left((\sqrt{n T_0}  + \sqrt{n T_0 s^2})\mathbb{E}[||\bm{z}_{pre} - Z_{pre}^{\top}\f||_2] +\frac{\lambda}{2T_0}\right) \\
    &\leq \frac{1}{ (1-\xi)T_0 +\frac{\lambda}{2T_0}} \left((\sqrt{n T_0}  + \sqrt{n T_0 s^2})\sqrt{2T_0 \sigma^2} +\frac{\lambda}{2T_0} \right)\\   
    &\leq \frac{ (\sqrt{2n\sigma^2} + \sqrt{2n\sigma^2s^2})T_0 + \frac{\lambda}{2T_0}}{ (1-\xi)T_0 +\frac{\lambda}{2T_0}}.
\end{align*}
\end{proof}

\section{Privacy Guarantees of $DPSC_{obj}$ }\label{s.privacyobj}

In this section, we prove Theorem \ref{thm.priv.obj}, that $DPSC_{obj}$ is $(\eps_1 + \eps_2,\delta)$-differentially private. This proof relies on composition of the $(\eps_1,\delta)$-DP learning step and the $(\eps_2,0)$-DP prediction step. The prediction step is identical to that of Algorithm \ref{alg.output}, so the privacy of this step follows immediately from Lemma \ref{step2.privacy} (that $\tilde{X}_{post}$ is computed in an $(\eps_2,0)$-DP manner) and post-processing on the DP output of Step 1. All the remains to be shown is that $\bm{f}^{obj}$ is computed in an $(\eps_1,\delta)$-DP manner (Theorem \ref{step1.privacyobj}), and then Theorem \ref{thm.priv.obj} will follow by basic composition.

\begin{theorem}\label{step1.privacyobj}
Step 1 of Algorithm \ref{alg.obj} that computes $\f^{obj}$
is $(\eps_1,\delta)$-differentially private.
\end{theorem}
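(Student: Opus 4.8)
The plan is to follow the objective-perturbation template of \cite{CMS11, KST12}, adapted to the transposed sensitivity established in Lemma \ref{lem.goff}. The key observation is that for a fixed database $\D=(X_{pre},\bm{y_{pre}})$, the noise vector $\bm{b}$ and the output $\f^{obj}$ are in bijective correspondence. Since the perturbed objective in Algorithm \ref{alg.obj} is $\frac{\lambda+\Delta}{T_0}$-strongly convex whenever $\lambda+\Delta>0$ (which holds in both branches of the algorithm), it has a unique minimizer, and the first-order condition $\nabla J(\f^{obj})=0$ rearranges to express $\bm{b}$ as an explicit affine function of the output, $\bm{b}=2X_{pre}(\bm{y_{pre}}-X_{pre}^{\top}\f^{obj})-(\lambda+\Delta)\f^{obj}$. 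Writing $A:=2X_{pre}X_{pre}^{\top}+(\lambda+\Delta)I$ and $A':=2X'_{pre}{X'_{pre}}^{\top}+(\lambda+\Delta)I$, this map has constant Jacobian $-A$, which is invertible because $2X_{pre}X_{pre}^{\top}\succeq 0$ and $(\lambda+\Delta)I\succ 0$. First I would confirm that the map is therefore a bijection on $\R^n$, so that the minimizer and the noise draw determine one another.

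Next I would apply the change-of-variables formula to write the density of the output as $p_{\D}(\f)=\mu_\beta(\bm{b}(\f;\D))\cdot\det A$, where $\mu_\beta$ is the density of the sampled noise (Laplace when $\delta=0$, Gaussian when $\delta>0$). For neighboring $\D,\D'$ and any fixed $\f$, the privacy loss then factors as the product of a noise-density ratio $\mu_\beta(\bm{b})/\mu_\beta(\bm{b}')$ and a determinant ratio $\det A/\det A'$. To control the noise ratio I would bound $\|\bm{b}-\bm{b}'\|_2$: since $\bm{b}-\bm{b}'=T_0\nabla g(\f)$ with $g$ as in Equation \eqref{eq.g}, Lemma \ref{lem.goff} gives $\|\bm{b}-\bm{b}'\|_2\le 4T_0\sqrt{8+n}$, while a direct expansion of $\bm{b}-\bm{b}'$ into its data and covariance parts gives the alternative bound $c\sqrt{n}+4T_0$, which explains the minimum in the definition of $\beta$. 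In the Laplace case this yields $\mu_\beta(\bm{b})/\mu_\beta(\bm{b}')\le\exp(\|\bm{b}-\bm{b}'\|_2/\beta)\le\exp(\eps_0)$; in the Gaussian case the ratio is a Gaussian privacy-loss random variable, and the stated $\beta$ makes it at most $\exp(\eps_0)$ except with probability $\delta$ by the standard Gaussian-mechanism tail bound.

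The determinant ratio is where the parameter $c$ and the two-case split enter, and I expect this to be the main obstacle. Because $X_{pre}$ and $X'_{pre}$ differ in a single donor row, the perturbation $E:=2(X_{pre}X_{pre}^{\top}-X'_{pre}{X'_{pre}}^{\top})$ has rank at most $2$ and, by definition of $c$, spectral norm at most $c$. Since $A'\succeq(\lambda+\Delta)I$, the ratio equals $\det(I+(A')^{-1}E)$, whose two nonzero eigenvalues are each bounded in magnitude by $\|(A')^{-1}\|_2\|E\|_2\le c/(\lambda+\Delta)$, so $\det A/\det A'\le(1+c/(\lambda+\Delta))^2$. I would then check that the budget balances in each branch. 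When $\Delta=0$ this factor is $1+\tfrac{2c}{\lambda}+\tfrac{c^2}{\lambda^2}$, contributing exactly $\log(1+\tfrac{2c}{\lambda}+\tfrac{c^2}{\lambda^2})=\eps_1-\eps_0$ to the log privacy loss; when $\Delta>0$, the calibration $\lambda+\Delta=c/(e^{\eps_1/4}-1)$ forces the factor to equal $e^{\eps_1/2}$, leaving $\eps_0=\eps_1/2$ for the noise term. In both branches and for both noise distributions, multiplying the noise and determinant factors bounds the total privacy loss by $\eps_1$ (with the failure probability $\delta$ arising only in the Gaussian case), which establishes $(\eps_1,\delta)$-differential privacy. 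The delicate points are the rank-$2$ (rather than rank-$1$, as in the binary-classification analysis of \cite{CMS11}) structure of $E$, and verifying that the added strong convexity $\Delta$ is exactly tuned so that the determinant factor never consumes more than its allotted share of $\eps_1$ when $\eps_1$ is small.
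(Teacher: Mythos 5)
Your proposal is correct and follows essentially the same route as the paper's proof: the bijection between the noise draw and the minimizer, the change-of-variables factorization of the privacy loss into a noise-density ratio and a Jacobian-determinant ratio, the rank-two perturbation bound $\Phi \le \left(1+\tfrac{c}{\lambda+\Delta}\right)^2$, the two-branch calibration of $\Delta$ and $\eps_0$, the min-of-two sensitivity bounds ($4T_0\sqrt{8+n}$ versus $c\sqrt{n}+4T_0$) for the Laplace case, and the Gaussian tail-bound argument that yields the $\delta$ failure probability. The only (immaterial) differences are that you bound the determinant ratio directly as a product of the eigenvalues of $I+(A')^{-1}E$ rather than invoking Lemma 2 of \cite{CMS11}, and you write the ratio in the standard change-of-variables orientation $\det A/\det A'$ where the paper writes its reciprocal---both are controlled by the same symmetric bound over neighboring pairs, so nothing changes.
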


At a high-level, the privacy of $\f^{obj}$ comes from a carefully modified instantiation of the Objective Perturbation algorithms of \cite{CMS11, KST12}, with novel sensitivity analysis, again due to the transposed regression setting of synthetic control (i.e., along columns not rows), where privacy is still required along the rows.

More formally, we start with the standard Ridge Regression objective function $J(\f)$, that can be separated into the MSE loss function $\cL(\f)$ and the regularization term $r(\f)=\frac{\lambda}{2T_0} ||\f||_2^2$ as follows:
\[
    J(\f) = \cL(\f) + r(\f) = \frac{1}{T_0} ||y_{pre} - {X}_{pre}^{\top} \f||_2^2 + \frac{\lambda}{2T_0} ||\f||_2^2.
\]
The Objective Perturbation method modifies $J(\f)$ by adding two terms: an additional regularization term and a noise term to ensure privacy: 
\[ J^{obj}(\f) = J(\f) + \frac{\Delta}{2T_0} ||\f||_2^2 + \frac{1}{T_0}\bm{b}^{\top}\f
= \cL(\f) + \frac{\lambda + \Delta}{2T_0} ||\f||_2^2 + \frac{1}{T_0}\bm{b}^{\top}\f,
\]
where $\bm{b}$ is a random vector drawn from a high-dimensional Laplace distribution if $\delta=0$, and from a multivariate Gaussian distribution if $\delta>0$.

Notice that $J^{obj}(\f)$ is strongly convex (for any $\Delta \geq 0$) and differentiable. Hence, for any given input dataset $\D=(X_{pre}, y_{pre})$ and any fixed parameters ($\lambda, \eps_1, \eps_2, \delta$), there exists a bijection between a realized value of the noise term $\bm{b}$ and $\f^{obj}:= \argmin_{\f} J^{obj}(\f)$ given that realized $\bm{b}$.\footnote{For a simple analogy, consider the one-dimensional Laplace Mechanism on query $f$ and database $x$, which outputs $y=f(x) + Lap(\Delta f / \eps)$. Given $f$ and $x$, there is a bijection between noise terms and outputs since the noise term must equal $y-f(x)$.} We can then use this bijection to analyze the distribution over outputs on neighboring databases via the (explicitly given) noise distribution.

To observe this bijection concretely, let $\bm{b}(\bm{\alpha};\D)$ be noise value that must have been realized when database $\D$ was input and $\bm{\alpha} =\arg\min_{\f} J^{obj}(\f)$ was output.
We can derive a closed-form expression for $\bm{b}(\bm{\alpha};\D)$ by computing the gradient of $J^{obj}(\f)$, which should be zero when evaluated at $\f = \bm{\alpha}$ since $\bm{\alpha}$ is defined to be the minimizer of $J^{obj}(\f)$: 
\[\nabla J^{obj}(\f) \big|_{\f = \bm{\alpha}} = \nabla\cL(\bm{\alpha}) +
\nabla r(\bm{\alpha})+
\frac{\Delta}{T_0}\bm{\alpha}
+ \frac{\bm{b}(\bm{\alpha};\D)}{T_0} \stackrel{!}{=} 0.\]
Rearranging the equation yields
\[
\bm{b}(\bm{\alpha};\D) = -\left( T_0 \nabla\cL(\bm{\alpha}) + T_0 \nabla r(\bm{\alpha})
+ \Delta\bm{\alpha} \right).
\]

Now, consider two arbitrary neighboring databases $\D$ and $\D'$ and an arbitrary output value $\bm{\alpha}$. Similar to \cite{CMS11}, we can use, e.g., \cite{billingsley1995measure} to express the ratio of the probabilities of outputting $\bm{\alpha}$ on neighboring $\D$ and $\D'$ as:\footnote{with abuse of notation to let $\Pr$ denote pdf for simplicity of presentation.}
\[
\frac{\Pr(\f^{obj}=\bm{\alpha}\;|\;\D)}{\Pr(\f^{obj}=\bm{\alpha}\;|\;\D')} =  \frac{\Pr(\bm{b}(\bm{\alpha};\D))}{\Pr(\bm{b}(\bm{\alpha};\D'))} \frac{|det(\nabla\bm{b}(\bm{\alpha};\D'))|}{|det(\nabla\bm{b}(\bm{\alpha};\D))|}
:= \Gamma(\bm{\alpha}) \cdot \Phi(\bm{\alpha};\Delta), 
\]
where we define $\Gamma(\bm{\alpha}) := \frac{\Pr(\bm{b}(\bm{\alpha};\D))}{\Pr(\bm{b}(\bm{\alpha};\D'))}$ and 
$\Phi(\bm{\alpha};\Delta) := \frac{|det(\nabla\bm{b}(\bm{\alpha};\D'))|}{|det(\nabla\bm{b}(\bm{\alpha};\D))|}$.
In the remainder of the proof, we will bound $\Gamma(\bm{\alpha}) \leq e^{\eps_0}$ and $\Phi(\bm{\alpha};\Delta) \leq e^{\eps_1 -\eps_0}$ so that the product is bounded by $e^{\eps_1}$.

The parameter $\Delta$ serves a role to divide the $\eps_1$ budget between these two terms, by distinguishing between two cases. In the first case, $\eps_1$ is large enough that we can choose $\Delta=0$ and still have some privacy budget ($\eps_0$) remaining to bound $\Gamma(\bm{\alpha})$. In the other case, if $\eps_1$ is too small to bound $\Phi(\bm{\alpha};\Delta)$ with $\Delta=0$, then we divide the privacy budget equally between bounding $\Gamma(\bm{\alpha})$ and $\Phi(\bm{\alpha};\Delta)$, and find an appropriate value for $\Delta>0$.

First, we will show $\Phi(\bm{\alpha};\Delta)$ is upper bounded by $e^{\eps_1 - \eps_0}$. 

\begin{lemma}\label{lem.phi}
If $\Delta=0$ and $\eps_0=\eps_1-\log(1+\frac{2c}{\lambda}+\frac{c^2}{\lambda^2})$, or if $\Delta=\frac{c}{e^{\eps_1/4} -1 }-\lambda$ and $\eps_0 = \frac{\eps_1}{2}$, then
$\Phi(\bm{\alpha};\Delta) \leq e^{\eps_1 - \eps_0}$.
\end{lemma}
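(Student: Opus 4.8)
The plan is to exploit two features of $\Phi(\bm{\alpha};\Delta)$: that as a ratio of Jacobian determinants it turns out to be \emph{independent} of the output $\bm{\alpha}$, and that the difference between neighboring databases induces only a low-rank perturbation whose spectral norm is controlled by $c$. First I would compute the Jacobian $\nabla\bm{b}(\bm{\alpha};\D)$ explicitly. Differentiating the closed-form expression $\bm{b}(\bm{\alpha};\D) = -\big(T_0\nabla\cL(\bm{\alpha}) + T_0\nabla r(\bm{\alpha}) + \Delta\bm{\alpha}\big)$ in $\bm{\alpha}$, and using $T_0\nabla\cL(\bm{\alpha}) = 2\big(X_{pre}X_{pre}^{\top}\bm{\alpha} - X_{pre}\bm{y_{pre}}\big)$ together with $T_0\nabla r(\bm{\alpha}) = \lambda\bm{\alpha}$, gives the constant (in $\bm{\alpha}$) matrix
\[
\nabla\bm{b}(\bm{\alpha};\D) = -\left( 2 X_{pre}X_{pre}^{\top} + (\lambda+\Delta)I \right) = -A,
\]
where $A := 2 X_{pre}X_{pre}^{\top} + (\lambda+\Delta)I$. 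Hence $\Phi(\bm{\alpha};\Delta) = \det(A')/\det(A)$, where $A'$ is the analogous matrix for the neighboring database $\D'$; in particular $\Phi$ does not depend on $\bm{\alpha}$, so a single bound will suffice for all outputs.

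Next I would rewrite the ratio as $\det(A^{-1}A') = \det(I + A^{-1}E)$, where $E := A' - A = 2\big(X'_{pre}X'^{\top}_{pre} - X_{pre}X_{pre}^{\top}\big)$ is exactly the matrix whose maximum absolute eigenvalue is bounded by $c$, by the definition of $c$. The two key structural facts are: (i) because $\D$ and $\D'$ differ in a single donor row, $E$ is symmetric of rank at most two; and (ii) $A$ is symmetric positive definite with $\lambda_{\min}(A) \geq \lambda+\Delta$, since $X_{pre}X_{pre}^{\top}\succeq 0$. Passing to the symmetric conjugate via $\det(I + A^{-1}E) = \det(I + A^{-1/2}EA^{-1/2})$, Sylvester's law of inertia keeps this matrix of rank at most two with real eigenvalues $\nu_1,\nu_2$ (all other eigenvalues zero). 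Since $A' = A + E$ is positive definite, each factor $1+\nu_j$ is strictly positive, so
\[
\Phi(\bm{\alpha};\Delta) = \prod_j (1+\nu_j) \leq (1 + |\nu_1|)(1 + |\nu_2|).
\]

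I would then bound each eigenvalue by $|\nu_j| \leq \|A^{-1/2}EA^{-1/2}\|_2 \leq \|A^{-1}\|_2\,\|E\|_2 \leq \frac{c}{\lambda+\Delta}$, combining $\|A^{-1}\|_2 = 1/\lambda_{\min}(A) \leq 1/(\lambda+\Delta)$ with $\|E\|_2 \leq c$. This yields the uniform bound
\[
\Phi(\bm{\alpha};\Delta) \leq \left(1 + \frac{c}{\lambda+\Delta}\right)^{2} = 1 + \frac{2c}{\lambda+\Delta} + \frac{c^2}{(\lambda+\Delta)^2}.
\]
Finally I would substitute the two parameter settings. When $\Delta=0$, the right-hand side is exactly $1 + \frac{2c}{\lambda} + \frac{c^2}{\lambda^2} = e^{\eps_1-\eps_0}$ by the choice $\eps_0 = \eps_1 - \log\!\big(1 + \frac{2c}{\lambda} + \frac{c^2}{\lambda^2}\big)$. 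When $\Delta = \frac{c}{e^{\eps_1/4}-1}-\lambda$, we have $\lambda+\Delta = \frac{c}{e^{\eps_1/4}-1}$, so $1 + \frac{c}{\lambda+\Delta} = e^{\eps_1/4}$ and the bound becomes $e^{\eps_1/2} = e^{\eps_1-\eps_0}$ since $\eps_0 = \eps_1/2$. In both cases $\Phi(\bm{\alpha};\Delta) \leq e^{\eps_1-\eps_0}$, as claimed.

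The main obstacle I anticipate lies in justifying steps (i)–(ii) cleanly: confirming that the $\bm{\alpha}$-dependence drops out of the Jacobian, and rigorously reducing to at most two eigenvalues together with the spectral bound $\|E\|_2\leq c$. This is precisely where the transposed (vertical) regression structure of synthetic control matters — changing a single \emph{row} of $X_{pre}$ makes $X'_{pre}X'^{\top}_{pre} - X_{pre}X_{pre}^{\top}$ a rank-two perturbation (one row and one column change), rather than the rank-one perturbation of the classical horizontal-regression setting, so the bound acquires the squared factor $(1 + c/(\lambda+\Delta))^2$ and the parameter $c$ enters through the spectral norm of this perturbation.
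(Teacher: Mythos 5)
Your proof is correct, and its skeleton matches the paper's: you compute the same Jacobian $\nabla\bm{b}(\bm{\alpha};\D) = -A$ with $A = 2X_{pre}X_{pre}^{\top} + (\lambda+\Delta)I_n$, identify the same symmetric rank-$\leq 2$ perturbation $E = 2(X'_{pre}X'^{\top}_{pre} - X_{pre}X_{pre}^{\top})$ whose maximum absolute eigenvalue defines $c$, arrive at the same intermediate bound $\Phi(\bm{\alpha};\Delta) \leq (1+\frac{c}{\lambda+\Delta})^2$ (this is exactly the paper's Lemma \ref{lem.step2}), and your two-case substitution at the end is verbatim the paper's proof of Lemma \ref{lem.phi}. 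Where you genuinely diverge is in how the intermediate determinant bound is justified. The paper invokes Lemma 2 of \cite{CMS11}, the identity
\[
\frac{\det(A+E)-\det(A)}{\det(A)} = \lambda_1(A^{-1}E)+\lambda_2(A^{-1}E)+\lambda_1(A^{-1}E)\lambda_2(A^{-1}E),
\]
followed by a triangle inequality over the possibly signed eigenvalues and the bound $\lambda_{|max|}(A^{-1}E)\leq \lambda_{|max|}(E)/\lambda_{min}(A)$. You instead symmetrize by congruence, $\det(I+A^{-1}E)=\det(I+A^{-1/2}EA^{-1/2})$, use positive definiteness of $A'=A+E$ to conclude each factor $1+\nu_j$ is strictly positive, and bound $|\nu_j|\leq \|A^{-1}\|_2\|E\|_2 \leq c/(\lambda+\Delta)$. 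Your route is self-contained (no external determinant lemma), the positivity observation replaces the absolute-value step, and the congruence also makes explicit why the eigenvalues of the non-symmetric matrix $A^{-1}E$ are real --- a fact the paper's argument uses silently. Both routes produce the same constant, so nothing is lost either way.

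One peripheral correction to your closing discussion: the claim that the classical horizontal-regression setting involves only a rank-\emph{one} perturbation is not quite right. There, replacing one sample $(\bm{x}_i,y_i)$ perturbs the relevant Hessian by $\bm{x}'_i\bm{x}'^{\top}_i-\bm{x}_i\bm{x}^{\top}_i$, which is also rank at most two --- this is precisely why Lemma 2 of \cite{CMS11} is stated for rank-$\leq 2$ updates. The genuine difference in the transposed setting is not the rank of $E$ but its magnitude: with row-wise neighbors and bounded entries, $\|E\|_2$ is $O(1)$ in the classical setting, whereas here its entries scale with $T_0$ (and its eigenvalues with $T_0\sqrt{n}$, cf.\ Appendix \ref{app.pickc}), which is why the data-dependent bound $c$ must be introduced at all. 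This does not affect the validity of your proof.
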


\begin{proof}
We start with Lemma \ref{lem.step2} (proved in Appendix \ref{app.lemstep2}), which bounds $\Phi(\bm{\alpha};\Delta)$ as a function of $\lambda$, $c$, and $\Delta$.

\begin{restatable}{lemma}{steptwo}\label{lem.step2}
 For any $\Delta \geq 0$, $\Phi(\bm{\alpha};\Delta) = \frac{|det(\nabla\bm{b}(\bm{\alpha};\D'))|}{|det(\nabla\bm{b}(\bm{\alpha};\D))|} \leq (1 + \frac{c}{\lambda+\Delta})^2$.
\end{restatable}

Next, we use this result to prove our desired bound that $\Phi(\bm{\alpha};\Delta) \leq e^{\eps_1 -\eps_0}$. We do this by considering two cases. First, when $\Delta = 0$, then $\Phi(\bm{\alpha};\Delta=0) \leq 1+\frac{2c}{\lambda}+\frac{c^2}{\lambda^2} \leq e^{\eps_1 - \eps_0}$ by design, where the first inequality comes from Lemma \ref{lem.step2} and the second inequality comes from the choice of $\eps_0 = \eps_1 - \log(1+\frac{2c}{\lambda} + \frac{c^2}{\lambda^2})$ when $\Delta=0$. In the second case, $\Delta = \frac{c}{e^{\eps_1/4}-1}-\lambda$. Plugging this $\Delta$ value into the bound of Lemma \ref{lem.step2} gives $\Phi(\bm{\alpha};\Delta) \leq e^{\eps_1/2} = e^{\eps_1 - \eps_0}$, where the second inequality come from our choice of $\eps_0 = \eps_1/2$.
\end{proof}

Next, we bound $\Gamma(\bm{\alpha}) = \frac{\Pr(\bm{b}(\bm{\alpha};\D))}{\Pr(\bm{b}(\bm{\alpha};\D'))}$. Note that this term depends only on the noise distribution, and not on the value of $\Delta$. Algorithm \ref{alg.obj} offers two options of noise distributions: Laplace noise when $\delta=0$, and Gaussian noise when $\delta>0$. 

In the case of Laplace noise, the bound that $\Gamma(\bm{\alpha}) \leq e^{\eps_0}$ follows immediately from the Laplace mechanism instantiated with privacy parameter $\eps_0$ and Lemma \ref{lem.goff} to bound the sensitivity. The following lemma is proved in Appendix \ref{app.lemlaplace}.

\begin{restatable}{lemma}{laplacenoise}\label{lem.laplacenoise}
When $\bm{b}$ is sampled according to pdf $p(\bm{b};\beta) \propto \exp \left( -\frac{ ||\bm{b}||_2}{\beta} \right)$, where $\beta = \min\{ \frac{4 T_0 \sqrt{8+n}}{\eps_0}, \frac{c\sqrt{n}+4T_0}{\eps_0}\}$, then $\Gamma(\bm{\alpha}) = \frac{\Pr(\bm{b}(\bm{\alpha};\D))}{\Pr(\bm{b}(\bm{\alpha};\D'))} \leq e^{\eps_0}$.
\end{restatable}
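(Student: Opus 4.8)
The plan is to bound $\Gamma(\bm{\alpha})$ directly from the explicit form of the high-dimensional Laplace density, reducing the whole statement to a bound on the $\ell_2$ distance between the two realized noise vectors $\bm{b}(\bm{\alpha};\D)$ and $\bm{b}(\bm{\alpha};\D')$. Since $p(\bm{b};\beta)\propto \exp(-\|\bm{b}\|_2/\beta)$, the ratio of densities telescopes and the reverse triangle inequality gives
\[
\Gamma(\bm{\alpha}) = \exp\!\left( \frac{\|\bm{b}(\bm{\alpha};\D')\|_2 - \|\bm{b}(\bm{\alpha};\D)\|_2}{\beta} \right) \leq \exp\!\left( \frac{\|\bm{b}(\bm{\alpha};\D) - \bm{b}(\bm{\alpha};\D')\|_2}{\beta} \right).
\]
Hence it suffices to show the numerator is at most $\beta\eps_0 = \min\{4T_0\sqrt{8+n},\, c\sqrt{n}+4T_0\}$, after which substitution yields $\Gamma(\bm{\alpha}) \leq e^{\eps_0}$ immediately.

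Next I would use the closed-form expression $\bm{b}(\bm{\alpha};\D) = -\bigl( T_0\nabla\cL(\bm{\alpha},\D) + T_0\nabla r(\bm{\alpha}) + \Delta\bm{\alpha} \bigr)$ derived earlier from setting $\nabla J^{obj}=0$. The regularization gradient $T_0\nabla r(\bm{\alpha}) = \lambda\bm{\alpha}$ and the strong-convexity correction $\Delta\bm{\alpha}$ depend only on $\bm{\alpha}$ and fixed parameters, not on the database, so they cancel in the difference. Recalling $g(\f) = \cL(\f,\D') - \cL(\f,\D)$, what remains is
\[
\bm{b}(\bm{\alpha};\D) - \bm{b}(\bm{\alpha};\D') = -T_0\bigl( \nabla\cL(\bm{\alpha},\D) - \nabla\cL(\bm{\alpha},\D') \bigr) = T_0\,\nabla g(\bm{\alpha}).
\]
This identity connects the noise sensitivity to exactly the gradient object already controlled in the output-perturbation analysis, so one bound is immediate from Lemma \ref{lem.goff}: $\|T_0\nabla g(\bm{\alpha})\|_2 \leq T_0 \cdot 4\sqrt{8+n} = 4T_0\sqrt{8+n}$.

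For the second bound I would expand $T_0\nabla\cL(\f,\D) = 2\bigl(X_{pre}X_{pre}^{\top}\f - X_{pre}\bm{y_{pre}}\bigr)$ explicitly, so that
\[
T_0\nabla g(\bm{\alpha}) = 2\bigl(X'_{pre}X'^{\top}_{pre} - X_{pre}X_{pre}^{\top}\bigr)\bm{\alpha} - 2\bigl(X'_{pre} - X_{pre}\bigr)\bm{y_{pre}}.
\]
The first term is bounded by $c\|\bm{\alpha}\|_2 \leq c\sqrt{n}$ using the definition of $c$ as an upper bound on the maximum absolute eigenvalue (equivalently the operator norm) of $2(X'_{pre}X'^{\top}_{pre} - X_{pre}X_{pre}^{\top})$, together with the entrywise bound on the output. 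For the second term, since $X_{pre}$ and $X'_{pre}$ differ in a single donor row $i$, the vector $(X'_{pre}-X_{pre})\bm{y_{pre}}$ is supported on a single coordinate of magnitude at most $\sum_{t}|x'_{i,t}-x_{i,t}|\,|y_t| \leq 2T_0$, giving a contribution of $4T_0$. Summing yields $c\sqrt{n}+4T_0$, and taking the smaller of the two bounds gives $\|\bm{b}(\bm{\alpha};\D)-\bm{b}(\bm{\alpha};\D')\|_2 \leq \min\{4T_0\sqrt{8+n},\, c\sqrt{n}+4T_0\} = \beta\eps_0$, completing the proof. The step I expect to require the most care is this second sensitivity bound: correctly exploiting the cross (rank-two) structure of $X'_{pre}X'^{\top}_{pre} - X_{pre}X_{pre}^{\top}$ so that the eigenvalue bound $c$ genuinely applies, and justifying the domain bound $\|\bm{\alpha}\|_2 \leq \sqrt{n}$ uniformly over all possible outputs $\bm{\alpha}$ (and not merely the true coefficient vector $\f$), since the bound must hold pointwise for every $\bm{\alpha}$ in the range.
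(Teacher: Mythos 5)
Your proposal follows essentially the same route as the paper's proof: the reverse triangle inequality on the Laplace density ratio reduces $\Gamma(\bm{\alpha})$ to bounding $\|\bm{b}(\bm{\alpha};\D)-\bm{b}(\bm{\alpha};\D')\|_2 = \|T_0\nabla g(\bm{\alpha})\|_2$, which is then controlled both by $4T_0\sqrt{8+n}$ via Lemma \ref{lem.goff} and by $c\sqrt{n}+4T_0$ via the expansion $2(X'_{pre}X'^{\top}_{pre}-X_{pre}X^{\top}_{pre})\bm{\alpha} - 2(X'_{pre}-X_{pre})\bm{y_{pre}}$ together with $\|E\|_{2,2}\le c$ and $\|\bm{\alpha}\|_2\le\sqrt{n}$, so the minimum choice of $\beta$ covers both cases exactly as in the paper. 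The point you flag as delicate (bounding $\|\bm{\alpha}\|_2$ uniformly over outputs) is resolved in the paper by the same assertion you make, namely that entries of $\bm{\alpha}$ lie in $[-1,1]$.
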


The two different $\beta$ values come from two different upper bounds on the sensitivity, and the minimum value will give a tighter bound.

In the case where $\delta>0$ and the Gaussian Mechanism is used, we cannot simply bound $\Gamma(\bm{\alpha}) = \frac{\Pr(\bm{b}(\bm{\alpha};\D))}{\Pr(\bm{b}(\bm{\alpha};\D'))}$ with probability 1. Instead, the bound must incorporate the $\delta$ term to bound $\Gamma(\bm{\alpha})$ with probability $1-\delta$ over the internal randomness of the algorithm, as in Lemma \ref{lem.step4gauss}, formally proven in Appendix \ref{app.lemstep4gauss}.

\begin{restatable}{lemma}{stepfourgauss}\label{lem.step4gauss}
When $\bm{b} \sim \mathcal{N}(0, \beta^2 I_n)$, where $\beta = \frac{4 T_0 \sqrt{8+n} \sqrt{2 \log \frac{2}{\delta} + \eps_0}}{\eps_0}$, then $\Gamma(\bm{\alpha}) = \frac{\Pr(\bm{b}(\bm{\alpha};\D))}{\Pr(\bm{b}(\bm{\alpha};\D'))} \leq e^{\eps_0}$ with probability at least $1-\delta$.
\end{restatable}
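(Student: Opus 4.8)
The plan is to reduce the bound on $\Gamma(\bm{\alpha})$ to the standard tail analysis of the Gaussian Mechanism, calibrated to the $\ell_2$-sensitivity that the bijection imposes on the noise vector. Starting from the closed form $\bm{b}(\bm{\alpha};\D) = -\left(T_0\nabla\cL(\bm{\alpha}) + T_0\nabla r(\bm{\alpha}) + \Delta\bm{\alpha}\right)$ derived above, I first observe that the regularizer gradient $\nabla r(\bm{\alpha})$ and the term $\Delta\bm{\alpha}$ are data-independent and therefore cancel across neighboring databases. Hence, for every fixed output $\bm{\alpha}$,
\[
\bm{b}(\bm{\alpha};\D') - \bm{b}(\bm{\alpha};\D) = -T_0\left(\nabla\cL(\bm{\alpha};\D') - \nabla\cL(\bm{\alpha};\D)\right) = -T_0\nabla g(\bm{\alpha}),
\]
and Lemma \ref{lem.goff} gives $\|\bm{b}(\bm{\alpha};\D') - \bm{b}(\bm{\alpha};\D)\|_2 \le 4T_0\sqrt{8+n} =: \Delta_{\bm{b}}$ uniformly over all outputs $\bm{\alpha}$. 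This uniform sensitivity is precisely the quantity against which the scale $\beta$ is calibrated.

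Next I would expand $\ln\Gamma(\bm{\alpha})$ for the Gaussian density $p(\bm{b}) \propto \exp\big(-\|\bm{b}\|_2^2/(2\beta^2)\big)$. Writing $\bm{b}_1 := \bm{b}(\bm{\alpha};\D)$ and $\bm{d} := \bm{b}(\bm{\alpha};\D') - \bm{b}_1$, the quadratic exponent telescopes to
\[
\ln\Gamma(\bm{\alpha}) = \frac{1}{2\beta^2}\left(\|\bm{b}_1 + \bm{d}\|_2^2 - \|\bm{b}_1\|_2^2\right) = \frac{1}{2\beta^2}\left(2\,\bm{b}_1^{\top}\bm{d} + \|\bm{d}\|_2^2\right).
\]
Through the bijection, sampling the output $\bm{\alpha}$ from the mechanism on $\D$ is equivalent to sampling $\bm{b}_1 \sim \mathcal{N}(0,\beta^2 I_n)$, so the quantity to control is $\Pr_{\bm{b}_1}[\ln\Gamma(\bm{\alpha}) > \eps_0]$. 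The shift $\|\bm{d}\|_2^2 \le \Delta_{\bm{b}}^2$ is deterministic, while the stochastic part is the linear form $\bm{b}_1^{\top}\bm{d}$, which for a fixed direction $\bm{d}$ is distributed as $\mathcal{N}(0,\beta^2\|\bm{d}\|_2^2)$.

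It then remains to apply the one-sided Gaussian tail bound $\Pr[\mathcal{N}(0,\sigma^2) > t] \le \exp\!\big(-t^2/(2\sigma^2)\big)$ with $\sigma = \beta\Delta_{\bm{b}}$ to the event $\bm{b}_1^{\top}\bm{d} > \beta^2\eps_0 - \tfrac{1}{2}\|\bm{d}\|_2^2$, and to verify the stated $\beta$ suffices. Substituting $\beta = \Delta_{\bm{b}}\sqrt{2\log(2/\delta) + \eps_0}/\eps_0$ and writing $K := 2\log(2/\delta) + \eps_0$, a short calculation shows the exponent equals
\[
\frac{K}{2}\left(1 - \frac{\eps_0}{2K}\right)^2 = \log\tfrac{2}{\delta} + \frac{\eps_0^2}{8K} \;\ge\; \log\tfrac{2}{\delta},
\]
so the probability is at most $\delta/2 \le \delta$, as claimed. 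The delicate point — and the main obstacle — is that $\bm{d}$ is not a genuinely fixed vector: it depends on $\bm{\alpha}$, hence on the realized noise $\bm{b}_1$, so $\bm{b}_1^{\top}\bm{d}$ is not literally a one-dimensional Gaussian. The argument must therefore justify, as in the objective-perturbation analyses of \cite{CMS11, KST12}, that the privacy-loss tail is still governed by the fixed worst-case direction of norm $\Delta_{\bm{b}}$ rather than by a direction adversarially aligned with $\bm{b}_1$; establishing this reduction (again leveraging that $\|\bm{d}\|_2 \le \Delta_{\bm{b}}$ holds uniformly over all outputs, mirroring the uniform bound on $\Phi$ in Lemma \ref{lem.step2}) is where essentially all the difficulty lies.
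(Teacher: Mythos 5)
Your proposal follows essentially the same route as the paper's own proof in Appendix \ref{app.lemstep4gauss}: the same bijection-based closed form for $\bm{b}(\bm{\alpha};\D)$, the same cancellation giving $\bm{b}(\bm{\alpha};\D')-\bm{b}(\bm{\alpha};\D)=-T_0\nabla g(\bm{\alpha})$ with the uniform sensitivity bound $4T_0\sqrt{8+n}$ from Lemma \ref{lem.goff}, the same expansion of $\ln\Gamma(\bm{\alpha})$ into $\frac{1}{2\beta^2}\left(2\bm{b}_1^{\top}\bm{d}+\|\bm{d}\|_2^2\right)$, and the same one-sided Gaussian tail bound (Lemma \ref{lem.chernoffgauss} in the paper). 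The only difference is organizational: the paper defines a \textbf{GOOD} event with threshold $\beta\|\bm{d}\|_2\sqrt{2\log(2/\delta)}$ and then solves a quadratic inequality for the minimal admissible $\beta$, whereas you substitute the stated $\beta$ directly and check that the resulting exponent is at least $\log\frac{2}{\delta}$; your algebra is correct, and arguably cleaner.

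Regarding the ``delicate point'' you flag at the end---that $\bm{d}$ depends on $\bm{\alpha}$, hence on the realized noise $\bm{b}_1$, so $\bm{b}_1^{\top}\bm{d}$ is not literally a one-dimensional Gaussian---you are right that this is a genuine subtlety, but you should know that the paper's proof does not resolve it either: it simply asserts $\langle\bm{b}(\bm{\alpha};\D),h(\bm{\alpha})\rangle\sim\mathcal{N}(0,\beta^2\|h(\bm{\alpha})\|_2^2)$, implicitly treating $h(\bm{\alpha})$ as a fixed vector. The standard repair, in the spirit of \cite{KST12}, does go through in this transposed setting with no loss in the constants. From the partial-derivative formulas in the proof of Lemma \ref{lem.goff}, $\nabla g(\bm{\alpha})$ always lies in the fixed two-dimensional subspace spanned by $X_{pre}(\bm{x'_i}-\bm{x_i})$ and the standard basis vector $\bm{e_i}$, where $i$ is the differing donor row and $\bm{x_i},\bm{x'_i}\in\R^{T_0}$ are as in Equation \eqref{eq.E}; this subspace depends only on $\D,\D'$, not on $\bm{\alpha}$. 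Letting $P$ denote the orthogonal projection onto it, one has
\[
\sup_{\bm{\alpha}}\ \bm{b}_1^{\top}\bm{d}(\bm{\alpha})\ \leq\ \|P\bm{b}_1\|_2\cdot\sup_{\bm{\alpha}}\|\bm{d}(\bm{\alpha})\|_2\ \leq\ \|P\bm{b}_1\|_2\cdot 4T_0\sqrt{8+n},
\]
and $\|P\bm{b}_1\|_2^2/\beta^2$ is chi-squared with two degrees of freedom, whose tail $\Pr[\chi^2_2>x]=e^{-x/2}$ gives $\Pr\left[\|P\bm{b}_1\|_2\geq\beta\sqrt{2\log\frac{2}{\delta}}\right]\leq\frac{\delta}{2}$---exactly the threshold both you and the paper use. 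So the bound holds uniformly over the data-dependent direction, and inserting this projection step would make both your argument and the paper's fully rigorous.
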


Finally, we combine the bounds on $\Phi(\bm{\alpha};\Delta)$ and $\Gamma(\bm{\alpha})$ to complete the proof. When $\delta=0$ with Laplace noise, Lemmas \ref{lem.phi} and \ref{lem.laplacenoise} combine immediately to give that $\Phi(\bm{\alpha};\Delta) \Gamma(\bm{\alpha}) \leq e^{\eps_1 - \eps_0 + \eps_0} = e^{\eps_1}$. When $\delta>0$ and Gaussian noise is used, we define $\mathcal{G}$ to be the good event that $\Gamma(\bm{\alpha}) \leq e^{\eps_0}$, which we know from Lemma \ref{lem.step4gauss} will happen with at least probability $1-\delta$. Then conditioned on $\mathcal{G}$ we have,
\[
    \frac{\Pr(\f^{obj}=\bm{\alpha}\;|\; \D, \mathcal{G}) }{\Pr(\f^{obj}=\bm{\alpha}\;| \; \D', \mathcal{G})} = \Gamma(\bm{\alpha}) \cdot \Phi(\bm{\alpha};\Delta) \leq e^{\eps_0} \cdot e^{\eps_1 - \eps_0} \leq e^{\eps_1}.
\]
We can then use this fact to derive our desired (unconditioned) privacy bound:
\begin{equation*}
\begin{split}
    \Pr(\f^{obj} = \bm{\alpha} \;|\; \D) &= \Pr(\mathcal{G}) \cdot \Pr( \f^{obj}=\bm{\alpha}\; |\; \D, \mathcal{G})+  \Pr(\overline{\mathcal{G}}) \cdot \Pr( \f^{obj}=\bm{\alpha}\; |\; \D, \overline{\mathcal{G}})\\
    &\leq e^{\eps_1} \Pr(\mathcal{G}) \cdot \Pr( \f^{obj}=\bm{\alpha}\; | \; \D', \mathcal{G}) + \delta\\
    &\leq e^{\eps_1} \Pr( \f^{obj}=\bm{\alpha}\;|\;\D') + \delta.
\end{split}
\end{equation*}

Hence, $\f^{obj}$ in Algorithm \ref{alg.obj} is $(\eps_1, \delta)$-DP and the final output $\bm{y}^{obj}$ is $(\eps_1 + \eps_2, \delta)$-DP by composition.

\section{Accuracy Guarantees of $DPSC_{obj}$}\label{s.accuracyobj}

In this section we analyze the accuracy of $DPSC_{obj}$. We first prove Theorem \ref{thm.acc.obj}, restated below for convenience.

\mainaccobj*

This theorem gives bounds on the predicted post-intervention target vector $\bm{y}^{obj}$, as measured by RMSE.
Similar to Theorem \ref{thm.acc.out}, this result is stated in full generality with respect to the distribution of data and the latent variables, and thus the bound depends on terms such as $||M_{post}||_{2,2}$ and $\mathbb{E}[||\f^{reg}-\f||_2]$.
Section \ref{s.accpostobj} provides a proof of this result, with omitted detailed deferred to Appendix \ref{app.proofsobj}.

Comparing the bound of Theorem \ref{thm.acc.obj} to that of Theorem \ref{thm.acc.out} for output perturbation, we see that the difference comes only from the respective terms $\E[||(\f^{(out \lor obj)}-\f^{reg}) ||_2]$. For output perturbation, the error $\f^{out}-\f^{reg}$ is simply the noise directly added to the output, so the expected norm of the error is simply the expected norm of the noise,  $a=\frac{4T_0\sqrt{8+n}}{\lambda \eps_1}$. For objective perturbation, the interpretation of this error terms is less straightforward, and is instead bounded using Lemma \ref{lem.fobjacc}.
As a simple case for comparison, when $\Delta=0$ and $\delta=0$ (i.e., using Laplace noise), the expected difference becomes $\E[||(\f^{obj}-\f^{reg}) ||_2] \leq \min\{ \frac{8T_0^2\sqrt{8+n}}{\lambda \eps_0}, \frac{2cT_0\sqrt{n}}{\lambda \eps_0}\}$. If the first term is the smaller of the two, then  $\E[||(\f^{obj}-\f^{reg}) ||_2]$ is bigger than $\E[||(\f^{out}-\f^{reg}) ||_2]$ because of the additional $T_0$ factor, and since $\eps_0<\eps_1$ (assuming the same $\eps_1$ values for comparison). If the second term is the minimum, then the upper bound on error is larger under objective perturbation when $c \geq \frac{2\sqrt{8+n}}{\sqrt{n}}$. In both cases, this diverges from the conclusions when comparing these methods for the standard ERM setting, where objective perturbation is known to provide better performance than output perturbation \cite{CMS11}.

\subsection{Accuracy of post-intervention prediction via objective perturbation $\emph{y}^{obj}$}\label{s.accpostobj}

We will prove Theorem \ref{thm.acc.obj}, which upper bounds the Root Mean Squared Error (RMSE) of $\bm{y}^{obj}$, defined as:
\begin{equation*}
RMSE(\bm{y}^{obj}) = \frac{1}{\sqrt{T-T_0}}\E[||\bm{y}^{obj}-\bm{m_{post}}||_2].
\end{equation*}

Using the facts that $\bm{y}^{obj} = \tilde{X}_{post}^{\top} \bm{f}^{obj}$, $\tilde{X}_{post} = X_{post} + M_{post}+Z_{post}$, and $\bm{m}_{post} = M_{post}^{\top}\f$ (by Equation \eqref{eq.model}), we can bound the expectation as follows:
\begin{align}\label{eq.firstbound}
    \E[||\bm{y}^{obj}-\bm{m}_{post}||_2]
    &= \E[|| \tilde{X}_{post}^\top \f^{obj} - M_{post}^\top \f ||_2] \notag\\
    &= \E[|| \tilde{X}_{post}^\top \f^{obj} - \tilde{X}_{post}^\top \f^{reg} + \tilde{X}_{post}^\top \f^{reg} - M_{post}^\top \f ||_2] \notag\\
    &= \E[|| (M_{post}+Z_{post}+W_{post})^\top (\f^{obj} - \f^{reg}) + (M_{post}+Z_{post}+W_{post})^\top \f^{reg} - M_{post}^\top \f ||_2] \notag\\
    &\leq \E[|| (M_{post}+Z_{post}+W_{post})^\top (\f^{obj}-\f^{reg}) ||_2] \notag\\
    &\quad + \E[||M_{post}^\top (\f^{reg} - \f) ||_2] + \E[|| (Z_{post}+W_{post})^\top \f^{reg} ||_2],
\end{align}
where the first equality is due to the definition of $\bm{y}^{obj}$, the second equality adds and subtracts the same term, the third equality collects terms and plugs in the expression for $\tilde{X}_{post}$, and the final step is due to triangle inequality.

Lemma \ref{lem.threebounds} already bounds the last two terms because they do not involve $\f^{obj}$ and Step 2 of Algorithms \ref{alg.output} and \ref{alg.obj} are the same. Specifically, we know that,
\begin{equation}\label{eq.secondbound}
\mathbb{E}[|| M_{post}^{\top}(\f^{reg} -\f)||_2]
\leq ||M_{post}||_{2,2} \mathbb{E}[||\f^{reg} -\f||_2] \; \text{and} \; \mathbb{E}[||(Z_{post}^{\top} + W_{post}^{\top}) \f^{reg} ||_2]
\leq \sqrt{n} \psi ( \sqrt{n(T-T_0) \sigma^2} + \tfrac{2\sqrt{T-T_0}}{\eps_2}  ).
\end{equation}

Thus we only need to bound the first term: 
\begin{align}\label{eq.thirdbound}
    \E[|| (M_{post}&+Z_{post}+W_{post})^\top (\f^{obj}-\f^{reg}) ||_2] \notag \\
    &\leq \E[|| M_{post}^\top (\f^{obj}-\f^{reg}) ||_2] + \E[|| (Z_{post}+W_{post})^\top (\f^{obj}-\f^{reg}) ||_2] \notag \\
    &\leq || M_{post}||_2 \E[||\f^{obj}-\f^{reg} ||_2] + \E[||Z_{post}+W_{post}||_2] \E[||(\f^{obj}-\f^{reg}) ||_2] \notag \\
     &\leq || M_{post}||_2 \E[||\f^{obj}-\f^{reg} ||_2] + ( \sqrt{n(T-T_0)\sigma^2} + \frac{2\sqrt{T-T_0}}{\eps_2} ) \E[||(\f^{obj}-\f^{reg}) ||_2],
\end{align}
where the first step is simply triangle inequality, the second step is due to the independence of $Z, W$ and $\f^{obj}$, and the third step comes from the proof of Lemma \ref{lem.threebounds} (see Appendix \ref{app.proofthreebounds}), where $\E[||Z_{post}+W_{post}||_2]$ was bounded as an intermediate step.

Thus we only need to derive a bound on $\E[||\f^{obj}-\f^{reg} ||_2]$, which we do in Lemma \ref{lem.fobjacc} (formally proven in Appendix \ref{app.lemfobjacc}) to complete the proof.

\begin{restatable}{lemma}{fobjacc}\label{lem.fobjacc}
The $\ell_2$ distance between $\f^{obj}$ and $\f^{reg}$ satisfies:
\[
\E[||\f^{obj}-\f^{reg} ||_2] \leq  \frac{2}{\lambda+\Delta}\E[||\bm{b}||_2] + \mathds{1}_{\Delta \neq 0} \left( \frac{1}{\lambda} + \frac{1}{\lambda+\Delta} \right) 2T_0^2 \sqrt{n},
\]
where $\bm{b}$ and $\Delta$ are computed internally by Algorithm \ref{alg.obj}.
\end{restatable}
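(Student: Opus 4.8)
The plan is to bound $\|\f^{obj} - \f^{reg}\|_2$ by comparing both minimizers to an auxiliary minimizer that isolates the two sources of discrepancy between the two objectives, namely the extra regularization $\frac{\Delta}{2T_0}\|\f\|_2^2$ and the noise term $\frac{1}{T_0}\bm{b}^{\top}\f$. Concretely, I would introduce
\[
\tilde{\f} = \argmin_{\f\in\R^{n}} \cL(\f) + \frac{\lambda+\Delta}{2T_0}\|\f\|_2^2,
\]
the minimizer of the objective that carries the additional regularization but drops the noise term, and then apply the triangle inequality
\[
\|\f^{obj} - \f^{reg}\|_2 \le \|\f^{obj} - \tilde{\f}\|_2 + \|\tilde{\f} - \f^{reg}\|_2 .
\]
This split is chosen precisely so that the first summand captures the noise contribution (producing the $\frac{2}{\lambda+\Delta}\E[\|\bm{b}\|_2]$ term) while the second captures the regularization gap (producing the $\mathds{1}_{\Delta\neq0}(\cdots)$ term), matching the structure of the claimed bound.

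For the first summand, $\f^{obj}$ and $\tilde{\f}$ minimize objectives differing only by $g(\f)=\frac{1}{T_0}\bm{b}^{\top}\f$, whose gradient is the \emph{constant} vector $\frac{1}{T_0}\bm{b}$. Since the common objective $\cL(\f)+\frac{\lambda+\Delta}{2T_0}\|\f\|_2^2$ is $\frac{\lambda+\Delta}{T_0}$-strongly convex by Lemma \ref{lem.convex}, I would invoke Lemma \ref{lem.cms} to obtain $\|\f^{obj}-\tilde{\f}\|_2 \le \frac{T_0}{\lambda+\Delta}\max_{\f}\|\nabla g(\f)\|_2 = \frac{\|\bm{b}\|_2}{\lambda+\Delta}$, which is within the stated constant. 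This is the clean case and is directly analogous to the $\Delta=0$ output-perturbation analysis.

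The second summand is the main obstacle, and is exactly where the transposed setting forces a departure from the direct Lemma \ref{lem.cms} argument used for $\f^{out}$. When $\Delta=0$ we have $\tilde{\f}=\f^{reg}$ and the term vanishes, explaining the indicator $\mathds{1}_{\Delta\neq0}$. When $\Delta\neq0$, one \emph{cannot} apply Lemma \ref{lem.cms} to the regularization gap $g_1(\f)=\frac{\Delta}{2T_0}\|\f\|_2^2$, because $\nabla g_1(\f)=\frac{\Delta}{T_0}\f$ is unbounded over $\R^{n}$, so $\max_{\f}\|\nabla g_1(\f)\|_2=\infty$. Instead, I would exploit that both $\tilde{\f}$ and $\f^{reg}$ are minimizers of regularized quadratics with explicit closed forms $\tilde{\f}=(X_{pre}X_{pre}^{\top}+\tfrac{\lambda+\Delta}{2T_0}I)^{-1}X_{pre}\bm{y_{pre}}$ and $\f^{reg}=(X_{pre}X_{pre}^{\top}+\tfrac{\lambda}{2T_0}I)^{-1}X_{pre}\bm{y_{pre}}$, bounding each norm by the reciprocal of the smallest eigenvalue of its regularized Gram matrix (namely $\tfrac{2T_0}{\lambda+\Delta}$ and $\tfrac{2T_0}{\lambda}$) times $\|X_{pre}\bm{y_{pre}}\|_2$. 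Using $\|X_{pre}\bm{y_{pre}}\|_2 \le T_0\sqrt{n}$, since each of the $n$ coordinates is a sum of $T_0$ products of entries bounded by $1$ in magnitude, together with the crude step $\|\tilde{\f}-\f^{reg}\|_2 \le \|\tilde{\f}\|_2 + \|\f^{reg}\|_2$, then yields the factor $\left(\tfrac{1}{\lambda}+\tfrac{1}{\lambda+\Delta}\right)2T_0^2\sqrt{n}$.

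Finally, I would take expectations over the internal randomness $\bm{b}$ and combine the two bounds; the second summand is deterministic given $\Delta$, so only $\E[\|\bm{b}\|_2]$ survives in the first, giving the stated inequality. The remaining subtlety is purely bookkeeping of constants through the strong-convexity and closed-form estimates: the factor $2$ in $\frac{2}{\lambda+\Delta}\E[\|\bm{b}\|_2]$ is harmless slack that conveniently lets a single expression cover both the $\Delta=0$ and $\Delta\neq0$ regimes.
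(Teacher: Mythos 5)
Your proposal is correct and follows essentially the same route as the paper's proof: the same auxiliary minimizer (your $\tilde{\f}$ is the paper's $\f^{\#}$), the same triangle-inequality split into a noise term and a regularization-gap term, and the same closed-form spectral-norm bound with $\|X_{pre}\bm{y_{pre}}\|_2 \leq T_0\sqrt{n}$ for the gap term. The only difference is in the noise term, where you invoke Lemma \ref{lem.cms} directly (valid, since the linear perturbation $\frac{1}{T_0}\bm{b}^{\top}\f$ has constant gradient), obtaining $\frac{\|\bm{b}\|_2}{\lambda+\Delta}$ --- a factor of $2$ sharper than the paper's inline strong-convexity computation, and comfortably within the stated bound.
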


Combining Equations \eqref{eq.firstbound}, \eqref{eq.secondbound}, and \eqref{eq.thirdbound} with Lemma \ref{lem.fobjacc} completes the proof of Theorem \ref{thm.acc.obj}:
\begin{align*}
RMSE(\bm{y}^{obj})
&\leq \frac{||M_{post}||_2}{\sqrt{T-T_0}} \left( \E[||(\f^{reg}-\f) ||_2] + \E[||(\f^{obj}-\f^{reg}) ||_2] \right)\\
&\quad + \left( \sqrt{n \sigma^2} + \frac{\sqrt{2}}{\eps_2} \right) \left( \sqrt{n} \psi + \E[||(\f^{obj}-\f^{reg}) ||_2] \right)\\
&\leq \frac{||M_{post}||_2}{\sqrt{T-T_0}} \left( \E[||(\f^{reg}-\f) ||_2] + \frac{2}{\lambda+\Delta}\E[||\bm{b}||_2] + \mathds{1}_{\Delta \neq 0} \left( \frac{1}{\lambda} + \frac{1}{\lambda+\Delta} \right) 2T_0^2 \sqrt{n} \right)\\
&\quad + \left( \sqrt{n \sigma^2} + \frac{\sqrt{2}}{\eps_2} \right) \left( \sqrt{n} \psi + \frac{2}{\lambda+\Delta}\E[||\bm{b}||_2] + \mathds{1}_{\Delta \neq 0} \left( \frac{1}{\lambda} + \frac{1}{\lambda+\Delta} \right) 2T_0^2 \sqrt{n} \right),
\end{align*}
where $\E[||\bm{b}||_2] = \sqrt{n\beta} = \sqrt{ \frac{n T_0 \zeta \sqrt{2 \log \frac{2}{\delta} + \eps_0}}{\eps_0} }$ for Gaussian noise and 
$\E[||\bm{b}||_2] = \min\{ \frac{4 T_0 \sqrt{8+n}}{\eps_0}, \frac{c\sqrt{n}+4T_0}{\eps_0}\}$ for Laplace noise.

\subsection{Closed-form bound on RMSE of Objective Perturbation}\label{s.objclosedform}

Using similar analysis as in Section \ref{s.accdist}, we can extend Theorem \ref{thm.acc.obj} to obtain the following closed-form accuracy bound that depends only on explicit input parameters, under the same distributional assumptions.

\begin{corollary}\label{cor.accuracy.obj}
If Assumptions \ref{assum.1}, \ref{assum.2}, and \ref{assum.3} hold, then for all $\xi \in (0,1)$ and $t\geq 1$, with probability at least $1-n^{-t^2}$, if $T_0 \geq  C(t/\xi)^2 k \log n$, we have
\begin{align*}
RMSE(\bm{y}^{out})
&\leq \sqrt{n}\left( \frac{ (\sqrt{2n\sigma^2} + \sqrt{2n\sigma^2s^2} \;)T_0 + \frac{\lambda}{2T_0}}{ (1-\xi)T_0 + \frac{\lambda}{2T_0}}+ \frac{2}{\lambda+\Delta}\E[||\bm{b}||_2] + \mathds{1}_{\Delta \neq 0} \left( \frac{1}{\lambda} + \frac{1}{\lambda+\Delta} \right) 2T_0^2 \sqrt{n} \right)\\
&\quad + \left(\sqrt{n \sigma^2} + \frac{\sqrt{2}}{\eps_2} \right) \left( \sqrt{n} \psi + \frac{2}{\lambda+\Delta}\E[||\bm{b}||_2] + \mathds{1}_{\Delta \neq 0} \left( \frac{1}{\lambda} + \frac{1}{\lambda+\Delta} \right) 2T_0^2 \sqrt{n} . \right),
\end{align*}
where $||\f^{reg}||_{\infty} \leq \psi$ for some $\psi>0$, and $\E[||\bm{b}||_2] = \sqrt{ \frac{n T_0 4\sqrt{8+n} \sqrt{2 \log \frac{2}{\delta} + \eps_0}}{\eps_0} }$ for Gaussian noise ($\delta>0$ case) and 
$\E[||\bm{b}||_2] = \min\{ \frac{4 T_0 \sqrt{8+n}}{\eps_0}, \frac{c\sqrt{n}+4T_0}{\eps_0}\}$ for Laplace noise ($\delta=0$ case), and $\eps_0$ and $\Delta$ are computed internally by the algorithm.
\end{corollary}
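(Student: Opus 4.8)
The plan is to mirror the derivation of Corollary \ref{cor.accuracy} from Theorem \ref{thm.acc.out}, since the bound of Theorem \ref{thm.acc.obj} contains exactly the same distribution-dependent quantities---namely $||M_{post}||_{2,2}$ and $\E[||\f^{reg}-\f||_2]$---as the output-perturbation bound. Comparing the two theorems, the only structural difference is that the term $\frac{4T_0\sqrt{8+n}}{\lambda\eps_1}$ (the expected norm of the output-perturbation noise) is replaced by $\frac{2}{\lambda+\Delta}\E[||\bm{b}||_2] + \mathds{1}_{\Delta\neq 0}\left(\frac{1}{\lambda}+\frac{1}{\lambda+\Delta}\right)2T_0^2\sqrt{n}$. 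Crucially, this replacement term is already expressed purely in terms of the algorithm's input parameters and internally-computed quantities ($\lambda,\Delta,T_0,n,\eps_0,\delta,c$, together with the closed form for $\E[||\bm{b}||_2]$ given in the statement), so it requires no further manipulation.

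First I would handle the $||M_{post}||_{2,2}$ factor exactly as in the proof of Corollary \ref{cor.accuracy}: using $||M_{post}||_{2,2}\leq ||M_{post}||_F \leq \sqrt{n(T-T_0)}$, which follows from the assumption that every entry of $M$ lies in $[-1,1]$, I obtain $\frac{||M_{post}||_{2,2}}{\sqrt{T-T_0}}\leq \sqrt{n}$. This replaces the leading coefficient of the first bracketed group and accounts for the $\sqrt{n}$ prefactor in the corollary.

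Next I would invoke Lemma \ref{lem.estbounds} to bound the remaining distributional term $\E[||\f^{reg}-\f||_2]$. Under Assumptions \ref{assum.1}, \ref{assum.2}, and \ref{assum.3}, that lemma gives (with probability at least $1-n^{-t^2}$ whenever $T_0\geq C(t/\xi)^2 k\log n$)
\[
\E[||\f^{reg}-\f||_2]\leq \frac{(\sqrt{2n\sigma^2}+\sqrt{2n\sigma^2 s^2})T_0+\frac{\lambda}{2T_0}}{(1-\xi)T_0+\frac{\lambda}{2T_0}},
\]
which is precisely the first term inside the parentheses of the corollary. Substituting both bounds into Theorem \ref{thm.acc.obj} and carrying the same high-probability event through yields the claimed closed-form inequality.

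The main point to emphasize is that there is no genuinely new obstacle here. All the technical heavy-lifting---the random-matrix concentration of Lemma \ref{lem.cor552}, the eigenvalue bound of Lemma \ref{lem.eigen}, and the resulting bound on $\E[||\f^{reg}-\f||_2]$ in Lemma \ref{lem.estbounds}---was already established in Section \ref{s.accdist} and is reused verbatim, since $\f^{reg}$ and $\f$ are the same objects irrespective of which perturbation mechanism is applied. The only step requiring care is to verify that the objective-perturbation-specific error $\E[||\f^{obj}-\f^{reg}||_2]$, already bounded by Lemma \ref{lem.fobjacc} in terms of input parameters and folded into Theorem \ref{thm.acc.obj}, need not be disturbed---so that the two substitutions above suffice to complete the proof.
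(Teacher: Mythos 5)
Your proposal is correct and takes essentially the same route as the paper: the paper also derives Corollary \ref{cor.accuracy.obj} from Theorem \ref{thm.acc.obj} by the two substitutions you describe, namely $\frac{||M_{post}||_{2,2}}{\sqrt{T-T_0}}\leq\sqrt{n}$ (from the entry bounds on $M$) and the closed-form bound on $\E[||\f^{reg}-\f||_2]$ from Lemma \ref{lem.estbounds}, while leaving the objective-perturbation error terms from Lemma \ref{lem.fobjacc} untouched since they already depend only on input parameters. You also correctly carry the high-probability event and sample-size condition of Lemma \ref{lem.estbounds} through to the final statement, so there is no gap.
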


The additional terms that arise due to the noise required to guarantee differential privacy in this setting, relative to the bound on $RMSE(\bm{y}^{reg})$ in Equation \eqref{eq.rmse.yreg}, are:
\begin{equation}
\label{eq.obj.closed.bound}
    (\sqrt{n}+\sqrt{n\sigma^2}+\frac{\sqrt{2}}{\eps_2}) \left( \frac{2}{\lambda+\Delta}\E[||\bm{b}||_2] + \mathds{1}_{\Delta \neq 0} \left( \frac{1}{\lambda} + \frac{1}{\lambda+\Delta} \right) 2T_0^2 \sqrt{n} \right) + \frac{\sqrt{2n}}{\eps_2}\psi.
\end{equation}

To analyze this expression in a simplified way, assume the regularization parameter is $\lambda=O(T_0)$ so $\frac{T_0}{\lambda} = O(1)$, and that Laplace noise was used (i.e., $\delta=0$), so that $\E[||\bm{b}||_2] = O(\frac{T_0 \sqrt{n}}{\eps_0})$. Then the first parenthesis of Equation \eqref{eq.obj.closed.bound} becomes $O(\sqrt{n}+\frac{1}{\eps_2})$, the second parenthesis becomes $O(\frac{T_0\sqrt{n}}{\eps_0} + T_0\sqrt{n})$, and the additive term becomes $O(\frac{\sqrt{n}}{\eps_2})$. Since $\eps_0 < \eps_1$, we replace $\eps_0$ by $\eps_1$ in the bounds. Then Equation \eqref{eq.obj.closed.bound} is $O(\frac{T_0 n}{\eps_1} + \frac{T_0 \sqrt{n}}{\eps_1 \eps_2})$ from the product of two parentheses, and omitting the additive term, which is asymptotically dominated by the others.

Comparing to the cost of privacy in Output Perturbation in Corollary \ref{cor.accuracy}, we see that the bound in Corollary \ref{cor.accuracy} does not depend on $T_0$.
This additional dependence on $T_0$ arises for Objective Perturbation from the second parenthesis containing $\E[||\bm{b}||_2 ]$ and the indicator function, which is absent in the output perturbation case.

\section{Experimental Results and Guidance for Parameter Tuning}\label{s.experiments}

In this section, we present experimental results on the empirical performance of both $DPSC_{out}$ and $DPSC_{obj}$. As a baseline, we compare against both our theoretical bounds in Theorems \ref{thm.acc.out} and \ref{thm.acc.obj}, and the empirical performance of the non-private synthetic control (Algorithm \ref{alg.synth}) with a quadratic loss function. In our experiments, we vary the regularization parameter $\lambda$, the number of donors $n$, the number of pre-intervention observations $T_0$, and the privacy parameter $\epsilon$. For a fair comparison, we use $\delta=0$ for the objective perturbation, except in Section \ref{s.exp.eps} where the impact of $\delta$ is explored.

We observe that $DPSC_{obj}$ outperforms $DPSC_{out}$ for many but not all of the parameter regimes considered, and that both methods provide better performance than their theoretical bounds suggests. We also observe that choosing $\lambda=T_0$ yields optimal or near-optimal RMSE across as variety of parameter regimes, and that empirical performance of both private methods improves as $\eps$ grows large.

\subsection{Dataset and Performance Evaluation}

We use synthetic datasets in our experiments, which enables us to observe the impact of varying the relevant parameters in the data and to match the modeling assumptions of Section \ref{s.model}. We use $T_0\in\{10,100\}$ and $n\in\{10,100\}$ when generating the datasets, corresponding to both smaller and larger number of donors and observations, and we always use $T=T_0 + 3$, meaning that the synthetic control algorithm must predict the next three data points in the donor pool.

The true signals $M$ and $\mathbf{m}$ are generated according to a linear model with random slope, formalized as:
\[ M_{i,t} = \theta_i t \quad \mbox{and} \quad m_{t} = \theta_0  t, \; \forall i \in [n], t \in [T], \]
where the $\theta_i$ are sampled i.i.d.~from a truncated Gaussian with mean $4$, variance $1$, and support $[3,5]$.
Elements of the noise terms $Z$ and $\mathbf{z}$ are sampled i.i.d.~from a truncated Gaussian with mean zero, variance $0.1$ and support $[-1,1]$. Following Equation \eqref{eq.noise}, the donor and target data were respectively defined as $X = M+Z$ and $\mathbf{y}=\mathbf{m}+\mathbf{z}$.
Figure \ref{fig.exp.data} shows an example synthetic dataset generated in this way, with the donor data in grey and the target in red.

\begin{figure}[h]
\centering
\includegraphics[width=0.5\textwidth]{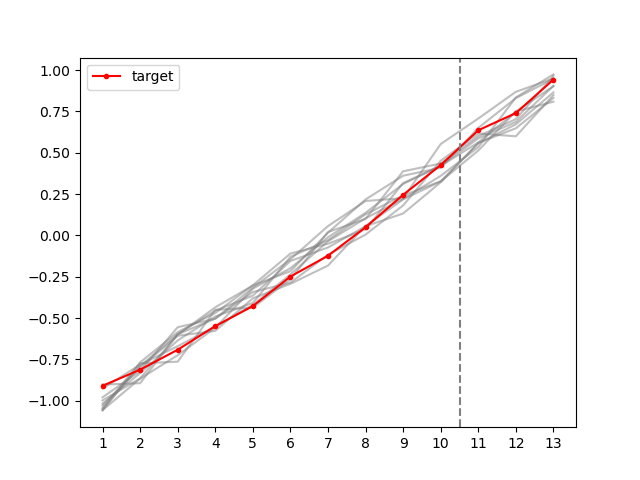}
\caption{Illustration of example synthetic dataset generated with $T_0=10$ and $n=10$. The target time series is in red, and the donor time series are all in grey.}
\label{fig.exp.data}
\end{figure}

In each experiment for a fixed $T_0$ and $n$, a single database was generated and then all algorithms were run 500 times on each dataset. We evaluate post-intervention RMSE as the accuracy measure of interest, as in our theoretical results. Error bands in all figures show $95\%$ confidence intervals, taken over the randomness in data generation and the algorithms.

\subsection{Optimizing regularization parameter $\lambda$}
\label{s.exp.lambda}

The first question we aim to address in our experiments is the impact of the parameter $\lambda$ on performance, and guidance for analysts in their choice of optimal $\lambda$. In our first set of experiments, we fixed $\eps_1=\eps_2=50$, $T_0=10$, and $n=10$ -- other values of $\eps$ and $(T_0,n)$ are considered respectively in Sections \ref{s.exp.eps} and \ref{s.exp.nandT0} -- and empirically measured pre- and post-intervention RMSE as a function of $\lambda$.

Figure \ref{fig.exp.lambda.1} shows the post-intervention RMSE of $DPSC_{out}$, $DPSC_{obj}$, and non-private synthetic control as a function of $\lambda$, for values $\lambda \in \{ 1, 2, 5, 10, 20, 50, 100, 200, 500, 1000, 2000, 5000 \}$. We observe that the performance of the three methods converges as $\lambda$ grows large, but that the RMSE of the private methods are larger than that of the non-private method for smaller $\lambda \leq 20$, with Objective Perturbation substantially outperforming Output Perturbation.

\begin{figure}[t]
\centering
\includegraphics[width=0.75\textwidth]{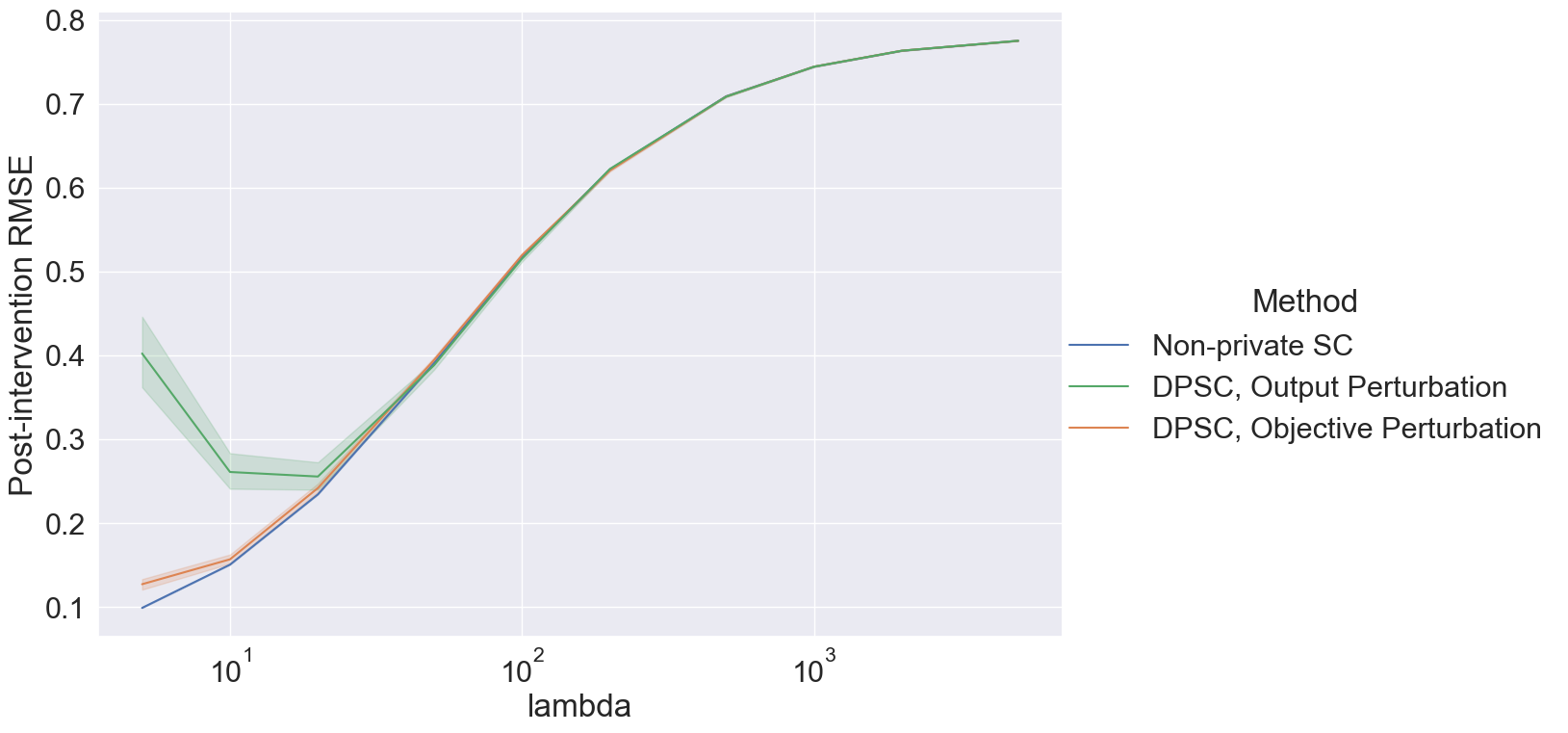}
\caption{
Behavior of post-intervention RMSE over $\lambda$, tested on a synthetic datast with $T_0=10, n=10$ for the synthetic control methods of non-private SC (blue), $DPSC_{out}$ (green), and $DPSC_{obj}$ (orange).}
\label{fig.exp.lambda.1}
\end{figure}

The U-shape of the curve for the private methods has a natural theoretical explanation: smaller $\lambda$ increases sensitivity and thus privacy noise and RMSE, while larger $\lambda$ increases the weight of the regularization term in the loss function, which will cause all three regularized methods to converge to the same value. Later, in Section \ref{s.exp.nandT0}, we observe that that the RMSE of DPSC is minimized around $\lambda=T_0$ for all four datasets of varying sizes (see Figure \ref{fig.exp.data.1}), which can aid the analyst in choosing an optimal $\lambda$.
This is consistent with our theoretical recommendations that $\lambda$ should be $O(T_0)$.

\begin{figure}[b]
\centering
\includegraphics[width=1\textwidth]{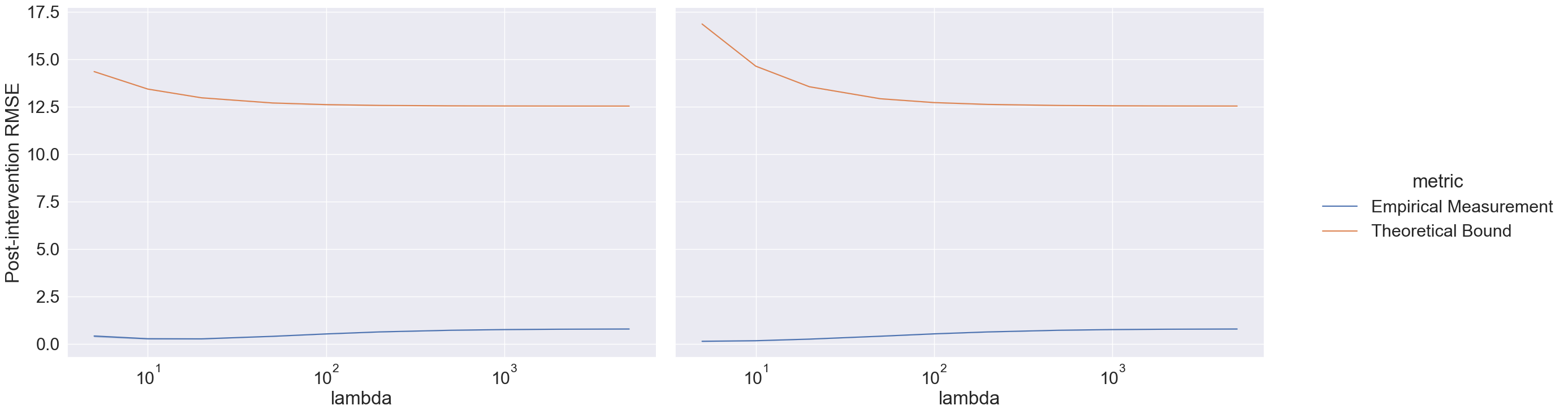}
\caption{Comparison of post-intervention RMSE in theory versus in practice, using $DPSC_{out}$ (left) and $DPSC_{obj}$ (right) on a dataset of size $n=10, T_0=10$.}
\label{fig.exp.lambda.2}
\end{figure}

Figure \ref{fig.exp.lambda.2} compares the empirical post-intervention RMSE of $DPSC_{out}$ and $DPSC_{obj}$ with the theoretical RMSE bounds of Theorem \ref{thm.acc.out} and \ref{thm.acc.obj} instantiated with the parameters values used in our experiments. We observe that $DPSC_{obj}$ performs better than $DPSC_{out}$ at smaller $\lambda$ values, although performance of both algorithms converges when $\lambda$ is large. We also observe that the empirical performance of both algorithms is dramatically better than the theoretical bounds predict, which suggests potential room for theoretical improvements.

\subsection{Effect of privacy parameters $\epsilon$ and $\delta$}
\label{s.exp.eps}

Next, we address the effect of $\eps$ in the performance of both $DPSC_{out}$ and $DPSC_{obj}$. In these experiments, we use $\lambda=T_0$ based on the findings in Section \ref{s.exp.lambda} and consider overall privacy budget $\eps=\eps_1 + \eps_2$ with $\eps_1 =\eps_2=\eps/2$. That is, the privacy budget is split evenly between the regression and projection steps in both algorithms. Results are presented for $\eps \in \{2, 4, 10, 20, 40, 100, 200\}$; stronger privacy guarantees (i.e., $\eps \leq 2$) were tested but excluded from the plots due to substantially higher RMSE values.

Figure \ref{fig.exp.eps.1} shows the post-intervention RMSE of $DPSC_{out}$ and $DPSC_{obj}$. As is to be expected, error diminishes with larger $\epsilon$. We also continue to observe $DPSC_{obj}$ outperforming $DPSC_{out}$ for most $\epsilon$ values, as in Section \ref{s.exp.lambda}. $DPSC_{out}$ performs slightly better thatn $DPSC_{obj}$ at $\eps=2$ in this dataset ($T_0=10$ and $n=10$); however, it is not the case for all datasets (see, e.g., Figure \ref{fig.exp.data.2} in Section \ref{s.exp.nandT0}).

\begin{figure}[h]
\centering
\includegraphics[width=0.75\textwidth]{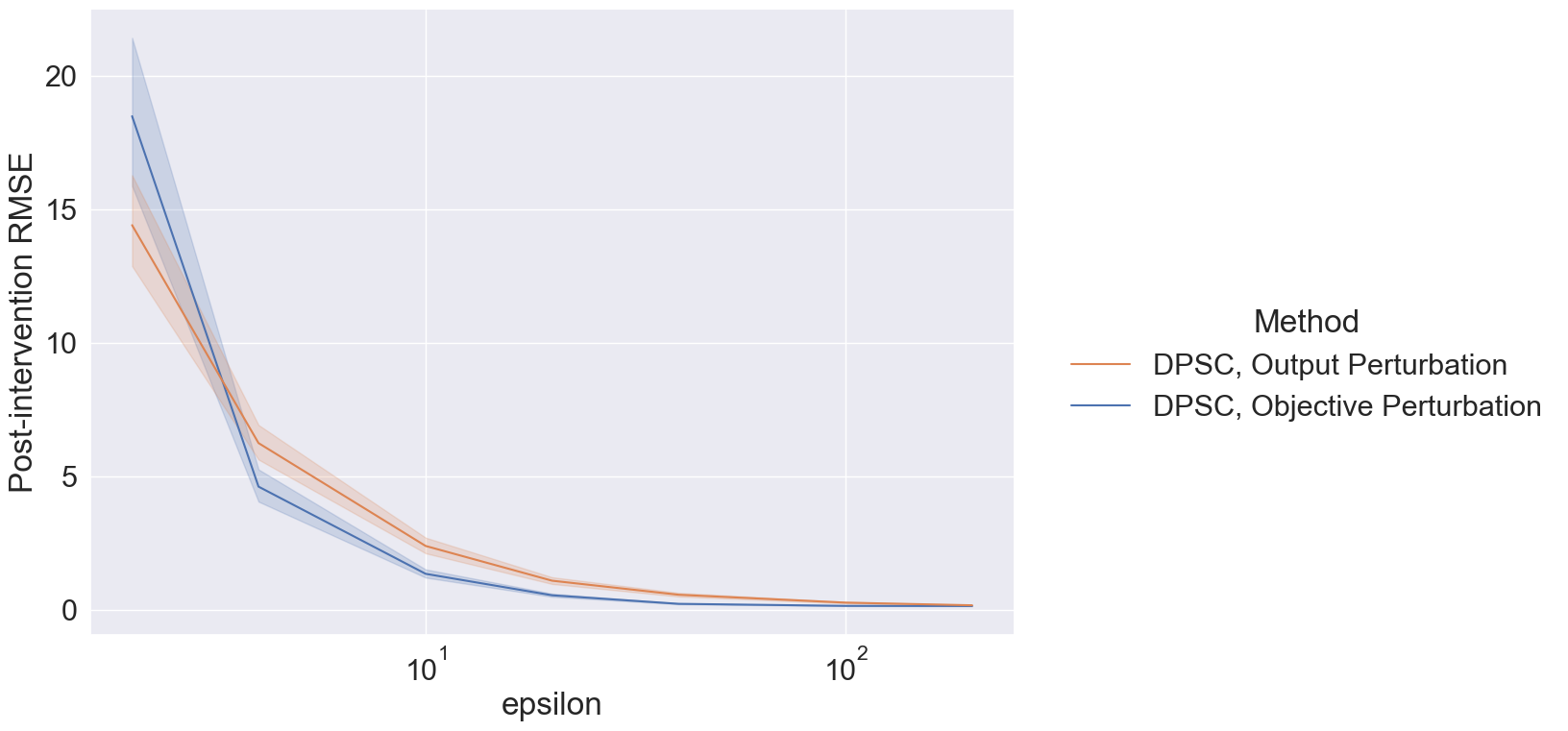}
\caption{Post-intervention RMSE of $DPSC_{out}$ (blue), and $DPSC_{obj}$ (red) for varying $\eps$, tested on a dataset with $T_0=10$ and $n=10$.}
\label{fig.exp.eps.1}
\end{figure}

For epsilon-regimes that are closer to the values chosen in practice (i.e., $\eps\leq 4$), the empirical RMSE was too high for practical use. We suggest a few methods to remedy this in future work.  First, a rejection sampling step can be introduced between the learning and projection steps of each algorithm that compares the noisy $\bm{f}^{out}$ and the original $\bm{f}^{reg}$. This step must also be done differentially privately to maintain the overall privacy guarantee. Additionally, our experiments only considered pure-DP with $\delta=0$; next we relax to approximate-DP with $\delta>0$ and observe that this yields lower RMSE than $\delta=0$.

\begin{figure}[h]
\centering
\includegraphics[width=0.96\textwidth]{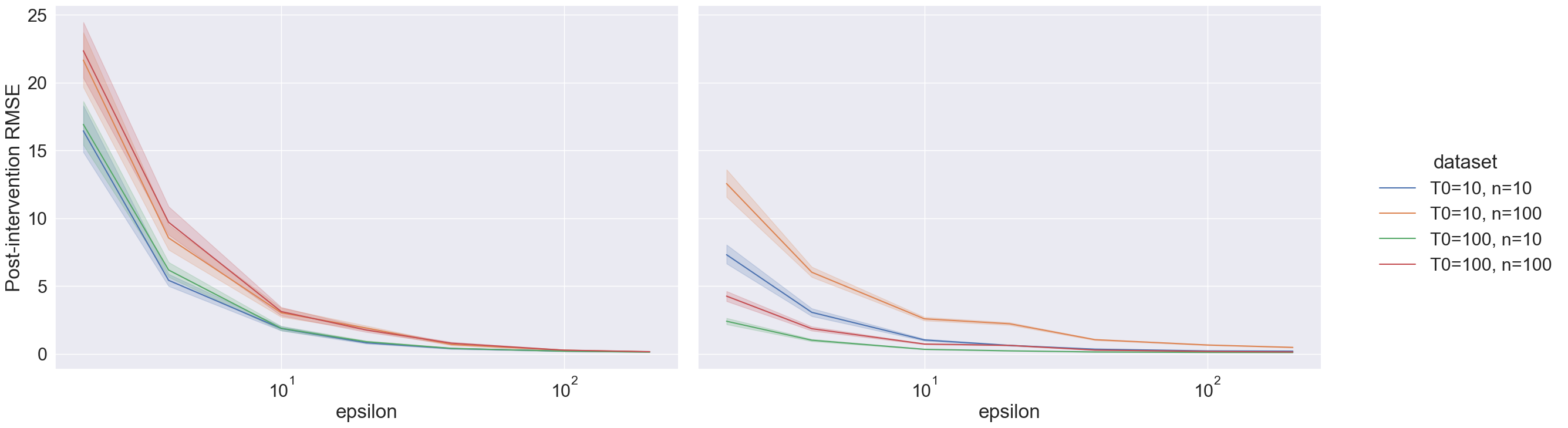}
\caption{Post-intervention RMSE of $DPSC_{obj}$  with Laplace noise (left, $\delta=0$) and Gaussian noise (right, $\delta=10^{-6}$) as a function of $\eps$ on four synthetic datasets of varying size.}
\label{fig.exp.delta}
\end{figure}

Keeping $\lambda = T_0$ fixed, Figure \ref{fig.exp.delta} shows the post-intervention RMSE of $DPSC_{obj}$ with Laplace noise (left) and with Gaussian noise (right, $\delta = 10^{-6}$). We observe improvements in RMSE with Gaussian noise for most epsilon values, especially for $\eps<10$. The impact was more evident in the datasets with a larger number of pre-intervention data points (i.e., $T_0=100$). This dependence on $T_0$ also changed the relative ordering of performance of each dataset; on the left with $\delta=0$, the two datasets with $n=10$ performed comparably had significantly lower RMSE than the two with $n=100$. On the right with $\delta>0$, the performance of all four datasets is stratified, now with the two datasets with $T_0=100$ performing the best. This evidence suggests meaningful empirical improvements can be obtained by setting a positive $\delta$ value, especially when $T_0$ is large.

\subsection{Impact of database size parameters $n$ and $T_0$}
\label{s.exp.nandT0}

Finally, we repeat the experiments of Sections \ref{s.exp.lambda} and \ref{s.exp.eps} with datasets of varying sizes. We consider four synthetic databases corresponding to combinations of $T_0\in\{10,100\}$ and $n\in\{10,100\}$, and evaluate the post-intervention RMSE under varying $\lambda$ and $\epsilon$.

Figure \ref{fig.exp.data.1} shows that the optimal choice of $\lambda$ roughly remains $\lambda=T_0$ for all sizes of database considered. The orange and blue curves in the figure corresponding to the two databases with $T_0=10$ are approximately minimized at $\lambda=10$, while the green and red curves with $T_0=100$ are approximately minimized at $\lambda=100$. While this is clearer in the figure for Objective Perturbation, it is also true for Output Perturbation, although the U-shape is less visible due to the larger scale of the $y$-axis.

\begin{figure}[h]
\centering
\includegraphics[width=0.95\textwidth]{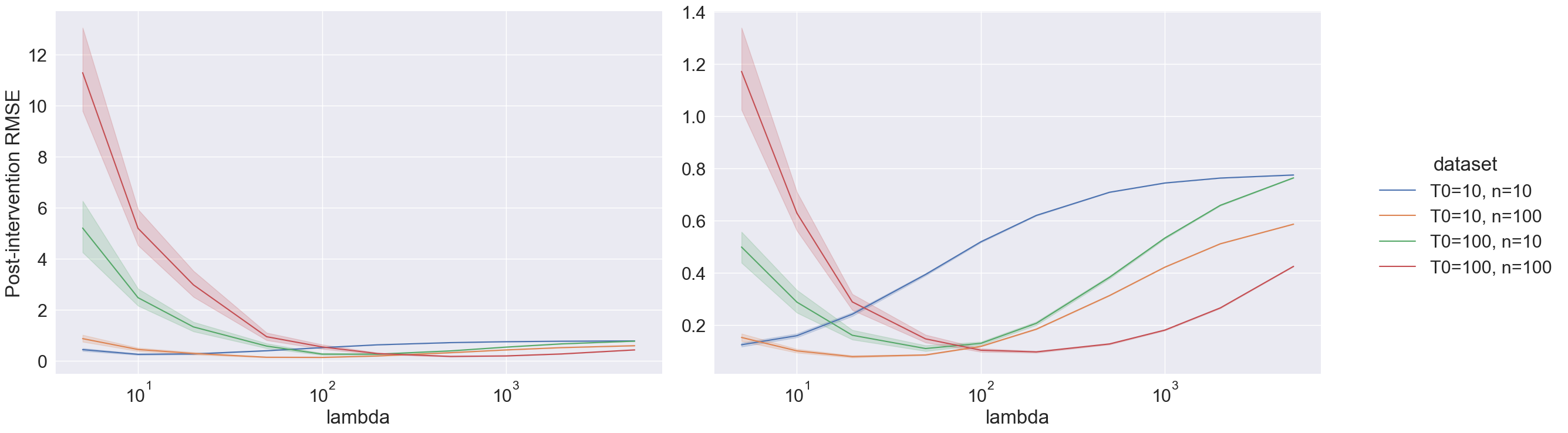}
\caption{Post-intervention RMSE of $DPSC_{out}$ (left) and $DPSC_{obj}$ (right) as a function of $\lambda$ on four synthetic datasets of varying size.}
\label{fig.exp.data.1}
\end{figure}

In Figure \ref{fig.exp.data.2}, we observe that the performance of $DPSC_{obj}$ (right) has less dependence on the size of the dataset than $DPSC_{out}$ (left), where we observe that smaller $n$ yields significantly better performance, especially when $\epsilon$ is not too large ($\epsilon < 100$). This difference in performance based on $n$ is consistent with our theoretical analysis in Section \ref{s.privacycost}, where we show that the post-intervention RMSE of $DPSC_{out}$ is $O(\frac{n}{\eps})$. For $DPSC_{obj}$, the accuracy bound has additional dependency on $T_0$ (Corollary \ref{cor.accuracy.obj}), which is less observable in these empirical results.

Overall, for $\eps \geq 4$, $DPSC_{obj}$ yields smaller RMSE than $DPSC_{out}$ for all datasets. However, at the smallest value of $\eps=2$, $DPSC_{out}$ slightly outperforms $DPSC_{obj}$ on the datasets with the smaller size of $n=10$. This suggests that $DPSC_{obj}$ should be the preferred algorithm, especially when $n$ is large, although other parameters may be relevant in this decision as well.

\begin{figure}[h]
\centering
\includegraphics[width=0.95\textwidth]{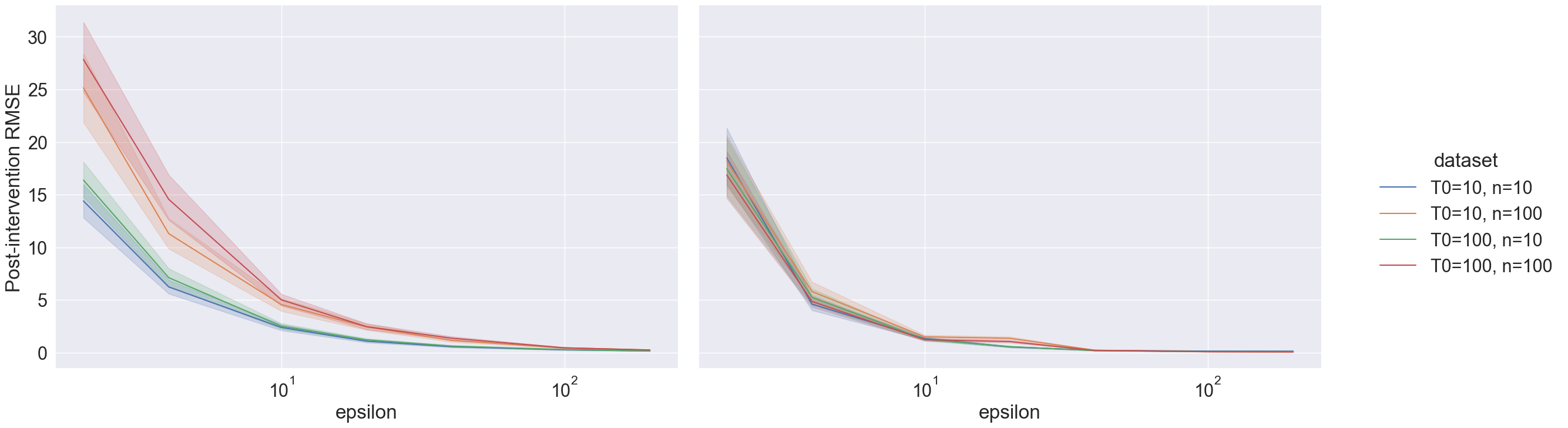}
\caption{Post-intervention RMSE of $DPSC_{out}$ (left) and $DPSC_{obj}$ (right) as a function of $\eps$ on four synthetic datasets of varying size.}
\label{fig.exp.data.2}
\end{figure}

\section{Conclusion}

As synthetic control is gaining popularity in medical applications where individual-level data are used as inputs, there is a growing need for private tools for synthetic control.
Synthetic control performs regression in a vertical way, making each time point one sample, rather than one user's data point. Existing approaches for private regression or private empirical risk minimization cannot be directly applied---the \emph{transposed} setting changes the definition of neighboring databases, altering the core sensitivity analysis needed for privacy.

This paper is the first to propose a differentially private versions of synthetic control algorithm. We provide algorithms based on output perturbation and objective perturbation, and provide formal privacy and accuracy guarantees for each. Our main technical contributions for both algorithms is a novel privacy analysis of the sensitivity of regression in the transposed setting, which also impacted our accuracy analysis and required novelty there as well. To enable practical use of our new private tools, we also provide tighter closed-form accuracy bounded for both algorithms under distributional assumptions, and guidance to practitioners for tuning the parameters of each algorithms.

We perform empirical evaluation of our algorithms to validate their performance guarantees in a variety of parameter regimes, as well as provide guidance to practitioners for hyperparameter tuning. We show that our algorithms perform even better than our theoretical bounds predict, which both suggests that our algorithms would perform well in practical deployments, and leaves an opportunity for further theoretical improvements in future work.

\section*{Acknowledgements}

\noindent S.R. and V.M. were supported in part by Novartis AG. 
R.C. and S.R. were supported in part by NSF grant CNS-2138834 (CAREER), a JPMorgan Chase Faculty Research Award, and an Apple Privacy-Preserving Machine Learning Award. Part of this work was completed while S.R. and R.C. were visiting the Simons Institute for the Theory of Computing.

\bibliographystyle{unsrt}  
\bibliography{references}  

\newpage

\appendix

\section{Notation}\label{app.notation}

\begin{table}[h]
\caption{Synthetic control notation}
\centering
\begin{tabular}{@{}cc@{}}
\toprule
Symbol    & Meaning                 \\ \toprule
$X = [X_{pre}, X_{post}]$  & The donor matrix, divided into pre- and post-intervention \\ \midrule
$\bm{y} = [\bm{y}_{pre}, \bm{y}_{post}]$ & The target vector, divided into pre- and post-intervention \\ \midrule
$n$ & The number of donors, i.e., the number of rows of $X$ \\ \midrule
$T_0$ & The number of pre-intervention time points \\ \midrule
$T$ & The number of full time points in donor matrix $X$, i.e., the number of columns of $X$\\ \midrule
$Z$ & The noise matrix i.i.d. from some distribution with zero-mean, $\sigma^2$-variance, and support $[-s,s]$\\ \midrule
$M$ & The true signal for donor matrix $X$ ($X=M+Z$)\\ \midrule
$\bm{z}$ & The noise vector i.i.d. from some distribution with zero-mean, $\sigma^2$-variance, and support $[-s,s]$\\ \midrule
$\bm{m}$ & The true signal for target vector $\bm{y}$ ($\bm{y}=\bm{m}+\bm{z}$)\\ \bottomrule
\end{tabular}
\end{table}

\begin{table}[h]
\caption{Empirical risk minimization notation}
\centering
\begin{tabular}{@{}ccc@{}}
\toprule
Objective function        & Formula    & Minimizer                  \\ \midrule
$\cL(\f)$ & $\frac{1}{T_0}||\bm{y} -X^\top \f||_2^2$ & - \\ \midrule
$J^{reg}(\f)$ & $\cL(\f)+\frac{\lambda}{2T_0}||\f||_2^2$ & $\f^{reg}$ \\ \midrule
$J^{\#}(\f)$ & $\cL(\f)+\frac{\lambda+\Delta}{2T_0}||\f||_2^2$ & $\f^{\#}$ \\ \midrule
$J^{obj}(\f)$  & $\cL(\f)+\frac{\lambda+\Delta}{2T_0}||\f||_2^2 +\frac{1}{T_0}b^\top \f$ & $\f^{obj}$               \\ \bottomrule
\end{tabular}
\end{table}

\section{Guidance for choosing $c$}\label{app.pickc}

For an analyst to use the objective perturbation method, one needs to decide what value to use for $c$. $c$ is the bound for the maximum absolute eigenvalue of the matrix $E = 2(X'_{pre}X'^{\top}_{pre} - X_{pre}X^{\top}_{pre})$, and a closed-form expression for $E$ is given in Equation \eqref{eq.E}. Note that $E$ is symmetric and all entries are zero except for one row and one column. WLOG, let the first column and row be the non-zero entries and call those entries $E_1, E_2, \cdots, E_n$. That is the matrix looks like:
\begin{equation*}
E=
    \begin{pmatrix}
        E_1 & E_2 & \cdots & E_n\\
        E_2 & 0   & \cdots & 0\\
        \vdots & \vdots & \ddots & \vdots\\
        E_n & 0 & \cdots & 0
\end{pmatrix}
\end{equation*}

To find eigenvalues $\lambda(E)$ of this matrix $E$, we want to solve $E \cdot \mathbf{v} = \lambda(E) \mathbf{v}$ for some $\mathbf{v} \in \mathbb{R}^n$ and $\lambda(E) \in \mathbb{R}$.
We obtain $n$ equations:
\begin{equation}\label{eq.findc}
    E_1 v_1 + E_2 v_2 + \cdots + E_n v_n = \lambda(E) v_1 
\end{equation}
and
\begin{equation*}
    E_i v_1 = \lambda(E) v_i \;, \forall i \in \{2, \cdots, n\}.
\end{equation*}

By plugging in $v_i = \frac{v_1}{\lambda} E_i$ for all $i\neq 1$ to Equation \eqref{eq.findc}, we get
\[
E_1v_1 + E_2^2\frac{v_1}{\lambda(E)} + \cdots + E_n^2\frac{v_1}{\lambda(E)} = \lambda(E) v_1.
\]
By removing $v_1$ and rearranging for $\lambda(E)$, this becomes a quadratic of $\lambda(E)$.
\[
\lambda(E)^2 - E_1\lambda(E) - \sum_{i=2}^n E_i^2 =0.
\]
Then, the two possible values for $\lambda(E)$ are:
\[
\lambda(E) = \frac{E_1 \pm \sqrt{E_1^2 + 4 \sum_{i=2}^n E_i^2}}{2}.
\]
Notice that the expression inside the square root is bigger than $E_1$, and we assume that $E_1 \geq 0$ without loss of generality. Then, the maximum absolute eigenvalue is 
\[
\lambda_{|max|}(E) = \frac{E_1 + \sqrt{E_1^2 + 4 \sum_{i=2}^n E_i^2}}{2}.
\]

Finally, we bound this by using the expression for $E$ in Equation \eqref{eq.E}, where we see $E_1 = 2(||\bm{x'_1}||_2^2 - ||\bm{x_1}||_2^2) \leq 2T_0$ (assuming $E_1\geq 0$) and $E_i = 2(\bm{x'_1} - \bm{x_1})^\top \bm{x_i} \leq 4T_0 $ for all $i \in \{2, \cdots, n\}$. Then,
$
\lambda_{|max|} \leq (1+ \sqrt{16n-15})T_0.
$ 
Hence, choosing $(c = 1+ \sqrt{16n-15})T_0$ is an option when the analyst does not have any knowledge other than $T_0$ and $n$ about the dataset. This value could be further minimized when there is more information available about the support of the dataset.

\section{Omitted Proofs for $DPSC_{out}$}\label{app.proofs}

\subsection{Proof of Lemma \ref{lem.goff}}

\goff*

\begin{proof}
We first re-arrange $g(\f)$ in a way that makes it easier to compute the gradient. Let $i$ be the index of the record that differs between $\mathcal{D}$ and $\mathcal{D'}$.
\begin{align}
    g(\f) &=
    \cL(\f,\mathcal{D'})-\cL(\f,\mathcal{D}) \notag\\ 
    &= \frac{1}{T_0}\sum_{t=1}^{T_0} \left[ \left(\sum_{k=1}^n {x'}_{k,t} f_k \right)-y_t\right]^2
    -\frac{1}{T_0}\sum_{t=1}^{T_0} \left[ \left(\sum_{k=1}^n x_{k,t} f_k \right)-y_t\right]^2\notag\\
    &= \frac{1}{T_0}\sum_{t=1}^{T_0} \left[ \left(\sum_{j\neq i} {x'}_{j,t} f_j \right)-y_t +{x'}_{i,t}f_i \right]^2
    -\frac{1}{T_0}\sum_{t=1}^{T_0} \left\{ \left(\sum_{j\neq i} x_{j,t} f_j \right)-y_t +x_{i,t}f_i \right\}^2 \notag\\
   &=\frac{1}{T_0}\sum_{t=1}^{T_0} \left[ 2\left(\sum_{j\neq i} x_{j,t} f_j -y_t\right) ({x'}_{i,t}-x_{i,t})f_i + ({x'}_{i,t}^2 - x_{i,t}^2)f_i^2 \right]\notag\\
    &=\frac{1}{T_0}\sum_{t=1}^{T_0} \left[
    2\left(\sum_{j\neq i} x_{j,t} f_j -y_t\right) + ({x'}_{i,t}+x_{i,t}) f_i
    \right]\left({x'}_{i,t}-x_{i,t} \right) f_i\notag\\
    &=\frac{1}{T_0}\sum_{t=1}^{T_0} \left[
    \left( \bm{x_t}^{\top} \f -y_t \right) +
    \left( \bm{x_t}^{\top} \f  -x_{i,t}f_i + {x'}_{i,t}f_i -y_t \right)
    \right]({x'}_{i,t}-x_{i,t})f_i
\label{eq-grad-1}
\end{align}

The second equality comes from the definition of the Ridge regression loss function $\cL$; in the third step we pull out the record $i$ that differs between $\D$ and $\D'$; the fourth step combines the sums and cancels terms, including the observation that $\sum_{j\neq i} x_{j,t} f_j=\sum_{j\neq i} x'_{j,t} f_j$. The final two steps also involve rearranging terms.

For notational ease, we define two additional terms,
\[
D_t := \bm{x_t}^{\top} \f -y_t \quad \text{ and } \quad
E_t := (x'_{i,t}-x_{i,t})f_i.
\]
Then, Equation \eqref{eq-grad-1} becomes
\[
g(\f)= \frac{1}{T_0}\sum_{t=1}^{T_0} (2D_t+E_t)E_t.
\]

We will take the partial derivatives of $D_t$ and $E_t$ with respect to both $f_i$ (the index of the data entry that differs between $\D$ and $\D'$) and $f_j$ for $j\neq i$, and then combine these to arrive at the gradient of $g(\f)$:
\[
\frac{\partial D_t}{\partial f_i} = x_{i,t}; \quad \frac{\partial D_t}{\partial f_j} = x_{j,t}; \quad \frac{\partial E_t}{\partial f_i} = x'_{i,t}-x_{i,t}; \quad
\frac{\partial E_t}{\partial f_j} = 0.
\]

Now, we compute the derivative of $g(\f)$ with respect to $f_i$.
\begin{align}
    \frac{\partial g(\f)}{\partial f_i} \notag
    &= \frac{1}{T_0}\sum_{t=1}^{T_0} \left\{ \left(2\frac{\partial D_t}{\partial f_i} + \frac{\partial E_t}{\partial f_i} \right)E_t
    + (2D_t+E_t) \frac{\partial E_t}{\partial f_i} \right\}\notag \\
    &= \frac{1}{T_0}\sum_{t=1}^{T_0} \left\{
    (x'_{i,t}+x_{i,t})E_t +(2D_t+E_t)(x'_{i,t}-x_{i,t})
    \right\} \notag\\
    &= \frac{1}{T_0}\sum_{t=1}^{T_0} \left\{ 2x'_{i,t}  (x'_{i,t}-x_{i,t})f_i + 2(\bm{x_t}^{\top} \f -y_t)(x'_{i,t}-x_{i,t}) \right\} \notag\\
    &= \frac{1}{T_0}\sum_{t=1}^{T_0} 2\left(x'_{i,t}f_i + \bm{x_t}^{\top} \f -y_t  \right) (x'_{i,t}-x_{i,t})
\label{eq-dgdfi}
\end{align}

Next, we compute the derivative of $g(\f)$ with respect to $f_j$ where $j$ is the index of unchanged donors ($j\neq i$). There are fewer term in this derivative because $\frac{\partial E_t}{\partial f_j}=0$.
\begin{align}
    \frac{\partial g(\f)}{\partial f_j}
    &= \frac{1}{T_0}\sum_{t=1}^{T_0} \left\{ \left(2x_{j,t}\right)E_t \right\} \notag\\
    &=\frac{1}{T_0}\sum_{t=1}^{T_0} 2x_{j,t} (x'_{i,t}-x_{i,t}) f_i 
\label{eq-dgdfj}
\end{align}

Finally, we can use (\ref{eq-dgdfi}) and (\ref{eq-dgdfj}) to derive an upper bound for $|| \nabla g(\f) ||_2$.

\begin{align}
\label{eq-gradient-g}
    || \nabla g(\f) ||_2^2 &= \left(\frac{\partial g(\f)}{\partial f_i} \right)^2 + \sum_{j\neq i} \left(\frac{\partial g(\f)}{\partial f_j}\right)^2 \notag \\
    &= \left( \frac{1}{T_0}\sum_{t=1}^{T_0} 2\left({x'}_{i,t}f_i + \bm{x_t}^{\top} \f -y_t  \right) ({x'}_{i,t}-x_{i,t}) \right)^2
    + \sum_{j \neq i} \left( \frac{1}{T_0}\sum_{t=1}^{T_0} 2x_{j,t} ({x'}_{i,t}-x_{i,t}) f_i \right)^2 \notag\\
    &\leq \frac{1}{T_0} \sum_{t=1}^{T_0} \left[ 2({x'}_{i,t}f_i + \bm{x_t}^{\top} \f -y_t  ) ({x'}_{i,t}-x_{i,t}) \right]^2
    + \sum_{j \neq i} \left[ \frac{1}{T_0}\sum_{t=1}^{T_0} \left[ 2x_{j,t} ({x'}_{i,t}-x_{i,t}) f_i \right]^2 \right]\notag\\
    & = \frac{1}{T_0} \sum_{t=1}^{T_0} \left[ 4({x'}_{i,t}f_i + \bm{x_t}^{\top} \f -y_t  )^2 ({x'}_{i,t}-x_{i,t})^2 + \sum_{j \neq i} 4 x_{j,t}^2 ({x'}_{i,t}-x_{i,t})^2 f_i^2 \right]\notag\\
    &= \frac{4}{T_0} \sum_{t=1}^{T_0}  ({x'}_{i,t}-x_{i,t})^2  \left[ ({x'}_{i,t}f_i + \bm{x_t}^{\top} \f -y_t  )^2 + \sum_{j \neq i} x_{j,t}^2f_i^2 \right]\notag\\
    &= \frac{4}{T_0} \sum_{t=1}^{T_0}  ({x'}_{i,t}-x_{i,t})^2  \left[ (\bm{x_t}^{\top} \f -y_t  )^2 + 2{x'}_{i,t}f_i(\bm{x_t}^{\top} \f -y_t) + \sum_{k=1}^n {{x'}}_{k,t}^2 f_i^2 \right]
\end{align}

The second equality comes from plugging in the partial derivatives computed in \eqref{eq-dgdfi} and \eqref{eq-dgdfj}, the following inequality comes from applying Jensen's inequality, and the final three steps come from rearranging, expanding, and simplifying terms.

We can proceed by bounding the individual terms in \eqref{eq-gradient-g} using the our modeling assumptions of Equation \eqref{eq.assump}, which give us that:
\[({x'}_{i,t}-x_{i,t})^2 \leq 4, \quad \text{ and } \quad (\bm{x_t}^{\top} \f -y_t  )^2 \leq 4,
\quad \text{ and } \quad
2{x'}_{i,t}f_i(\bm{x_t}^{\top} \f -y_t) \leq 4,
\quad \text{ and } \quad
\sum_{k=1}^n {x'}_{k,t}^2 f_i^2 \leq n.
\]

Then $|| \nabla g(\f) ||_2^2 \leq 128 + 16n$ and $|| \nabla g(\f) ||_2 \leq 4\sqrt{8 + n}$.

\end{proof}

\subsection{Proof of Lemma \ref{lem.threebounds}}\label{app.proofthreebounds}

\threebounds*

\begin{proof}
We prove these three bounds separately. Most steps follow from the sub-multiplicative norm property of Equation \eqref{eq.normbounds} and the bounds on the noise terms of Equation \eqref{eq.noisebounds}.

First,
\begin{align*}
    \mathbb{E}[|| M_{post}^{\top}(\f^{reg} -\f)||_2]
    &\leq \mathbb{E}[|| M_{post}||_{2,2} \cdot ||\f^{reg} -\f||_2]\\
    &\leq ||M_{post}||_{2,2} \mathbb{E}[\f^{reg} -\f||_2].
\end{align*}

Next,
\begin{align*}
    \mathbb{E}[||(Z_{post}^{\top} + W^{\top}) \f^{reg} ||_2]
    &\leq \mathbb{E}[||Z_{post} + W||_2 \cdot ||\f^{reg} ||_2]\\
    &\leq \mathbb{E}[||Z_{post} + W||_2 \cdot \sqrt{n}||\f^{reg} ||_{\infty}]\\
    &\leq \mathbb{E}[||Z_{post} + W||_2 \cdot \sqrt{n}\psi]\\
    &\leq \sqrt{n}\psi \cdot \mathbb{E}[||Z_{post}||_2 + ||W||_2]\\
    &\leq \sqrt{n}\psi \cdot \mathbb{E}[||Z_{post}||_F + ||W||_F]\\
    &\leq \sqrt{n} \psi \left(\sqrt{n(T-T_0) \sigma^2} + b \right)\\
    &\leq \sqrt{n} \psi \left(\sqrt{n(T-T_0) \sigma^2} + \frac{2\sqrt{T-T_0}}{\eps_2} \right),
\end{align*}
where the second step comes from the relationship between the $\ell_2$ norm and the $\ell_{\infty}$ norm, and the third step comes from our definition that $||\f^{reg} ||_{\infty} \leq \psi$ for some $\psi>0$.

Finally, 
\begin{align*}
    \mathbb{E}[||(M_{post}^{\top}+Z_{post}^{\top} + W^{\top})\bm{v}||_2 ]
    &\leq \mathbb{E}[||M_{post}+Z_{post} + W||_{2,2} ||\bm{v}||_2 ]\\
    &= \mathbb{E}[||M_{post}+Z_{post} + W||_{2,2}] \cdot \mathbb{E}[||\bm{v}||_2 ]\\
    &\leq \mathbb{E}[||M_{post}||_{2,2}+||Z_{post}||_{2,2} + ||W||_{2,2}] \cdot  \frac{4\sqrt{8+n}}{\lambda \eps_1}\\
    &\leq \left( ||M_{post}||_{2,2} +\mathbb{E}[||Z_{post}||_F + ||W||_F] \right) \cdot  \frac{4\sqrt{8+n}}{\lambda \eps_1}\\
    &\leq \left(||M_{post}||_{2,2} + \sqrt{n(T-T_0) \sigma^2}+\frac{2\sqrt{T-T_0}}{\eps_2}\right) \frac{4\sqrt{8+n}}{\lambda \eps_1},
\end{align*}
where the second step holds because $Z_{post}$, $W$ and $\bm{v}$ are all independent of each other.
\end{proof}

\subsection{Proof of Lemma \ref{lem.estbounds}}\label{app.lemestbounds}

\estbounds*

\begin{proof}
First we can expand $\mathbb{E}[||\f^{reg}-\f||_2]$:
\begin{align}\label{eq.f.l2}
    \mathbb{E}[||\f^{reg}-\f||_2] &=  \mathbb{E}[||\f^{reg} -\mathbb{E}[\f^{reg}] +\mathbb{E}[\f^{reg}] - \f||_2] \notag \\
    &\leq \mathbb{E}[||\f^{reg} -\mathbb{E}[\f^{reg}] ||_2] + \mathbb{E}[||\mathbb{E}[\f^{reg}] - \f||_2] \notag\\
    &= \mathbb{E}[||\f^{reg} -\mathbb{E}[\f^{reg}] ||_2] + \mathbb{E}[||Bias(\f^{reg})||_2],
\end{align}
where
\[
Bias(\f^{reg})= \mathbb{E}[\f^{reg}] - \f = -\lambda (X_{pre} X_{pre}^{\top} +\lambda I)^{-1} \f.
\]
Hence, we only need to bound the two terms: $||Bias(\f^{reg})||_2$ and
$\mathbb{E}[|| \f^{reg} - \mathbb{E}[\f^{reg}] ||_2]$, which we do next. First,

\begin{align*}
    ||Bias(\f^{reg})||_2 &= || -\lambda (X_{pre} X_{pre}^{\top} +\lambda I)^{-1} \f||_2\\
    & \leq \lambda ||\f||_2 ||(X_{pre} X_{pre}^{\top} +\lambda I)^{-1}||_{2,2}\\
    & \leq \lambda ||(X_{pre} X_{pre}^{\top} +\lambda I)^{-1}||_{2,2},
\end{align*}
where the last inequality uses the fact that the $\ell_1$ norm of $\f$ is $1$, which also upper bound the $\ell_2$ norm. Next,
\begin{align*}
    \mathbb{E}[|| \f^{reg} - \mathbb{E}[\f^{reg}] ||_2] &= \mathbb{E}[|| \f^{reg} - (\f + Bias(\f^{reg})) ||_2] \\
    &= \mathbb{E}[||(X_{pre} X_{pre}^{\top} +\lambda I)^{-1} X_{pre}\bm{y_{pre}} - \f + \lambda (X_{pre} X_{pre}^{\top} +\lambda I)^{-1} \f ||_2]\\
    &= \mathbb{E}[||(X_{pre} X_{pre}^{\top} +\lambda I)^{-1} X_{pre}(M_{pre}^{\top} \f + \bm{z_{pre}}) - \f + \lambda (X_{pre} X_{pre}^{\top} +\lambda I)^{-1} \f ||_2]\\
    &= \mathbb{E}[||(X_{pre} X_{pre}^{\top} +\lambda I)^{-1} X_{pre}(X_{pre}^{\top} \f - Z_{pre}^{\top}\f + \bm{z_{pre}} ) - \f + \lambda (X_{pre} X_{pre}^{\top} +\lambda I)^{-1} \f ||_2]\\
    &= \mathbb{E}[|| (X_{pre} X_{pre}^{\top} +\lambda I)^{-1} (X_{pre} X_{pre}^{\top} +\lambda I) \f - \f + (X_{pre} X_{pre}^{\top} +\lambda I)^{-1} (X_{pre}\bm{z_{pre}} -X_{pre}Z_{pre}^{\top}\f) ||_2]\\
    &=\mathbb{E}[|| (X_{pre} X_{pre}^{\top} +\lambda I)^{-1} (X_{pre}\bm{z_{pre}} -X_{pre}Z_{pre}^{\top}\f) ||_2]\\
    &\leq \mathbb{E}[|| (X_{pre} X_{pre}^{\top} +\lambda I)^{-1}||_{2,2} \cdot ||X_{pre}\bm{z_{pre}} -X_{pre}Z_{pre}^{\top}\f ||_2],
\end{align*}
where the first four steps come respectively from plugging in expressions of $\mathbb{E}[\f^{reg}]$, ($\f^{reg}$ and $Bias(\f^{reg})$), $\bm{y_{pre}}$, and $M_{pre}$. The fifth and sixth steps come from rearranging and canceling terms, and the final inequality comes from the submultiplicative norm property of Equation \eqref{eq.normbounds}.

Plugging everything back to \eqref{eq.f.l2} yields,
\begin{align}\label{eq.estbound.interim.1}
    \mathbb{E}[||\f^{reg}-\f||_2] &\leq
    \mathbb{E}[||\f^{reg} -\mathbb{E}[\f^{reg}] ||_2] + \mathbb{E}[||Bias(\f^{reg})||_2] \notag\\
    &\leq \mathbb{E}[|| (X_{pre} X_{pre}^{\top} +\lambda I)^{-1}||_{2,2} \cdot ||X_{pre}\bm{z_{pre}} -X_{pre}Z_{pre}^{\top}\f ||_2+\lambda ||(X_{pre} X_{pre}^{\top} +\lambda I)^{-1}||_{2,2}] \notag\\
    &= \mathbb{E}[|| (X_{pre} X_{pre}^{\top} +\lambda I)^{-1}||_{2,2} \cdot ( ||X_{pre}\bm{z_{pre}} -X_{pre}Z_{pre}^{\top}\f ||_2+\lambda )]
\end{align}

Next, we use our assumptions on the data distribution to prove the following lemma about $|| (X_{pre} X_{pre}^{\top} +\lambda I)^{-1}||_{2,2}$.

\eigenlemma*

\begin{proof}[Proof of Lemma \ref{lem.eigen}]
A key component of the proof of Lemma \ref{lem.eigen} is the following lemma about concentration of random matrices. 

\corollarylemma*

To instantiate Lemma \ref{lem.cor552}, we view the data $X_{pre}$ as $T_0$ samples corresponding to the columns $ \bm{x_t} \in \mathbb{R}^n, \; \forall t\in \{1, 2, \cdots, T_0\}$.
We use our assumptions that $X$ takes values in a $k$-dimensional subspace $E$, and $\Sigma = P_E$ where $P_E$ is the orthogonal projection from $\mathbb{R}^n$ onto $E$. 
Then, the \emph{effective rank} of $\Sigma$ is $r(\Sigma) = \frac{trace(\Sigma)}{||\Sigma||_2} = k$ by definition, because $||\Sigma||_{2,2} =||P_E||_{2,2}= \bm{\sigma}_{max}(P_E) = \bm{\lambda}_{max}(P_E) =1$, since eigenvalues of an orthogonal projection matrix are either $0$ or $1$ as shown in Lemma 19 of \cite{rsc}.
Then, $\mathbb{E}[||X||_{2,2}^2] = trace(\Sigma) = k ||\Sigma||_{2,2} = k ||P_E||_{2,2} = k$.
Using Markov's inequality, most of the distribution should be within a ball of radius $\sqrt{m}$ where $m = O(k)$.
Finally, let us assume that all the probability mass is within that ball, i.e., $||X||_{2,2}=O(\sqrt{k})$ almost surely.  Then, Lemma \ref{lem.cor552} holds with $T_0 \geq C(t/\eps)^2 k \log n$ samples. This is also noted in Remark 5.53 of \cite{vershynin2010introduction}.

To translate this to our setting, we see that with probability at least $1-n^{-t^2}$, if $T_0 \geq  C(t/\xi)^2  k \log n$, then
\begin{equation}\label{cor.552.applied}
||\frac{1}{T_0}X_{pre}X_{pre}^{\top} - \Sigma||_{2,2} \leq \xi.
\end{equation}
Since $\Sigma = P_E$ is an orthogonal projection matrix, $||P_E||_{2,2}=1$. We apply triangle inequality to obtain,
\[
||\frac{1}{T_0}X_{pre}X_{pre}^{\top} - P_E||_{2,2} 
\geq \left| ||\frac{1}{T_0}X_{pre}X_{pre}^{\top}||_{2,2}  - ||P_E||_{2,2} \right| = \left| ||\frac{1}{T_0}X_{pre}X_{pre}^{\top}||_{2,2}  - 1 \right| \geq ||\frac{1}{T_0}X_{pre}X_{pre}^{\top} - I ||_{2,2} .
\]
Combining this with Equation \eqref{cor.552.applied}, we can bound
\begin{equation}\label{eq.upperbound}
||\frac{1}{T_0}X_{pre}X_{pre}^{\top} - I||_{2,2} \leq \xi, \quad \text{or equivalently,} \quad || X_{pre}X_{pre}^{\top} - T_0 I ||_{2,2} \leq \xi T_0.
\end{equation}
We will use this latter expression to obtain a lower bound on the minimum singular value of $X_{pre}X_{pre}^{\top}$, and then use it to bound $||(X_{pre}X_{pre}^{\top} +\lambda I)^{-1} ||_{2,2}$ from above.

Note that since $||A||_{2,2}$ is the maximum singular value of matrix $A$, the upper bound of $\xi T_0$ of Equation \eqref{eq.upperbound} should hold for all singular values of $A$. For symmetric matrices such as $X_{pre}X_{pre}^{\top} +T_0 I$, the singular values are also the absolute values of its eigenvalues.
This means that all eigenvalues $\bm{\lambda_{\star}}$ of $X_{pre}X_{pre}^{\top} -  T_0I$ must satisfy $|\bm{\lambda_{\star}}(X_{pre}X_{pre}^{\top} -  T_0 I )| \leq \xi T_0$. 
Therefore, this bound must also hold for the smallest eigenvalue $\bm{\lambda_{min}}(\cdot)$:
\begin{align*}
    |\bm{\lambda_{min}}(X_{pre}X_{pre}^{\top} - T_0 I)| & \leq \xi T_0\\
  \Longleftrightarrow \qquad \quad   |\bm{\lambda_{min}}(X_{pre}X_{pre}^{\top}) -  T_0|  &\leq \xi T_0 \\
   \Longleftrightarrow \; \;    (1- \xi) T_0  \leq \bm{\lambda_{min}}(X_{pre}X_{pre}^{\top}) &\leq  (1+ \xi) T_0
\end{align*}
By plugging in the lower bound on the minimum singular value of $X_{pre}X_{pre}^{\top}$, we arrive at the desired bound to complete the proof of Lemma \ref{lem.eigen}.
\begin{align*}
    || (X_{pre}X_{pre}^{\top} + \lambda I)^{-1} ||_2
    &= \bm{\sigma_{max}}((X_{pre}X_{pre}^{\top} +\lambda I)^{-1}) \\
    &= \frac{1}{\bm{\sigma_{min}}(X_{pre}X_{pre}^{\top} +\lambda I)}\\
    &= \frac{1}{|\bm{\lambda_{min}}(X^{\top}X) +\lambda|}\\
    &\leq \frac{1}{ (1-\xi)T_0 +\lambda}.
\end{align*}

\end{proof}

Returning to Equation \eqref{eq.estbound.interim.1}, we can use this bound to obtain,
\begin{align}\label{eq.estbound.interim.2}
    \mathbb{E}[||\f^{reg}-\f||_2] 
    &\leq \mathbb{E}[|| (X_{pre} X_{pre}^{\top} +\lambda I)^{-1}||_{2,2} \cdot ( ||X_{pre}\bm{z} -X_{pre}Z^{\top}\f ||_2+\lambda )] \notag\\
    &\leq \frac{1}{ (1-\xi)T_0 +\lambda} \mathbb{E}[ ||X_{pre}\bm{z} -X_{pre}Z^{\top}\f ||_2+\lambda ]
\end{align}

The expectation term in Equation \eqref{eq.estbound.interim.2} becomes,
\begin{align*}
    \mathbb{E}[ ||X_{pre}\bm{z_{pre}} -X_{pre}Z_{pre}^{\top}\f ||_2 + \lambda]
    &= \mathbb{E}[ || (M_{pre}+Z_{pre})\bm{z_{pre}} - (M+Z_{pre})Z_{pre}^{\top}\f ||_2 +\lambda ]\\
    &\leq \mathbb{E}[ || M_{pre}(\bm{z_{pre}} - Z_{pre}^{\top}\f)||_2 + || Z_{pre}(\bm{z_{pre}}-Z_{pre}^{\top}\f) ||_2 +\lambda ]\\
    &\leq ||M_{pre}||_{2,2} \mathbb{E}[ ||\bm{z_{pre}} - Z_{pre}^{\top}\f||_2] + \mathbb{E}[||Z_{pre}||_{2,2} \cdot ||\bm{z_{pre}} - Z_{pre}^{\top}\f||_2] +\lambda \\
    &= ||M_{pre}||_F \mathbb{E}[ ||\bm{z_{pre}} - Z_{pre}^{\top}\f||_2] + \mathbb{E}[||Z_{pre}||_F \cdot ||\bm{z_{pre}} - Z_{pre}^{\top}\f||_2] +\lambda \\
    &\leq \sqrt{n T_0} \mathbb{E}[ ||\bm{z_{pre}} - Z_{pre}^{\top}\f||_2] + \sqrt{n T_0 s^2}\mathbb{E}[||\bm{z_{pre}} - Z_{pre}^{\top}\f||_2] +\lambda \\
    &= (\sqrt{n T_0}  + \sqrt{n T_0 s^2})\mathbb{E}[||\bm{z_{pre}} - Z_{pre}^{\top}\f||_2] +\lambda,
\end{align*}
where the first step is plugging in for $X_{pre}$, the second step is triangle inequality, the third and fourth steps are due to the submultiplicative norm property, the fifth step comes from the definition of the Frobenius norm, the fact that $M_{pre}$ and $Z_{pre}$ are both of dimension $n \times T_0$, and bounds on data entries. The final step collects terms.

Finally, we need only to obtain a bound on $\mathbb{E}[||\bm{z_{pre}} - Z_{pre}^{\top}\f||_2]$.
\begin{align*}
    \mathbb{E}[||\bm{z_{pre}} - Z_{pre}^{\top}\f||_2]
    &\leq \mathbb{E} \left[ \sqrt{\sum_{t=1}^{T_0} (z_t-Z_t^{\top}\f)^2 } \right]\\
    &\stackrel{(a)}{\leq} \sqrt{\sum_{t=1}^{T_0} \mathbb{E}[(z_t-Z_t^{\top}\f)^2 ]}\\
    &= \sqrt{\sum_{t=1}^{T_0} \mathbb{E}[z_t^2 - 2z_t Z_t^{\top}\f + (Z_t^{\top}\f)^2 ]}\\
    &\stackrel{(b)}{=} \sqrt{\sum_{t=1}^{T_0} (\sigma^2 +\mathbb{E}[ (Z_t^{\top}\f)^2 ] )}\\
    &= \sqrt{T_0\sigma^2 +\sum_{t=1}^{T_0} \mathbb{E}[ \sum_{i=1}^n(z_i f_i)^2 ] )}\\
    &\stackrel{(c)}{=} \sqrt{T_0\sigma^2 +\sum_{t=1}^{T_0} \sum_{i=1}^n \mathbb{E}[ z_i^2 f_i^2 ] )}\\
    &= \sqrt{T_0\sigma^2 +\sum_{t=1}^{T_0} \sum_{i=1}^n \sigma^2 f_i^2 }\\
    &= \sqrt{T_0\sigma^2 +\sum_{t=1}^{T_0} \sigma^2 ||\f||_2^2 }\\
    &\stackrel{(d)}{\leq} \sqrt{T_0\sigma^2 +\sum_{t=1}^{T_0} \sigma^2}\\
    &= \sqrt{2T_0 \sigma^2}
\end{align*}

Inequality $(a)$ is due to Jensen's inequality.
The step in $(b)$ is because $\mathbb{E}[z_t Z_t^{\top}\f]=\mathbb{E}[z_t]\mathbb{E}[Z_t^{\top}\f]=0$ by independence of noise terms.
The step in $(c)$ is by the same logic as in $(b)$, since all cross-terms $f_i f_j$ for  $i\neq j$ are zero in expectation.
Lastly, we bound the $\ell_2$ norm of $\f$ by $\ell_1$ norm instead in $(d)$ (i.e., $||\f||_2 \leq ||\f||_1 \leq 1$).

Hence, 
\begin{align*}
    \mathbb{E}[ ||X_{pre}\bm{z_{pre}} -X_{pre}Z_{pre}^{\top}\f ||_2 + \lambda]
    &\leq (\sqrt{n T_0}  + \sqrt{n T_0 s^2})\sqrt{2T_0 \sigma^2}+\lambda \\
    &= T_0\sqrt{2n\sigma^2} + T_0\sqrt{2n\sigma^2s^2} + \lambda
\end{align*}

Finally, combining this with Equation \eqref{eq.estbound.interim.2} gives the desired bound to complete the proof of Lemma \ref{lem.estbounds}.
\[    \mathbb{E}[||\f^{reg}-\f||_2]
    \leq \frac{ (\sqrt{2n\sigma^2} + \sqrt{2n\sigma^2s^2})T_0 + \lambda}{ (1-\xi)T_0 +\lambda}.
\]

\end{proof}

\section{Omitted Proofs for $DPSC_{obj}$}\label{app.proofsobj}

\subsection{Proof of Lemma \ref{lem.step2}}\label{app.lemstep2}

\steptwo*

\begin{proof}
Recall that $\bm{b}(\bm{\alpha};\D)$ is the noise value that must have been realized when database $\D$ was input and $\bm{\alpha} =\arg\min_{\f} J^{obj}(\f)$ was output. Since $J^{obj}(\f)$ is strongly convex for any $\Delta$  and is differentiable, the closed-form expression for $\bm{b}(\bm{\alpha};\D)$ is derived by computing the gradient of $J^{obj}(\f)$, which should be zero when evaluated at its minimizer $\f = \bm{\alpha}$:
\[\nabla J^{obj}(\f) \big|_{\f = \bm{\alpha}} = \nabla\cL(\bm{\alpha}) +
\nabla r(\bm{\alpha})+
\frac{\Delta}{T_0}\bm{\alpha} + \frac{\bm{b}(\bm{\alpha};\D)}{T_0} \stackrel{!}{=} 0.\]
Rearranging the equation yields
\[
\bm{b}(\bm{\alpha};\D) = -\left( T_0 \nabla\cL(\bm{\alpha};\D) + T_0 \nabla r(\bm{\alpha}) + \Delta \bm{\alpha} \right).
\]

For ease of notation, let $A = -\nabla\bm{b}(\alpha;\D)$ and $E = \nabla\bm{b}(\alpha;\D)-\nabla\bm{b}(\alpha;\D')$. Then, 
\[
\Phi(\bm{\alpha};\Delta) = \frac{|det(\nabla\bm{b}(\bm{\alpha};\D'))|}{|det(\nabla\bm{b}(\bm{\alpha};\D))|} = \frac{|det(-\nabla\bm{b}(\bm{\alpha};\D'))|}{|det(-\nabla\bm{b}(\bm{\alpha};\D))|} = \frac{|det(A+E)|}{|det(A)|}.
\]
By definition, $A = -\nabla\bm{b}(\alpha;\D) = T_0 ( \nabla^2 \cL(\alpha;\D) + \nabla^2 r(\alpha)) + \Delta I_n$. Using the Hessians $\nabla^2 \cL(\alpha;\D) = \frac{2}{T_0} X_{pre}X_{pre}^{\top}$ and $\nabla^2 r(\alpha) = \frac{\lambda}{T_0} I_n $, $A$ can be expressed as
\begin{equation*}
        A = 2X_{pre}X_{pre}^T + (\lambda + \Delta) I_n.
\end{equation*}
To express $E$ succinctly, let neighboring databases $\D = (X, y)$ and $\D' = (X',y)$ differ in the $j$-th row. Then, 
\begin{equation}\label{eq.E}
        E = 2(X'_{pre}X'^{\top}_{pre} - X_{pre}X^{\top}_{pre})
        =  \begin{cases}
              2(||\bm{x'_j}||_2^2 - ||\bm{x_j}||_2^2)  & (j,j) \\
              2(\bm{x'_j} - \bm{x_j})^\top \bm{x_i}  & (j,i) \text{ or } (i,j) \;, \; \forall i \in [n], \; i\neq j \\
              0 & \text{otherwise}
            \end{cases}
\end{equation}
where $\bm{x_i}$ (resp. $\bm{x'_i}$) denotes the $i$-th person's data, which is the $i$-th row of $X_{pre}$ (resp. $X'_{pre}$).

Note that all eigenvalues of $A$ are at least $\lambda + \Delta > 0$ (i.e., $\lambda_{min}(A)\geq \lambda + \Delta$) because $X_{pre}X^{\top}_{pre}$ is positive-semi-definite, and thus $A$ is full rank. Also, $rank(E)=2$. This allows us to apply the following lemma.

\begin{lemma}[Lemma 2 of \cite{CMS11}]\label{lem.determinant}
If $A$ is full rank and $E$ has rank at most $2$,
\[
\frac{det(A+E) - det(A)}{det(A)} = \lambda_1(A^{-1}E) + \lambda_2(A^{-1}E) + \lambda_1(A^{-1}E)\lambda_2(A^{-1}E),
\]
where $\lambda_i(Z)$ is $i$-th eigenvalue of matrix $Z$.
\end{lemma}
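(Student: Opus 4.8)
The plan is to reduce the determinant ratio to a statement about the eigenvalues of the single matrix $M := A^{-1}E$, and then to exploit the rank bound on $E$ to collapse a product of $n$ eigenvalue factors down to just two. First, since $A$ is full rank and therefore invertible, I would factor
\[
\det(A+E) = \det\!\big(A\,(I + A^{-1}E)\big) = \det(A)\,\det(I + M),
\]
so that the left-hand side of the claim is exactly $\det(I+M) - 1$. The whole task is then to evaluate $\det(I+M)$.

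Next I would invoke the general identity $\det(I+M) = \prod_{i=1}^{n}\big(1 + \lambda_i(M)\big)$, where the $\lambda_i(M)$ are the eigenvalues of $M$ counted with algebraic multiplicity. This holds for an arbitrary square matrix, not only symmetric or diagonalizable ones: over $\mathbb{C}$ the matrix $M$ is similar (by Schur triangularization) to an upper-triangular matrix whose diagonal entries are its eigenvalues, $I+M$ inherits the diagonal $1+\lambda_i(M)$, and the determinant of a triangular matrix is the product of its diagonal.

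The rank hypothesis enters in the third step. Since $\mathrm{rank}(M) = \mathrm{rank}(A^{-1}E) \leq \mathrm{rank}(E) \leq 2$, the kernel of $M$ has dimension at least $n-2$, so the algebraic multiplicity of the eigenvalue $0$ is at least $n-2$; equivalently, at most two eigenvalues of $M$ are nonzero. Each zero eigenvalue contributes a factor $1+0=1$, so the product collapses to $\det(I+M) = \big(1+\lambda_1(M)\big)\big(1+\lambda_2(M)\big)$, where $\lambda_1(M),\lambda_2(M)$ denote the two possibly-nonzero eigenvalues (with the convention that a missing one is set to $0$). Expanding this product and subtracting $1$ gives $\lambda_1(M)+\lambda_2(M)+\lambda_1(M)\lambda_2(M)$, which is precisely the stated right-hand side.

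The one step I would be most careful about is the multiplicity bookkeeping in the general, possibly non-diagonalizable case: the product formula for $\det(I+M)$ must use algebraic multiplicities, and the bound that at most two eigenvalues of $M$ are nonzero follows from $\dim\ker M = n - \mathrm{rank}(M) \geq n-2$ together with the fact that geometric multiplicity lower-bounds algebraic multiplicity. No symmetry or positive-definiteness of $A$ or $E$ is needed here; those properties will matter only afterward, when this identity is combined with the bounds $\lambda_{\min}(A)\geq\lambda+\Delta$ and $\|E\|_2\leq c$ to control $\Phi(\bm{\alpha};\Delta)$.
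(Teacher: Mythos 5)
Your proof is correct. Note that the paper itself does not prove this statement at all --- it imports it verbatim as Lemma 2 of \cite{CMS11} --- so there is no internal proof to compare against; your argument (factor out $\det(A)$, use $\det(I+M)=\prod_i(1+\lambda_i(M))$ via Schur triangularization, and use $\dim\ker(A^{-1}E)\geq n-2$ to kill all but two factors) is the standard and complete derivation, and your care about algebraic versus geometric multiplicity in the non-diagonalizable case is exactly the right point to be careful about, even though in the paper's application $A^{-1}E$ is similar to a symmetric matrix and that subtlety disappears.
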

Let $\lambda_{|max|}(Z) = \max_i |\lambda_i(Z)|$, the maximum absolute of eigenvalue of matrix $Z$.
Instantiating Lemma \ref{lem.determinant} yields: 
\begin{equation*}
    \begin{split}
        \Phi(\bm{\alpha};\Delta) &= \frac{|det(A+E)|}{|det(A)|}\\
        &= \left|\frac{det(A+E) - det(A)}{det(A)} + 1\right|\\
        &= | 1+ \lambda_1(A^{-1}E) + \lambda_2(A^{-1}E) + \lambda_1(A^{-1}E)\lambda_2(A^{-1}E)|\\
        &\leq 1 + |\lambda_1(A^{-1}E)| + |\lambda_2(A^{-1}E)| + |\lambda_1(A^{-1}E)\lambda_2(A^{-1}E)|\\
        &\leq 1 + 2\lambda_{|max|}(A^{-1}E) + \lambda_{|max|}(A^{-1}E)^2,
    \end{split}
\end{equation*}
where the first inequality is simply triangle inequality, and the second inequality bounds all absolute eigenvalues by the maximum one $\lambda_{|max|}$.

Assume that $\lambda_{|max|}(E)\leq c$ for some constant $c$. Since $E$ is a real-valued matrix, such a finite $c$ exist. In Algorithm \ref{alg.obj}, $c$ is explicitly taken as an input parameter.
Then,
\[
\lambda_{|max|}(A^{-1}E) \leq \frac{\lambda_{|max|}(E)}{\lambda_{min}(A)} \leq \frac{c}{\lambda+\Delta}.
\]
Finally,
\[
\Phi(\bm{\alpha};\Delta) \leq 1 + 2\lambda_{|max|}(A^{-1}E) + \lambda_{|max|}(A^{-1}E)^2 \leq 1 + \frac{2c}{\lambda+\Delta} + \frac{c^2}{(\lambda+\Delta)^2} \leq \left( 1+ \frac{c}{\lambda+\Delta} \right)^2.
\]
\end{proof}

\subsection{Proof of Lemma \ref{lem.laplacenoise}}\label{app.lemlaplace}

\laplacenoise*

\begin{proof}
We can start by re-writing $\Gamma(\bm{\alpha})$ as follows, where the first line directly comes from the pdf $\Pr(\bm{b};\beta)$, the second line is due to reverse triangle inequality, and the third line is from the definition of $\bm{b}(\bm{\alpha};\mathcal{D})$ and canceling terms that occur in both $\bm{b}(\bm{\alpha};\mathcal{D})$ and $\bm{b}(\bm{\alpha};\mathcal{D}')$:
\begin{align}\label{eq.boundongamma}
\Gamma(\bm{\alpha}) = &\exp{\left( -\frac{1}{\beta} \bigl | \; ||\bm{b}(\bm{\alpha} ; \D)||_2 - ||\bm{b}(\bm{\alpha} ; \D')||_2 \; \bigr |  \right)}\notag \\
\leq &\exp{\left( -\frac{1}{\beta} ||\bm{b}(\bm{\alpha};\D) - \bm{b}(\bm{\alpha};\D')||_2  \right)} \notag \\
= &\exp{\left(\frac{1}{\beta} || T_0 \nabla\cL(\bm{\alpha};\D') - T_0 \nabla\cL(\bm{\alpha};\D)||_2  \right)}.
\end{align}

Next, we can continue to bound Equation \eqref{eq.boundongamma} in two different ways, corresponding to the two possible values of $\beta$. The two values come from two different upper bounds on the sensitivity, and the minimum value will give a tighter bound. 

The first upper bound uses Lemma \ref{lem.goff}, and its notation of $g(\f) = \cL(\f,\mathcal{D'})
- \cL(\f,\mathcal{D})$ for neighboring databases $\D,\D'$. Then we can bound:
\begin{align*}
    \eqref{eq.boundongamma} & \leq \exp{\left(\frac{1}{\beta} ||T_0\nabla g(\bm{\alpha})||_2   \right)}\\
&\leq \exp{\left(\frac{1}{\beta} 4T_0\sqrt{8+n}  \right)}.
\end{align*}
Hence, setting $\beta\geq \frac{4T_0\sqrt{8+n}}{\eps_0}$ makes $\Gamma(\alpha)\leq e^{\eps_0}$.

The second upper bound is based on $c$, and will yield a tighter bound when $c$ is small. Recall that matrix $E$ is defined in Equation \eqref{eq.E}, and that $c$ is the upper bound $\bm{\lambda}_{|max|}(E)\leq c$.
By plugging in $\nabla\cL(\alpha) = \frac{1}{T_0} \left( 2X_{pre}X_{pre}^\top \bm{\alpha} - 2X_{pre}\bm{y}_{pre} \right)$, we can alternatively bound:
\begin{align*}
    \eqref{eq.boundongamma} &= \exp{ \left(\frac{1}{\beta} ||2(X'_{pre}X'^{\top}_{pre} - X_{pre}X^{\top}_{pre})\bm{\alpha}  + 2(X_{pre}-X'_{pre})\bm{y}_{pre}||_2\right)}\\
    &\leq \exp{ \left(\frac{1}{\beta} ||2(X'_{pre}X'^{\top}_{pre} - X_{pre}X^{\top}_{pre})\bm{\alpha}||_2 + \frac{1}{\beta}|| 2(X'_{pre}-X_{pre})\bm{y}_{pre}||_2\right)}\\
    &\leq \exp{\left(\frac{1}{\beta}||E\bm{\alpha}||_2 + \frac{4T_0}{\beta} \right) }\\
    &\leq \exp{\left(\frac{1}{\beta}||E||_{2,2} ||\bm{\alpha}||_2 + \frac{4T_0}{\beta} \right)}\\
    &\leq \exp{\left(\frac{c \sqrt{n} + 4T_0}{\beta} \right)},
\end{align*}
where the second step is due to triangle inequality,
the third step is plugging in the definition of $E$ and bounding the second term based on the worst-case $X'_{pre}-X_{pre}$, which is all zeros with just one row with all $2$'s, and worst-case $y_{pre}$, which is all $1$'s). The fourth step is the submultiplicative property of operator norms, and the final step is due to the fact that $||E||_{2,2}=\bm{\lambda}_{|max|}(E)\leq c$ and that all elements of $\bm{\alpha} \in [-1,1]^n$ are bounded by $1$.
Then setting $\beta \geq \frac{c\sqrt{n}+4T_0}{\eps_0}$ ensures $\Gamma(\alpha) \leq e^{\eps_0}$.

If either of the above conditions on $\beta$ holds, then $\Gamma(\bm{\alpha}) \leq e^{\eps_0}$ as desired. Thus we can choose $\beta = \min\{ \frac{4 T_0 \sqrt{8+n}}{\eps_0}, \frac{c\sqrt{n}+4T_0}{\eps_0}\}$ that at least one will be satisfied. Taking the minimum rather than just one allows for a lower $\beta$ and hence lower noise magnitude, while still satisfying the privacy requirement.
\end{proof}

\subsection{Proof of Lemma \ref{lem.step4gauss}}\label{app.lemstep4gauss}

\stepfourgauss*

The proof of Lemma \ref{lem.step4gauss} follows a similar structure to Lemma 14 of \cite{KST12}. We include the full proof for completeness.
\begin{proof}
Let the noise term $\bm{b}$ be sampled from a multivariate Gaussian distribution $\mathcal{N}(0, \beta^2 I_n)$, and let $\D$ and $\D'$ be two arbitrary neighboring databases.
Let $h(\bm{\alpha}) = \bm{b}(\bm{\alpha};\D') - \bm{b}(\bm{\alpha};\D)$ Then, we can express $\Gamma(\bm{\alpha})$ as,
\begin{align}\label{eq.gammabound}
        \Gamma(\bm{\alpha}) &= \frac{\exp(-\frac{ ||\bm{b}(\bm{\alpha};\D)||_2^2 }{2\beta^2}) }{\exp(-\frac{ ||\bm{b}(\bm{\alpha};\D')||_2^2 }{2\beta^2}) }\notag\\
        &= \exp(\frac{1}{2\beta^2} ( ||\bm{b}(\bm{\alpha};\D')||_2^2  - ||\bm{b}(\bm{\alpha};\D)||_2^2  ) )\notag\\
        &= \exp(\frac{1}{2\beta^2} ( ||\bm{b}(\bm{\alpha};\D) + h(\bm{\alpha})||_2^2  - ||\bm{b}(\bm{\alpha};\D)||_2^2  ) )\notag\\
        &= \exp(\frac{1}{2\beta^2} ( 2\langle\bm{b}(\bm{\alpha};\D), h(\bm{\alpha})\rangle  + ||h(\bm{\alpha})||_2^2  ) ),
\end{align}
where the first step is from the distribution of noise $\bm{b}$, the final step is a binomial expansion applied to norms.

Note that,
\begin{align*}
        h(\bm{\alpha}) &= \bm{b}(\bm{\alpha};\D') - \bm{b}(\bm{\alpha};\D)\\
        &= T_0 (\nabla\cL(\bm{\alpha};\D) - \nabla\cL(\bm{\alpha};\D') )\\
        &= -T_0 \nabla g(\bm{\alpha}),
\end{align*}
where $g(\bm{\alpha}) = \cL(\bm{\alpha},\mathcal{D'})
- \cL(\bm{\alpha},\mathcal{D})$, as defined in Equation \eqref{eq.g}. By Lemma \ref{lem.goff}, we know that $||\nabla g(\bm{\alpha})||_2 \leq 4\sqrt{8+n}$, so also
\begin{equation}\label{eq.hbound}
||h(\bm{\alpha})||_2 \leq 4 T_0 \sqrt{8+n}. \end{equation}

Similarly, because $\bm{b}$ is sampled from a multivariate Gaussian distribution $\mathcal{N}(0, \beta I_n)$ and sum of Gaussian variables is also Gaussian, then,
\[
\langle \bm{b}(\bm{\alpha};\D), h(\bm{\alpha})\rangle \; \sim \; \mathcal{N}(0, \beta^2 || h(\bm{\alpha})||_2^2).
\]
Since the exact distribution is known, we use a Gaussian tail bound to find a \emph{well-behaving} set of $\bm{b}$.
\begin{lemma}[Chernoff bound for Gaussian \cite{wainwright2019high}]\label{lem.chernoffgauss}
Let $Z \sim \mathcal{N}(0, \sigma^2)$. Then, for all $t>\sigma$,
\[
P[Z\geq t] \leq \exp(-\frac{t^2}{2\sigma^2}).
\]
\end{lemma}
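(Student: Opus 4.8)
The plan is to prove this standard Gaussian tail bound via the \emph{Chernoff method}: exponentiate, apply Markov's inequality, and then optimize a free parameter. Concretely, for any $s > 0$, I would observe that the event $\{Z \geq t\}$ coincides with $\{e^{sZ} \geq e^{st}\}$ because $x \mapsto e^{sx}$ is monotone increasing. Applying Markov's inequality to the nonnegative random variable $e^{sZ}$ then yields
\[
P[Z \geq t] = P[e^{sZ} \geq e^{st}] \leq e^{-st}\,\E[e^{sZ}].
\]

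The central computation is the moment generating function of a centered Gaussian. For $Z \sim \mathcal{N}(0,\sigma^2)$, I would evaluate $\E[e^{sZ}] = \frac{1}{\sqrt{2\pi}\,\sigma}\int_{-\infty}^{\infty} e^{sz}\, e^{-z^2/(2\sigma^2)}\,dz$ by completing the square in the exponent, writing $sz - \frac{z^2}{2\sigma^2} = -\frac{1}{2\sigma^2}(z - s\sigma^2)^2 + \frac{\sigma^2 s^2}{2}$. After the substitution $z \mapsto z - s\sigma^2$ the remaining Gaussian integral equals $1$, giving the familiar identity $\E[e^{sZ}] = e^{\sigma^2 s^2/2}$.

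Substituting this identity into the Chernoff inequality produces $P[Z \geq t] \leq \exp(-st + \sigma^2 s^2/2)$ for every $s > 0$. The final step is to optimize over $s$: the exponent is a convex quadratic in $s$, minimized at $s^\star = t/\sigma^2 > 0$, and plugging $s = s^\star$ back in collapses the exponent to $-t^2/\sigma^2 + t^2/(2\sigma^2) = -t^2/(2\sigma^2)$, which is exactly the claimed bound $\exp(-t^2/(2\sigma^2))$. I would remark that the hypothesis $t > \sigma$ ensures $s^\star > 1/\sigma > 0$, so the minimizing parameter is admissible (indeed the bound holds for all $t > 0$).

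I do not anticipate any genuine obstacle here: this is a textbook Gaussian concentration estimate, and the only nontrivial ingredient is the completing-the-square evaluation of the moment generating function. If one prefers not to reprove the MGF from scratch, I could instead cite the well-known Gaussian MGF identity directly and present only the two-line argument consisting of the Chernoff exponentiation and the optimization over $s$.
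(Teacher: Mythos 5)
Your proof is correct, and there is nothing to compare it against within the paper itself: Lemma \ref{lem.chernoffgauss} is imported from \cite{wainwright2019high} without proof, and your Chernoff-method argument (Markov's inequality applied to $e^{sZ}$, the Gaussian moment generating function $\mathbb{E}[e^{sZ}] = e^{\sigma^2 s^2/2}$ via completing the square, and optimization at $s^\star = t/\sigma^2$) is precisely the standard derivation given in that reference. Your side remark is also accurate: the hypothesis $t > \sigma$ is not needed for this bound, since $s^\star > 0$ for every $t > 0$; it is simply inherited from the form in which the paper chose to state the cited result.
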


We instantiate Lemma \ref{lem.chernoffgauss} with $Z = \langle\bm{b}(\alpha;\D), h(\bm{\alpha})\rangle$ and $t = \beta || h(\bm{\alpha})||_2 \sqrt{2\log \frac{2}{\delta}}$. Note that $t>\sigma$ for any $\delta>1/2$. Then,
\[
\Pr\left[\langle\bm{b}(\alpha;\D), h(\bm{\alpha})\rangle \geq \beta || h(\bm{\alpha})||_2 \sqrt{2\log \frac{2}{\delta}} \right] \leq \frac{\delta}{2},
\]
which, by Equation \eqref{eq.hbound} implies that,
\begin{equation}\label{eq.goodbound}
   \Pr\left[\langle\bm{b}(\alpha;\D), h(\bm{\alpha})\rangle \geq \beta (4 T_0 \sqrt{8+n}) \sqrt{2\log \frac{2}{\delta}} \right] \leq \frac{\delta}{2}. 
\end{equation}

Define a set of values of $\bm{b}$, corresponding the the good event described by Equation \eqref{eq.goodbound}: $\textbf{GOOD} = \{ \bm{b} \; | \; \langle\bm{b}(\bm{\alpha};\D), h(\bm{\alpha})\rangle \leq \beta (4 T_0 \sqrt{8+n}) \sqrt{2\log \frac{2}{\delta}}\}$. By definition, $\Pr[\bm{b} \in \textbf{GOOD}] \geq 1 - \delta$. That is, with probability at least $1-\delta$, the noise vector $\bm{b}$ is in the well-behaving set \textbf{GOOD}. 

When $\bm{b}\in\textbf{GOOD}$, then we can complete the bound on $\Gamma(\bm{\alpha})$ from Equation \eqref{eq.gammabound}, combining the bound on $||h(\bm{\alpha})||_2^2$ from Equation \eqref{eq.hbound}:
\[
    \Gamma(\bm{\alpha}) = \exp(\frac{1}{2\beta^2}[ 2\langle\bm{b}(\bm{\alpha};\D), h(\bm{\alpha})\rangle  + ||h(\bm{\alpha})||_2^2  ] ) \leq \exp\left(\frac{1}{2\beta^2} [ 2\beta (4 T_0 \sqrt{8+n}) \sqrt{2\log \frac{2}{\delta}} + (4 T_0 \sqrt{8+n})^2 ] \right).
\]

Finally, the goal is to bound $\Gamma(\bm{\alpha}) \leq e^{\eps_0}$, in the case where $\bm{b} \in \textbf{GOOD}$. Solving the expression above for $\beta$ yields
\begin{align}\label{eq.Gamma.bound}
\beta &\geq \frac{1}{2} \left( \frac{(4 T_0 \sqrt{8+n}) \sqrt{2 \log \frac{2}{\delta}}}{\eps_0} + \sqrt{\frac{(4 T_0 \sqrt{8+n})^2 2 \log \frac{2}{\delta}}{\eps_0^2} + \frac{(4 T_0 \sqrt{8+n})^2}{\eps_0}} \right) \notag \\
&= \frac{1}{2}\left( \frac{4 T_0 \sqrt{8+n}}{\eps_0} \left(\sqrt{2 \log \frac{2}{\delta}} + \sqrt{2 \log \frac{2}{\delta} + \eps_0}\right)
\right)
\end{align}

Note that choosing
\[
\beta \geq \frac{(4 T_0 \sqrt{8+n}) \sqrt{2 \log \frac{2}{\delta} + \eps_0}}{\eps_0}
\]
satisfies the bound of Equation \eqref{eq.Gamma.bound}. 

Thus $\Gamma(\bm{\alpha})\leq e^{\eps_0}$, conditioned on $\bm{b} \in \textbf{GOOD}$, which occurs with probability at least $1-\delta$.
\end{proof}

\subsection{Proof of Lemma \ref{lem.fobjacc}}\label{app.lemfobjacc}

\fobjacc*

\begin{proof}
Recall the objective functions $J^{obj}$ and $J^{reg}$:
\[ J^{obj}(\f) = \cL(\f) + \frac{\lambda + \Delta}{2T_0} \|\f\|_2^2 + \frac{1}{T_0}\bm{b}^{\top}\f \quad \mbox{and } \quad J^{reg}(\f) = \cL(\f) + \frac{\lambda}{2T_0} \|\f\|_2^2, \]
with their respective minimizers $\f^{obj}$ and $\f^{reg}$. Define another objective function $J^{\#}$ and its minimizer $\f^{\#}$,
\[J^{\#}(\f) = \cL(\f) + \frac{\lambda + \Delta}{2T_0} \|\f\|_2^2 \]
which is a noise-free variant of $J^{obj}$.

We will express the difference between $\f^{reg}$ and $\f^{obj}$ using $\f^{\#}$ as an intermediate value:
\begin{equation}\label{eq.fregfobj}
   \|\f^{reg}-\f^{obj}\|_2 = \|\f^{reg} -\f^{\#} + \f^{\#} -\f^{obj}\|_2 
\leq \|\f^{reg} -\f^{\#}\|_2 + \|\f^{\#} -\f^{obj}\|_2. 
\end{equation}
We will bound these two terms separately, starting with $\|\f^{\#} -\f^{obj}\|_2$. It is known that $J^{obj}$ is $(\frac{\lambda + \Delta}{T_0})$-strongly convex, and that the gradient of of $J^{obj}$ evaluated at its minimizer $\f^{obj}$ is zero. Then by the definition of strong convexity, 
\begin{equation}\label{eq.fsharpfobj}
    \|\f^{\#}-\f^{obj}\|^2_2 \leq \left(J^{obj}(\f^{\#}) - J^{obj}(\f^{obj}) \right)\frac{2T_0}{\lambda+\Delta}.
\end{equation} 
We can proceed to bound the difference in the objective function $J^{obj}$ at these two points:
\begin{align*}
    J^{obj}(\f^{\#}) - J^{obj}(\f^{obj}) &= \left(J^{\#}(\f^{\#}) + \frac{1}{T_0}b^\top \f^{\#}\right) - \left(J^{\#}(\f^{obj}) + \frac{1}{T_0}b^\top \f^{obj} \right)\\
    &= \left(J^{\#}(\f^{\#}) - J^{\#}(\f^{obj})\right) + \left(\frac{1}{T_0}b^\top \f^{\#} - \frac{1}{T_0}b^\top \f^{obj}\right)\\
    &\leq 0 + \frac{1}{T_0} \|\bm{b}\|_2 \|\f^{\#}-\f^{obj}\|_2
\end{align*}
where the inequality is due to the fact that $J^{\#}(\f^{\#}) \leq J^{\#}(\f^{obj})$, since $\f^{\#}$ is the minimizer of $J^{\#}$. 

Plugging this into Equation \eqref{eq.fsharpfobj} gives
\[ \|\f^{\#}-\f^{obj}\|^2_2 \leq \frac{1}{T_0} \|\bm{b}\|_2 \|\f^{\#}-\f^{obj}\|_2\frac{2T_0}{\lambda+\Delta},\]
or equivalently,
\[ \|\f^{\#}-\f^{obj}\|_2 \leq \|\bm{b}\|_2 \frac{2}{\lambda+\Delta}.\]

To bound the first term of Equation \eqref{eq.fregfobj}, we observe that if $\Delta=0$, then $J^{\#}=J^{reg}$ and thus $\f^{\#}=\f^{reg}$, so $\|\f^{reg} -\f^{\#}\|_2 = 0$. Thus we only need to bound the distance when $\Delta \neq 0$.

We can write $\f^{reg}$ and $\f^{\#}$ using their closed-form expressions,
\[\f^{reg} = (X_{pre}X_{pre}^\top + \frac{\lambda}{2T_0} I)^{-1}X_{pre}\bm{y_{pre}} \quad \text{and} \quad \f^{\#} = (X_{pre}X_{pre}^\top + \frac{\lambda+\Delta}{2T_0} I)^{-1}X_{pre}\bm{y_{pre}}, \]
and use these to bound the difference:
\begin{align}\label{eq.obj.acc.1}
        \|\f^{reg} - \f^{\#}\|_2 &= \|\left( (X_{pre}X_{pre}^\top + \frac{\lambda}{2T_0} I)^{-1} - (X_{pre}X_{pre}^\top + \frac{\lambda+\Delta}{2T_0} I)^{-1} \right)X_{pre}\bm{y_{pre}}\|_2 \notag\\
        &\leq \left( (\|(X_{pre}X_{pre}^\top + \frac{\lambda}{2T_0} I)^{-1}\|_2 + \| ((X_{pre}X_{pre}^\top + \frac{\lambda+\Delta}{2T_0} I)^{-1})^{-1}\|_2  \right) \|X_{pre}\bm{y_{pre}}\|_2
\end{align}

The spectral norm of a general form $\|(XX^\top + \lambda I)^{-1}\|_2$ can be bounded by the inverse of minimum singular value of the matrix $XX^\top + \lambda I$, which is positive semi-definite and has minimum singular value at least $\lambda$:
\[ \|(XX^\top + \lambda I)^{-1}||_2 \leq \frac{1}{\bm{\sigma_{min}}(XX^\top + \lambda I)} \leq \frac{1}{\lambda}. \]

Using this fact, we can further bound Equation \eqref{eq.obj.acc.1} as,
\begin{align*}
    \|\f^{reg} - \f^{\#}\|_2 & \leq \left( \frac{2T_0}{\lambda} + \frac{2T_0}{\lambda+\Delta} \right) \|X_{pre}\bm{y_{pre}}\|_2 \\
    &\leq 2T_0 \left( \frac{1}{\lambda} + \frac{1}{\lambda+\Delta} \right) ||X_{pre}||_F ||\bm{y_{pre}}||_2\\
    &\leq 2T_0 \left( \frac{1}{\lambda} + \frac{1}{\lambda+\Delta} \right) \sqrt{nT_0} \sqrt{T_0} \\
    &= \left( \frac{1}{\lambda} + \frac{1}{\lambda+\Delta} \right) 2T_0^2 \sqrt{n}.
\end{align*}

Finally, we combine Equation \eqref{eq.fregfobj} with bounds on both terms to yield: 
\begin{align*}
        \E[\|\f^{reg}-\f^{obj}\|_2 ] &\leq \E[\|\f^{\#} -\f^{obj}\|_2] + \mathds{1}_{\Delta \neq 0} \E[\|\f^{reg} - \f^{\#}||_2] \\
        &\leq \frac{2}{\lambda+\Delta}\E[\|\bm{b}\|_2] + \mathds{1}_{\Delta \neq 0} \left( \frac{1}{\lambda} + \frac{1}{\lambda+\Delta} \right) 2T_0^2 \sqrt{n}.
\end{align*}

\end{proof}

\end{document}